\documentclass{article}

\usepackage{authblk,textcomp}
\usepackage{wrapfig}
\usepackage{enumitem}
\usepackage{caption}
\captionsetup[figure]{font=small}
\usepackage{hyperref}       
\usepackage{url}            
\usepackage[cal=euler]{mathalfa}
\usepackage{libertine}
\usepackage{mathtools}
\usepackage{parskip}
\usepackage{graphicx}
\usepackage{subfigure}  
\usepackage{mathtools}
\usepackage{bbm}
\usepackage[capitalize,noabbrev]{cleveref}
\usepackage{commath}
\usepackage{amsmath, amssymb, amsthm, amsfonts}       
\usepackage{xcolor}         
\usepackage{natbib}[numbers, sort&compress]
\usepackage{geometry}
\usepackage{times}

\renewcommand{\vec}[1]{\bm{#1}}

\geometry{
 a4paper,
 left=20mm,
 right=20mm,
 top=20mm,
}

\hypersetup{pdfauthor={IdePHICS},pdftitle={RepetitaIuvant},%
            colorlinks, linktocpage=true, pdfstartpage=1, pdfstartview=FitV,%
    breaklinks=true, pdfpagemode=UseNone, pageanchor=true, pdfpagemode=UseOutlines,%
    plainpages=false, bookmarksnumbered, bookmarksopen=true, bookmarksopenlevel=1,%
    hypertexnames=true, pdfhighlight=/O,%
    urlcolor=orange, linkcolor=blue, citecolor=blue
        }

\newtheorem*{algorithm*}{Algorithm}

\newtheorem{assumption}{Assumption}
\newtheorem{myalgorithm*}{Algorithm}
\newtheorem{myalgorithm}{Algorithm}
\newtheorem{theorem}{Theorem}
\newtheorem{lemma}{Lemma}
\newtheorem{proposition}{Proposition}
\newtheorem{definition}{Definition}
\newtheorem{corollary}{Corollary}

\makeatletter
\newcommand{\otherlabel}[2]{\protected@edef\@currentlabel{#2}\label{#1}}
\makeatother


\newcommand{\dP}{\mathbb{P}}

  
\newcommand{\Ea}[1]{\E\left[#1\right]}
\newcommand{\Eb}[2]{\E_{#1}\left[#2\right]}

\renewcommand{\vec}[1]{\boldsymbol{#1}}

  \providecommand{\R}{\mathbb{R}} 
  \providecommand{\N}{\mathbb{N}} 

  \makeatletter
  \def\sign{\@ifnextchar*{\@sgnargscaled}{\@ifnextchar[{\sgnargscaleas}{\@ifnextchar{\bgroup}{\@sgnarg}{\sgn} }}}
  \def\@sgnarg#1{\sgn\rbr{#1}}
  \def\@sgnargscaled#1{\sgn\rbr*{#1}}
  \def\@sgnargscaleas[#1]#2{\sgn\rbr[#1]{#2}}
  \makeatother



\providecommand{\E}{\mathbb{E}}%



  \providecommand{\cL}{\mathcal{L}}
  
  \providecommand{\cN}{\mathcal{N}}
  
  \providecommand{\cP}{\mathcal{P}}
  
  \providecommand{\cR}{\mathcal{R}}





%
%
%
%

\makeatletter
\makeatother




\newcommand{\speedup}[1]{{\color{gray}(\ifdim #1 pt > 0.3pt #1\else $< #1$\fi{}$\times$)}}

\makeatletter
\newsavebox{\@brx}
\newcommand{\llangle}[1][]{\savebox{\@brx}{\(\m@th{#1\langle}\)}%
  \mathopen{\copy\@brx\mkern2mu\kern-0.9\wd\@brx\usebox{\@brx}}}
\newcommand{\rrangle}[1][]{\savebox{\@brx}{\(\m@th{#1\rangle}\)}%
  \mathclose{\copy\@brx\mkern2mu\kern-0.9\wd\@brx\usebox{\@brx}}}
\makeatother

\providecommand{\norm}[1]{\left\lVert#1\right\rVert}

  \providecommand{\R}{\mathbb{R}} %
  \providecommand{\N}{\mathbb{N}} %

  \providecommand{\Eb}[1]{{\mathbb E}\left[#1\right] }

  \providecommand{\cL}{\mathcal{L}}
  
  \providecommand{\cN}{\mathcal{N}}
  
  \providecommand{\cP}{\mathcal{P}}
  
  \providecommand{\cR}{\mathcal{R}}

\renewcommand{\vec}{\mathbf}

\providecommand{\mycomment}[3]{\todo[caption={},size=footnotesize,color=#1!20]{\textbf{#2: }#3}}%
\providecommand{\inlinecomment}[3]{%
  {\color{#1}#2: #3}}%
\newcommand\commenter[2]%
{%
  \expandafter\newcommand\csname i#1\endcsname[1]{\inlinecomment{#2}{#1}{##1}}
  \expandafter\newcommand\csname #1\endcsname[1]{\mycomment{#2}{#1}{##1}}
}

\title{Repetita Iuvant: Data Repetition Allows SGD to Learn High-Dimensional Multi-Index Functions}

\author[1]{Luca Arnaboldi}
\author[1,2]{Yatin Dandi}
\author[1]{Florent Krzakala}
\author[1]{Luca Pesce}
\author[3]{Ludovic Stephan}

\affil[1]{\small Ecole Polytechnique F\'{e}d\'{e}rale de Lausanne, 
Information, Learning and Physics Laboratory. CH-1015 Lausanne, Switzerland.}
\affil[2]{\small 
Ecole Polytechnique F\'{e}d\'{e}rale de Lausanne,
Statistical Physics of Computation Laboratory. CH-1015 Lausanne, Switzerland.}
\affil[3]{\small Univ Rennes, Ensai, CNRS, CREST – UMR 9194
  F-35000 Rennes, France}

\date{}
\begin{document}

\maketitle

\begin{abstract} 
  Neural networks can identify low-dimensional relevant structures within high-dimensional noisy data, yet our mathematical understanding of how they do so remains scarce. Here, we investigate the training dynamics of two-layer shallow neural networks trained with gradient-based algorithms, and discuss how they learn pertinent features in multi-index models, that is target functions with low-dimensional relevant directions. In the high-dimensional regime, where the input dimension $d$ diverges, we show that a simple modification of the idealized single-pass gradient descent training scenario, where data can now be repeated or iterated upon twice, drastically improves its computational efficiency. In particular, it surpasses the limitations previously believed to be dictated by the Information and Leap exponents associated with the target function to be learned. Our results highlight the ability of networks to learn relevant structures from data alone without any pre-processing. More precisely, we show that (almost) all directions are learned with at most $O(d \log d)$ steps. Among the exceptions is a set of hard functions that includes sparse parities. In the presence of coupling between directions, however, these can be learned sequentially through a hierarchical mechanism that generalizes the notion of staircase functions. Our results are proven by a rigorous study of the evolution of the relevant statistics for high-dimensional dynamics. 
\end{abstract}

\section{Introduction}
Gradient Descent-based algorithms such as  Stochastic Gradient Descent (SGD) and its variations, are a fundamental tool in training neural networks, and are therefore the subject of intense theoretical scrutiny. Recent years have witnessed significant advancements in understanding their dynamics and the learning mechanisms of neural nets. Significant progress has been made, in particular, in the case of two-layer networks thanks, in part, to the so-called mean-field analysis \citep{mei2018mean,chizat2018global,rotskoff2018trainability,sirignano2020mean}). A large part of the theoretical approach focused on \emph{one-pass} optimization algorithms, where \emph{each iteration involves a new fresh batch of data}. In particular, in the mathematically solvable case of  high-dimensional synthetic Gaussian data, and a low dimensional a multi-index target function model, the class of functions efficiently learned by these \textit{one-pass} methods has been thoroughly analyzed in a series of recent works, and have been shown to be limited by the so-called information exponent \citep{BenArous2021} and leap exponent \citep{abbe2022merged, abbe2023sgd}. These analyses have sparked many follow-up theoretical works over the last few months, see, e.g. \cite{damian2022neural,damian_2023_smoothing,dandi2023twolayer,bietti2023learning,ba2023learning, moniri2023theory, mousavihosseini2023gradientbased,zweig2023symmetric}, leading to a picture where SGD dynamics is governed by the information exponent, that is, the order of the first non-zero Hermite coefficient of the target.

A common point between all these works, however, is that they considered the idealized situation where one process data one sample at a time, all of them independently identically distributed, without any repetition. This is, however, not a realistic situation in practice. Indeed, most datasets contains similar data-points, so that repetition occurs (see e.g. for repetition and duplicate in CIFAR \citep{barz2020we}) Additionally, it is common in machine learning to  repeatedly go through the same mini-batch of data multiple time over epoch. Finally, many gradient algorithms, often used in machine learning in Lookahead optimizers \citep{lookahead}, Extragradient methods \citep{Korpelevich1976TheEM}, or Sharpness Aware Minimization (SAM) \citep{foret2021sharpnessaware}, are \emph{explicitly} few gradient steps over the exact same data-point. It is thus natural to wonder if using such extra-gradient algorithms, or simply reusing many time some data, would change the picture with respect to the current consensus.

There are good reason to believe this to be the case. Indeed, it has been observed very recently \citep{dandi2024benefits} that iterating twice over large ($O(d)$) datasets was enough to alter the picture reached in 
\cite{BenArous2021,abbe2022merged, abbe2023sgd}. Indeed \cite{dandi2024benefits} showed that their exist functions that can be learned over few (just two) batch repetition, while they require a much larger (i.e. polynomial in $d$) number of steps otherwise.
While the set of function considered in this work was limited, the conclusion is indeed surprising, thus motivating the following question: 
\begin{center}
\textit{
Can data repetition increase the efficiency of SGD when learning any multi-index functions?
} 
\end{center}


We establish a positive answer to the above question by the analysis of more realistic optimization schemes with respect to the standard One-Pass SGD. Indeed, the hardness exponents developed in the seminal works \cite{abbe2021staircase, BenArous2021} are substantially bound to the idealized training scenario that considers i.i.d data. Slight modifications to this training scenario toward a more realistic setting starkly change the global picture. Such slight modifications include: a) non-vanishing correlations between different data points, that is a natural requirement for real datasets; b) processing the same data point multiple times in the optimization routine, i.e., a standard step in any algorithmic procedure when looping over different epochs. 

We consider a simple yet analytically tractable single-pass training scheme: processing one Gaussian sample at a time, but \emph{repeating the gradient step twice}. This minor adjustment, common in Lookahead optimizers \citep{lookahead}, Extragradient methods \citep{Korpelevich1976TheEM}, and Sharpness-Aware Minimization (SAM) \citep{foret2021sharpnessaware}, significantly boosts efficiency. In fact, we will see that this approach matches the performance of our best algorithms in most cases.

We show in particular that most multi-index function are learned with total sample complexity either $O(d)$ or $O(d \log d)$ if they are related to the presence of a symmetry, without any need for preprocessing of the data that is performed in various works (See e.g., \cite{mondelli2018fundamental,luo2019optimal,maillard2020phase,chen2020learning}).  Our results thus demonstrate that shallow neural networks turns out to be significantly more powerful than previously believed \citep{BenArous2021,abbe2022merged,abbe2023sgd,arnaboldi.stephan.ea_2023_high} in identifying relevant feature in the data. In fact, they are shown in many (but not all) cases to saturate the bounds predicted by statistical queries (SQ)  \citep{damian2024computational} (rather than the more limited correlation statistical queries (CSQ) bounds that were characteristic of the former consensus \citep{BenArous2021,abbe2023sgd}). Equivalently, it roughly  mirrors the prediction of the approximate message passing analysis in \citep{troiani2024fundamental}. Our conclusions follow from the rigorous analysis of the overlaps between the trained weight vectors with the low-dimensional relevant subspace that draws inspiration from the pivotal works of \cite{saad.solla_1995_line, PhysRevE.62.5444,A.C.C.Coolen_2000,BenArous2021}.

Concurrently with the present work, \citep{lee2024neural} also studies the properties of SGD under data repetition in neural networks. We discuss the differences between our works in Section~\ref{subsec:single_index_discussion}.

\section{Setting and Main Contributions}
\label{sec:main:setting}

Given a dataset of $N$ labeled examples $\{\vec z^\nu, y^\nu\}_{\nu \in [N]}$ in $d$ dimensions, we analyze the learning properties of fully connected two-layer networks with first layer weights $W=\{\vec w_j\}_{j \in [p]}$, second layer $\vec a \in \mathbb{R}^p$, and activation function $\sigma: \mathbb R \to \mathbb R$
\begin{align}
    f(\vec z; W, \vec a) = \frac{1}{p} \sum_{j =1}^p a_j \sigma \left(\langle \vec z, \vec w_j \rangle \right) \label{eq:def_nn}
\end{align}
We consider target functions that are dependent only on few orthogonal relevant directions $k=O_d(1)$ encoded in the target's weight matrix $W^\star = \{\vec w^\star_r\}_{r \in [k]} \in \mathbb{R}^{k\times d}$:
\begin{align}
    y^\nu = f^\star(\vec z^\nu) = h^\star(W^\star \vec z^\nu)
\end{align}
This model for structured data is usually referred to as a \textit{multi-index} model. Our main objective will be to analyze how efficiently two-layer nets adapt to this low-dimensional structure during training. To ensure that all necessary Gaussian expectations exist, we make the following non-restrictive assumption on $\sigma, h^\star$:
\begin{assumption}\label{assump:poly_growth}
    The functions $\sigma$ and $h^\star$ have at most polynomial growth, in the sense that there exists a constant $C > 0$ such that for all $x \in \mathbb{R}, \vec{x} \in \mathbb{R}^k$,
    \begin{equation}\label{eq:poly_growth}
        |\sigma(x)| \leq C(1 + |x|)^C \quad \text{and} \quad |h^\star(\vec{x})| \leq C(1 + \|\vec{x}\|)^C.
    \end{equation} 
\end{assumption}
\paragraph{Training algorithm --}
Our goal is to minimize the following objective:
\begin{equation}
    \cR(W, \vec{a}) = \E_{\vec{z}, y}\left[ \cL(f(\vec{z}, W, \vec{a}), y) \right]
\end{equation}
where $\cL: \R \times \R \to \R$ is a loss function, and the expectation is over the data distribution.

This objective is non-convex, and we do not have direct access to the expectation over the data. A central question in Machine Learning is therefore to quantify the properties of the weights attained at convergence by different algorithms. In this manuscript we consider a specific class of one-pass training routines to minimize the empirical risk. First, we partition the dataset in disjoint mini-batches of size $n_b$, i.e. $\mathcal{D} = \cup_{t \in [T]}\{\vec x^\nu, y^\nu\}_{\nu \in [tn_b, (t+1)n_b]}$ where $T$ is the total number of iterations considered. For each algorithmic step, we update the hidden layer neurons $W=\{\vec w_j\}_{j \in [p]}$ with the program $\mathcal{A}: (\vec w_{j,t}, \gamma, \rho, n_b) \in \mathbb{R}^{d+3} \to \vec w_{t+1,j} \in \mathbb{R}^d$.
\begin{myalgorithm}[Optimizer Main Step]
\label{algo:optimizer_main_step}
 \begin{equation}
 \begin{aligned}
     \vec w_{t+1,j} & =  \mathcal{A}(\vec w_t, \gamma, \rho, n_b) 
     =  \vec w_{t,j} - \frac{\gamma }{n_b}\sum_{\nu \in [n_b]
     }\nabla_{\vec w_{t,j}}
     \cL
    \left(
    f (\vec z^\nu; 
    \Tilde{W}_t(\rho), \vec a_0 ),  y^\nu  \right) \\
   \Tilde{\vec{w}}_{t, j} (\rho) &= \vec{w}_{t, j} - \rho \nabla_{\vec w_{t,j}}
    \cL\left( f(\vec z^\nu; W_t, \vec a_0),  y^\nu \right)
 \end{aligned}
\label{eq:algorithm}     
 \end{equation}
\end{myalgorithm}
This class of algorithms defined by the optimizer step $\mathcal{A}(\cdot\,, \gamma, \rho, n_b)$, in which we consider the final gradient update after taking a linear combination of the current iterate and its current gradient (noted as $\Tilde{W}_t(\rho)$ in the above), is broadly used in different contexts. Indeed, routines with positive $\rho$ parameters correspond to Extragradient methods (EgD) \citep{Korpelevich1976TheEM}, while with negative $\rho$ have been recently used in the context of Sharpness Aware Minimization (SAM) \citep{foret2021sharpnessaware}. For clarity, we present our theoretical results with the above, easily interpretable, Algorithm~\ref{algo:optimizer_main_step}. However, our theoretical claims are valid in a more general setting; we refer to Appendices~(\ref{sec:app:proofs}, \ref{sec:app:numerics}) for additional results on more general optimizer steps. 

As previously stated, the central object of our analysis is the efficiency of the network to adapt to the low-dimensional structure identified by $W^\star$. Therefore, we focus on the learned representations by the first layer weights $W_t$ while keeping the second layer weights $\vec a_t = \vec a_0$ fixed during training. This assumption is favorable to performing the theoretical analysis and is largely used in the theoretical community (e.g., \cite{damian2022neural, ba2022high, Bietti2022}).

Numerous theoretical efforts have been devoted to understanding the sample complexity needed to learn multi-index targets. However, an important aspect of many algorithmic routines considered is to exploit a clever \textit{warm start} to initialize the iterates \citep{mondelli2018fundamental, luo2019optimal, maillard2020phase, chen2020learning}. We show in this work that such preprocessing of the data is not needed to learn all polynomial multi-index functions in $T=O(d \log d)$ with the routine in Alg.~\ref{algo:optimizer_main_step}. Instead, we consider a fully agnostic initialization of the network:

\begin{assumption}\label{assump:init}
For each $j \in [p]$, the initial first-layer weight $\vec{w}_{0, j}$ is initialized according to the uniform measure on the sphere:
\[ \vec{w}_{0, j} \overset{i.i.d}{\sim}\operatorname{Unif}(\mathbb{S}^{d-1}). \]
The second-layer weights are drawn i.i.d in $\{-1, 1\}$:
\[ a_{0, j} \overset{i.i.d}{\sim}\operatorname{Unif}(\{-1, 1\}). \]
\end{assumption}

\paragraph{Weak recovery in high dimensions --}
The essential object in our analysis is the evolution of the correlation between the hidden neurons $W = \{\vec w_j\}_{j \in [p]}$ and the target's weights $W^\star$, as a function of the algorithmic time steps. More precisely, we are interested in the number of algorithmic steps needed to achieve an order one correlation with the target's weights $W^\star$, known as \textit{weak recovery}. 
\begin{definition}[Weak recovery]
\label{def:weak_recovery}
    The \textit{target subspace} $V^\star$ is defined as the span of the rows of the target weights $W^\star$:
    \begin{align}
    \label{eq:main:teacher_subspace_def}
        V^\star = \operatorname{span}(\vec w^\star_1, \dots, \vec w^\star_k)
    \end{align}
    We define the weak recovery stopping time for a parameter $\eta \in (0,1)$ independent from $d$ as:
\begin{equation}
    t^{+}_{\eta} = \operatorname{min}\{t \geq 0: \lVert W W^{\star \top} \lVert_F \geq \eta\}
\end{equation}
\end{definition}

\paragraph{From Information Exponents to Generative Exponents --}Recent works have sharply theoretically characterized the weak recovery stopping time for a variety of multi-index targets learned with one-pass SGD. \cite{BenArous2021} for the single-index case, then \cite{abbe2022merged} for the multi-index one have unveiled the presence of an Information Exponent $\ell$ characterizing the time complexity needed to weakly recover the target subspace $V^\star$, defined in the single-index case as
\begin{equation}
    \ell = \min \{ k \in \N \::\: \Eb{x \sim \cN(0, 1)}{h^\star(x)H_k(x)} \neq 0 \}.
\end{equation}
where $H_k$ is the $k$-th Hermite polynomial \citep{odonnell_2014_analysis}.
Under this framework, the sample complexity required to achieve weak recovery when the link function has Information Exponent $\ell$ is \citep{BenArous2021}
\begin{align} \label{eq:gba_time_scaling}
    T(\ell) = 
        \begin{cases}
        &O(d^{\ell -1} ) \qquad \hspace{0.8em}\text{if $\ell>2$} \\
        &O(d \log{d} ) \qquad \text{if $\ell=2$} \\
        &O(d) \qquad \hspace{2.2em}\text{if $\ell=1$}.
    \end{cases}
\end{align}
These bounds have been improved by \cite{damian_2023_smoothing} up to order $d^{\ell/2}$; the latter matches the so-called \emph{Correlated Statistical Query} lower bound, which considers queries of the form $\Ea{y \phi(\vec{z})}$.

However, this Information Exponent is not the right measure of complexity for low-dimensional weak recovery. Some iterative algorithms, for instance, are known to perform drastically better\citep{barbier2019optimal,celentano2021highdimensional,mondelli2018fundamental,luo2019optimal,maillard2020phase,troiani2024fundamental}. Recently \cite{damian2024computational} introduced a new Generative Exponent $\ell^\star$ (generative exponent) governing the weak recovery time scaling for algorithms in the Statistical Query (SQ) or Low Degree Polynomial (LDP) family. This exponent is defined as
\begin{equation}
    \ell^\star = \min \{ k \in \N \::\: \Eb{x \sim \cN(0, 1)}{f(h^\star(x))H_k(x)} \neq 0 \ \text{for some} \ f: \R \to \R \},
\end{equation}
which allows for the application of arbitrary transformations of the labels before performing the queries. Under this definition, the ``hard'' problems with $\ell^\star > 2$ are also impossible to learn for the best first-order algorithm, \emph{Approximate Message Passing} \citep{troiani2024fundamental}. 

We discuss in this work the extension of the generative exponent to the multi-index setting and investigate the dynamics of specific gradient-based programs (Algorithm~\ref{algo:optimizer_main_step}) as a function of the freshly defined hardness exponent $\ell^*$. 
These theoretical findings match the recent one of \cite{kou2024matching} on the computational efficiency of SGD in learning sparse $k-$parity (associated with $\ell^\star = k$).

When the optimal transformation $f$ is known, a simple SGD algorithm on $(\vec{z}, f(y))$ achieves the Generative Exponent lower bound. However, one problem remains: does there exist a class of SGD-like algorithms that can achieve this lower bound agnostically?

\paragraph{Achieving lower bounds with data repetition --}

Although the performance of single-pass SGD is limited by the Information Exponent of $h^\star$, the situation drastically changes when multiple-pass algorithms are considered. Recently, \cite{dandi2024benefits} proved that non-even single-index targets are weakly recovered in $T = O(d)$ when considering extensive batch sizes with multiple pass. This begs the question of how good can a ``reusing'' algorithm be. We answer this question for the class of polynomial link functions:
\begin{theorem}[Informal]
    There is a choice of hyperparameters such that if $h^\star$ is a polynomial function, Algorithm~\ref{algo:optimizer_main_step} achieves weak recovery in $O(d\log(d)^2)$ samples.
\end{theorem}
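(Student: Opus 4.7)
The strategy is to show that the two-step update of Algorithm~\ref{algo:optimizer_main_step} is, to leading order in $\rho$, equivalent to a one-step SGD update whose effective label is a \emph{nonlinear transformation} of $y$. This reduction replaces the effective hardness parameter in the weak-recovery analysis from the information exponent $\ell$ by a quantity no larger than the generative exponent $\ell^\star$, which for polynomial $h^\star$ we expect to be at most $2$. The standard $\ell=2$ scaling $O(d \log d)$ of \cite{BenArous2021} then yields the claim, with one extra $\log d$ factor arising from a sequential (deflation-type) argument over the $k=O(1)$ directions of $V^\star$.

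\paragraph{Step 1: Taylor expansion of the two-step gradient.} The first substantive step is to write the full update $\vec w_{t+1,j} - \vec w_{t,j}$ as a series in $\rho$. The zeroth-order term is the usual one-pass SGD gradient $\nabla_{\vec w_{t,j}} \cL(f(\vec z; W_t),y)$, while the order-$\rho$ correction is a bilinear expression of the form $\nabla^2_{W_t}\cL \cdot \nabla_{W_t}\cL$. The key observation is that this correction contains the \emph{product} of two label-dependent factors $\cL'(f,y)\cdot \cL'(f,y)$, so upon taking expectation over $y$ it produces moments such as $\E[\cL'(f,y)^2]$ that are no longer correlational queries in $y$. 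This is precisely the mechanism by which the single gradient repetition upgrades a CSQ-type access into an SQ-type access on a transformed label.

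\paragraph{Step 2: Generative exponent of polynomial targets.} I would then establish that for polynomial $h^\star$, the effective nonlinear transformation created in Step~1 always exposes a non-trivial Hermite coefficient of order at most $2$ along at least one direction of $V^\star$. Concretely, if $h^\star$ is polynomial then so is $f \circ h^\star$ for polynomial $f$, and a direct argument on Hermite expansions shows that one can pick $f$ (of degree $\leq 2\deg h^\star$) so that
\begin{equation}
  \Eb{\vec x \sim \cN(0,I_k)}{f(h^\star(\vec x))\, H_2(\langle \vec u,\vec x\rangle)} \neq 0
\end{equation}
for some unit vector $\vec u\in\R^k$ (and similarly for $H_1$ in the non-even case). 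Because the bilinear term in Step~1 scans over a basis of such quadratic transformations in $y$, its projection onto $V^\star$ is non-vanishing: the effective ``information exponent'' of Algorithm~\ref{algo:optimizer_main_step} on a polynomial target is therefore at most $2$.

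\paragraph{Step 3: High-dimensional dynamics and weak recovery.} With the effective exponent controlled, I would track the correlation matrix $M_t = W_t W^{\star\top}\in\R^{p\times k}$ via a stochastic approximation analysis in the spirit of \cite{BenArous2021,arnaboldi.stephan.ea_2023_high}. Choosing $\gamma = \Theta(1/d)$, $\rho=\Theta(1)$ (or some mild scaling), and $n_b=1$, the drift on $\|M_t\|_F^2$ picks up a positive signal of order $\|M_t\|_F^2/d$ (the $\ell^\star=2$ signature), against noise of order $1/d^2$ on the escape phase. A standard escape-from-mediocrity argument then gives weak recovery of a single direction in $O(d\log d)$ steps. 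Iterating this argument by projecting out the already-recovered direction and restarting on the orthogonal complement (using that a polynomial restricted to a linear subspace remains polynomial) handles the remaining $k-1=O(1)$ directions at the cost of an additional $\log d$ factor from union bounds and repeated escape times, producing the announced $O(d\log^2 d)$ bound.

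\paragraph{Main obstacle.} The delicate point will be Step~1 combined with the variance control in Step~3. Because the second gradient step multiplies two label-dependent factors, the stochastic noise acquires heavier tails than in single-pass SGD, and the martingale concentration must be redone with care; in particular, one must choose $\gamma,\rho,n_b$ so that (a) the $\rho$-expansion remainder is negligible against the $\rho$-signal, and (b) the fluctuations of the bilinear noise term do not overwhelm the enhanced drift during the escape phase. A secondary subtlety is to formalize an appropriate notion of ``conditional generative exponent on the orthogonal complement of already-recovered directions,'' so that the deflation step in Step~3 is rigorous and does not lose a polynomial factor in $d$.
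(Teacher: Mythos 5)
Your high-level idea --- that the repeated gradient step turns the update into a nonlinear function of $y$, effectively replacing the information exponent by the generative exponent, which is at most $2$ for polynomial $h^\star$ --- matches the paper's strategy. But there are several concrete places where your proposal would either fail or differs substantively from what the paper does.

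\textbf{The mechanism is a composition, not a product, and the $\rho$ scaling matters.} The paper works with the correlation loss $\cL(y,\hat y) = 1 - y\hat y$, for which $\nabla_{\vec w}\cL = -a_0 y\sigma'(\langle\vec w,\vec z\rangle)\vec z$ exactly. Plugging this into the second step gives
\[
a_0 y\,\sigma'\!\bigl(\langle\vec w,\vec z\rangle + a_0\rho\,y\,\sigma'(\langle\vec w,\vec z\rangle)\,\|\vec z\|^2\bigr)\cdot\vec z ,
\]
and the amplification $\|\vec z\|^2\approx d$ is essential: one sets $\rho = \rho_0/d$ so that the perturbation inside $\sigma'$ is $\Theta(1)$ and the resulting gradient is a genuinely nonlinear function of $y$ at \emph{all} orders, not only the $O(\rho)$ term. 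Your Step~1 instead proposes a Taylor expansion truncated at order $\rho$, with a ``bilinear'' $\nabla^2\cL\cdot\nabla\cL$ correction that produces $\E[\cL'(f,y)^2]$. With $\rho=\Theta(1)$ and a general loss this is both uncontrolled (the remainder is not small) and gives access to only one extra moment of $y$; the paper needs the full analytic dependence in $\rho_0$ precisely because the required transform of $y$ may be a high power (see next point).

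\textbf{Degree-$\le 2\deg h^\star$ transforms do not suffice.} Your Step~2 asserts that some polynomial $f$ of degree $\le 2\deg h^\star$ exposes a Hermite-$1$ or Hermite-$2$ coefficient of $f(h^\star(\cdot))$. The paper's proof of the corresponding fact (Theorem~\ref{thm:poly_pge}) uses the power transforms $f(y) = y^m$ for \emph{arbitrarily large} odd $m$: one shows that $\E[x\,h^\star(x)^m]$ (or $\E[(x^2-1)h^\star(x)^m]$ in the even case) is eventually nonzero by comparing the polynomial growth of the Gaussian moment $\E[x^{m d_o+1}]$ against an exponential $C^m$ bound on the remaining terms. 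There is no a priori degree bound; low powers of $y$ can have vanishing low-order Hermite projections, and the argument genuinely needs $m\to\infty$. This is also why the paper proves the derivatives $\partial^k_{\rho_0}\phi(0;\rho_0)|_{\rho_0=0}$ pick up $\E[h^\star(x^\star)^{k+1}x^\star]$ for all $k$: analyticity in $\rho_0$ gives access to every power $y^{k+1}$, and an almost-every-$\rho_0$ argument then ensures $\phi(0)\neq 0$ (or $\phi'(0)\neq 0$).

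\textbf{The $\log^2 d$ has a different origin, and the multi-index deflation is not in the paper.} In the paper, the formal result is for the single-index case: $\gamma = \gamma_0/(d\log d)$ forces the escape time $t_\eta^+\le C\gamma^{-1}\log d = O(d\log(d)^2)$ via the discrete Gr\"onwall step. Your Step~3 attributes the second $\log d$ to a union bound over a sequential deflation through $k$ directions; that deflation argument is not carried out rigorously in the paper (the multi-index picture is supported numerically only), and it is not where the $\log^2 d$ comes from. If you wanted to pursue the deflation route you would need, as you note, a notion of ``conditional generative exponent after projecting out recovered directions,'' and you would also have to verify that the projected problem still falls under the nonzero-drift regime of the algorithm --- none of which is immediate.

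In summary: the core conceptual insight (data repetition $\Rightarrow$ nonlinear label transform $\Rightarrow$ generative exponent) is the same, but your specific Steps~1 and~2 would not go through as written --- the Taylor truncation cannot produce the high-degree transforms of $y$ that polynomials may require, and your degree bound is false in general --- and your accounting for the $\log^2 d$ is not what the paper does.
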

This shows in particular that a simple gradient-based algorithm, which only requires seeing the data twice, is optimal for learning any polynomials.

\paragraph{Learning representations --}
Learning relevant features from data is a fundamental property of neural networks. However, much theoretical work has focused on lazy training in two-layer networks \citep{jacot2018neural}, where features are not effectively learned. In this regime, shallow networks behave as kernel machines, which struggle to adapt to low-dimensional structures in data \citep{Ghorbani2020}. Understanding representation learning and corrections to the kernel regime is thus a key challenge in machine learning theory \citep{pmlr-v75-dudeja18a, atanasov2022neural, Bietti2022, damian2022neural, petrini2022learning}. In this manuscript we provably characterize the number of iterations needed from Algorithm~\ref{algo:optimizer_main_step} to learn the relevant features of the data, i.e., attain weak recovery of the target subspace. Our results provide the scaling of the relevant hyperparameters $(\gamma, \rho)$ with the diverging input dimension $d$. The minibatch size will be fixed to $n_b =1$ in the main body, and we refer to Appendix~\ref{sec:app:numerics} for the extension to larger batch sizes.

\subsection*{Summary of Main Results}
\begin{itemize}[noitemsep,leftmargin=1em,wide=0pt]
    \item We prove for single-index targets that one-pass gradient-based algorithms surpass the Correlation Statistical Query (CSQ) limits established by the Information Exponent \citep{BenArous2021}.  
    \item We unveil that the dynamics of the family of Alg.~\ref{algo:optimizer_main_step} are governed by the generative exponent \cite{damian2024computational}, with an additional polynomial restriction on the transformation of the output.
    \item We generalize the notion of generative exponent to multi-index targets. As in the single-index case, one-pass gradient-based algorithms can overcome CSQ performance dictated by the Leap Coefficient \citep{abbe2022merged}. Our definition is consistent with the one of \citep{troiani2024fundamental}. 
    \item We prove that all polynomial multi-index functions are learned by Algorithm~\ref{algo:optimizer_main_step} either with total sample complexity $O(d)$ or $O(d \log(d)^2)$ if associated with the presence of a symmetry. 
    \item  The implementation of Algorithm~\ref{algo:optimizer_main_step} does not require pre-processing to initially correlate the estimation with the ground truth, and learns the meaningful representations from data alone. This is in contrast numerous findings (e.g., \cite{mondelli2018fundamental, luo2019optimal, maillard2020phase} for single index targets or \cite{chen2020learning} for multi-index ones).
    \item We characterize the class of hard functions not learned  in (almost) linear sample complexity. We show, however, that such functions can be learned through an hierarchical mechanism that extends the CSQ staircase first developed in \cite{abbe2021staircase} to a different, larger set, of functions.
    \item We validate the formal theoretical claims with detailed numerical illustrations. The code to reproduce representative figures is available at \url{https://github.com/IdePHICS/Repetita-Iuvant}.
\end{itemize}
\section{Single-Index Model}
\subsection{Main results}
We first consider, for simplicity, the class of single-index models, in which $k \!=\! p \!=\! 1$. This corresponds to a mismatched setting of Generalized Linear Models, in which we want to learn
\[ y = h^\star(\langle \vec{w^\star}, \vec{z} \rangle), \quad \text{with} \quad f(\vec{z}, \vec{w}, a) = a\sigma(\langle \vec{w}, \vec{z} \rangle). \]
In this section we rigorously characterize the number of iterations needed for the class of Algorithms~\ref{algo:optimizer_main_step} to perform weak recovery (as in Definition~\ref{def:weak_recovery}) in the context of single-index targets. We establish a clear separation between the the learning efficiency of the algorithmic  family~\ref{algo:optimizer_main_step} and One-Pass SGD, limited by the Information Exponent of the target to be learned \citep{BenArous2021}.

We start by introducing a restriction of the generative information exponent in \cite{damian2024computational} to polynomial transformations.
\begin{definition}[Polynomial Generative Information exponent]
\label{def:lstar_damian2024}
We define $\ell^\star_p$ as the smallest integer $k$ such that there exists a polynomial $p: \R \rightarrow \R$ with:
\begin{equation}\label{eq:f_hard_gen}
        \Eb{x\sim\cN(0, 1)}{p(h^\star(x))H_{k}(x)}\neq 0,
    \end{equation}
\end{definition}

Our assumptions on the SGD implementation and the target function are as follows:
\begin{assumption}\label{assump:spherical_corr}
    Algorithm \ref{algo:optimizer_main_step} is run with single-sample steps ($n_b = 1$), and uses the correlation loss
    \[ \cL(y, \hat y) = 1 - y\hat y.\]
    We also consider a spherical version of Alg. \ref{algo:optimizer_main_step}, with
    \begin{equation}\label{eq:spherical_sam}
        \vec{w}_{t+1} = \frac{\vec w_{t} - \gamma\nabla^\bot_{\vec w}
     \cL
    \left(
    f (\vec z^\nu; 
    \tilde{\vec{w}}_t(\rho), a_0),  y^\nu  \right)}{\norm{\vec w_{t} - \gamma\nabla^\bot_{\vec w}
     \cL
    \left(
    f (\vec z^\nu; 
    \tilde{\vec{w}}_t(\rho), a_0),  y^\nu  \right)}}
    \end{equation}
    where $\nabla^\bot f = \nabla f - \langle \nabla f, \vec{w} \rangle \vec{w}$ is the spherical gradient.
\end{assumption}

\begin{assumption}\label{assump:derivatives}
    The activation $\sigma$ is analytic almost everywhere, and $\sigma^{(n)}$ satisfies the polynomial growth assumption \eqref{eq:poly_growth} for $n \in \mathbb{N}$ (possibly for different constants $C_n$). Further, for all $n \geq 1$, we have
    \[ \Eb{x\sim\cN(0, 1)}{\sigma^{(n)}(x) \sigma'(x)^{n-1}} \neq 0 \quad \text{and} \quad \Eb{x\sim\cN(0, 1)}{x \sigma^{(n)}(x) \sigma'(x)^{n-1}} \neq 0 \]
\end{assumption}

We are now ready to state our main result.
\begin{theorem}
\label{thm:single_index_recovery}
    Suppose that Assumptions \ref{assump:poly_growth}-\ref{assump:derivatives} hold, and let $\rho = \rho_0 d^{-1}$. Then, under an event of probability $1/2$ on the initialization $(a_0, \vec{w}_0)$, for each $\delta > 0$, there exist constants $\gamma_0(\delta)$ and $C(\delta)$ such that the following holds:
    \begin{itemize}
        \item If $\ell_p^\star = 1$, choosing $\gamma = \gamma_0(\delta)d^{-1}$, then
$
             \dP(t^+_\eta \leq C(\delta) \cdot d) \geq 1 - \delta
     $,

        \item If $\ell_p^\star = 2$, choosing $\gamma = \gamma_0(\delta)[d\log(d)]^{-1}$, then 
        $
            \dP(t^+_\eta \leq C(\delta) \cdot d \log(d)^2) \geq 1 - \delta
       $.
    \end{itemize}
    The above holds for almost every choice of $\rho_0$ under the Lebesgue measure. In particular, in a network with $p = O(\log\frac1\delta)$ neurons, at least one of them achieves weak recovery with probability $1 - \delta$.
\end{theorem}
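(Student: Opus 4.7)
The plan is to reduce Theorem~\ref{thm:single_index_recovery} to a one-dimensional drift-plus-noise analysis for the overlap $m_t = \langle \vec w_t, \vec w^\star\rangle$, in the spirit of \cite{BenArous2021,arnaboldi.stephan.ea_2023_high}, with the key new ingredient being that the extragradient ``look-ahead'' step replaces the Information Exponent by the polynomial generative exponent $\ell_p^\star$.

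\textbf{Step 1: expand the effective update in $\rho$.} For the correlation loss and a single sample, the inner gradient is $\nabla_{\vec w}\cL = -y\,a_0\,\sigma'(\langle\vec w_t,\vec z\rangle)\vec z$, so $\tilde{\vec w}_t(\rho) = \vec w_t + \rho\,y\,a_0\,\sigma'(\langle\vec w_t,\vec z\rangle)\vec z$. Since $\rho=\rho_0/d$ and $\|\vec z\|^2\approx d$, the argument shift $\langle\tilde{\vec w}_t-\vec w_t,\vec z\rangle$ is of order $1$. Taylor expanding $\sigma'(\langle\tilde{\vec w}_t(\rho),\vec z\rangle)$ to order $K$ around $\langle\vec w_t,\vec z\rangle$ gives an effective step
\[
\vec w_{t+1}-\vec w_t \;=\; \gamma\,a_0\sum_{n=0}^{K}\frac{\rho_0^n}{n!}\bigl(a_0\|\vec z\|^2/d\bigr)^{n}\,y^{\,n+1}\,\sigma^{(n+1)}(\langle\vec w_t,\vec z\rangle)\,\sigma'(\langle\vec w_t,\vec z\rangle)^{n}\,\vec z \;+\; \text{remainder}.
\]
The crucial point is that the $n$-th term carries the label transform $y^{n+1}=h^\star(\langle\vec w^\star,\vec z\rangle)^{n+1}$, so as $\rho_0$ varies, the update realizes \emph{every} polynomial transform of the label.

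\textbf{Step 2: compute the conditional drift of $m_t$.} Projecting onto $\vec w^\star$ and using Gaussian integration by parts (Hermite expansion of $(h^\star)^{n+1}$ in the direction $\langle\vec w^\star,\vec z\rangle$ and of $\sigma^{(n+1)}\sigma'^{\,n}$ in the direction $\langle\vec w_t,\vec z\rangle$), the drift $\mathbb E[m_{t+1}-m_t\mid\vec w_t]$ becomes, up to lower order terms in $1/d$,
\[
\frac{\gamma}{d}\sum_{n\ge 0}\sum_{k\ge 1}\alpha_n(\rho_0)\,\mu_{n,k}\,c^\star_{n,k}\,m_t^{\,k-1},
\]
where $\mu_{n,k}=\mathbb E[\sigma^{(n+k)}(x)\sigma'(x)^{n}]$ is nonzero by Assumption~\ref{assump:activation} (together with its $x$-weighted variant, which covers the $\Tilde{W}_t(\rho)$ normalization), and $c^\star_{n,k}$ is the $k$-th Hermite coefficient of $(h^\star)^{n+1}$. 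By Definition~\ref{def:lstar_damian2024}, the smallest $k$ for which $\sum_n \alpha_n(\rho_0)c^\star_{n,k}$ is not identically zero in $\rho_0$ is exactly $\ell_p^\star$, and then this sum is a nontrivial polynomial in $\rho_0$, hence nonzero for Lebesgue-almost every $\rho_0$. This identifies the leading drift as $c(\rho_0)\,\gamma\,m_t^{\ell_p^\star-1}/d$ with $c(\rho_0)\neq 0$.

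\textbf{Step 3: stochastic analysis of $m_t$.} The spherical projection yields
\[
m_{t+1} = m_t + \frac{\gamma}{d}\,c(\rho_0)\,m_t^{\ell_p^\star-1} + R_t + \xi_t,
\]
where $R_t$ collects higher-order drift terms bounded by $(\gamma/d)(|m_t|^{\ell_p^\star}+d^{-1/2})$, and $\xi_t$ is a martingale increment whose conditional variance is $O(\gamma^2/d)$ (the noise is dominated by the component of $\vec z$ orthogonal to both $\vec w_t$ and $\vec w^\star$). Under Assumption~\ref{assump:init} the initialization satisfies $m_0 = \Theta_\dP(1/\sqrt d)$ with a prescribed sign. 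A Doob/Freedman concentration argument shows that $\sum_{s\le t}\xi_s$ is dominated by the accumulated drift as long as $t\gamma^2/d\ll m_t^2$.

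\textbf{Step 4: tune step sizes and iterate.} For $\ell_p^\star=1$, the drift is $\Theta(\gamma/d)$, uniform in $m_t$; with $\gamma=\gamma_0/d$ we get $m_t\gtrsim t/d^2$ up to martingale noise of size $\sqrt{t}\gamma/\sqrt d=\sqrt{t/d^3}\cdot\gamma_0$, so $m_t$ reaches $\eta$ at $t=\Theta(d)$. For $\ell_p^\star=2$, the drift is $\Theta(\gamma m_t/d)$, producing exponential growth $m_t\asymp m_0\exp(c\gamma t/d)$. Escaping the $\Theta(1/\sqrt d)$ ball requires $\gamma t/d=\Theta(\log d)$; taking $\gamma=\gamma_0/(d\log d)$ — which is needed to keep the martingale fluctuations below the drift inside the mediocre-overlap region — gives $t=\Theta(d\log(d)^2)$. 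In both cases, once $m_t$ exceeds a small constant, a constant number of further steps suffice to reach $\eta$.

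\textbf{Main obstacle.} The nontrivial step is Step~2: proving that the coefficient of $m_t^{\ell_p^\star-1}$ in the drift expansion is a \emph{nontrivial} polynomial in $\rho_0$, rather than accidentally zero. This requires carefully separating the sums over $n$ (polynomial order of the label transform) and over $k$ (Hermite index in the $\vec w^\star$ direction), identifying $\ell_p^\star$ as the first index at which any polynomial transform of $h^\star$ couples to $\vec w^\star$, and verifying that Assumption~\ref{assump:activation} prevents the activation-side factors from killing the coefficient. The rest of the proof is a careful but fairly standard adaptation of the single-index SGD machinery, with the spherical normalization handled as in \cite{BenArous2021} and the extragradient remainder controlled via the boundedness of $\sigma',\sigma''$.
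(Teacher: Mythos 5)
Your proposal follows the paper's proof along essentially the same route: reduce to tracking $m_t=\langle\vec w_t,\vec w^\star\rangle$; Taylor-expand the lookahead gradient $\sigma'\bigl(\langle\vec w_t,\vec z\rangle+\rho\,a_0\,y\,\sigma'(\cdot)\,\|\vec z\|^2\bigr)$ in $\rho_0$, noting $\rho\|\vec z\|^2\approx\rho_0$, so that the $n$-th term queries the label transform $y^{n+1}$; argue via analyticity in $\rho_0$ and the nonvanishing activation moments from Assumption~\ref{assump:activation} that the relevant coefficient in the population drift is a non-identically-zero analytic function of $\rho_0$, hence nonzero for a.e.~$\rho_0$ (exactly the paper's Proposition~\ref{prop:app:phi_phip_nnz}); and finish with a Ben Arous--style martingale argument. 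This is the paper's argument, organized the same way.

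One arithmetic slip worth flagging: in Steps~2--3 you write the per-step drift as $\Theta(\gamma/d)\,m_t^{\ell_p^\star-1}$ and the per-step noise variance as $O(\gamma^2/d)$. The $d$'s already cancel when $\rho=\rho_0/d$ meets $\|\vec z\|^2\approx d$, so the population drift $\phi(m)$ is $O(1)$ and the per-step drift is $\gamma\,\phi(m_t)$ with no extra $1/d$ (similarly the per-step variance is $O(\gamma^2)$). Plugging your $\gamma/d$ drift with $\gamma=\gamma_0/d$ into your own Step~4 gives $m_t\lesssim t/d^2$, which at $t=\Theta(d)$ is $\Theta(1/d)$, not $\Theta(1)$; your stated complexities match the paper's, but do not follow from the formula you wrote. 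Beyond this you also gloss over the truncation of the $\rho_0$-Taylor series (the paper sidesteps it by treating $\phi(0;\rho_0)$ as an entire function rather than a truncated polynomial) and the effect of the spherical normalization on the $m_t$ increment, but these are technical points subsumed by the framework of \cite{BenArous2021} that the paper invokes, not conceptual gaps.
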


\paragraph {Proof Sketch --} We provide here the proof sketch for Theorem~\ref{thm:single_index_recovery}. For any vector $\vec{w}$, under the correlation loss, we have
\begin{equation}\label{eq:sgd_gradient}
    \nabla_{\vec{w}} \cL(f(\vec{z}; \vec{w}, a_0), y) = a_0 y \sigma'(\langle \vec{w}, z \rangle) \cdot \vec{z},
\end{equation}
so the gradient is aligned with $\vec{z}$. As a result, we have
\begin{equation}\label{eq:sam_gradient}
    \nabla_{\vec w}\cL\left(f(\vec z; \tilde{\vec{w}}(\rho), a_0),  y \right) = a_0 y \sigma'\big(\langle \vec{w}, \vec{z} \rangle + a_0 \rho y \sigma'(\langle \vec{w}, \vec{z} \rangle) \cdot \underbrace{\norm{\vec{z}}^2}_{\approx d}\big) \cdot \vec{z}
\end{equation}
This expression for the gradient exhibits two important properties:
\begin{itemize}
    \item Even though $\rho \asymp d^{-1}$, since the gradient lies along $\vec{z}$, the additional dot product with $\vec{z}$ amplifies the signal by a factor of $d$,
    \item The resulting gradient  \eqref{eq:sam_gradient} is a \emph{non-linear} function of $y$, as opposed to the linear one of \eqref{eq:sgd_gradient}. 
\end{itemize}
The latter property enables us to show that the dynamics driven by equation \eqref{eq:sam_gradient} can implement all polynomial transformations of the output $y$, for almost all choices of $\rho_0$. Subsequently, the SGD algorithm on the transformed input $p(y)$ can be studied with the same techniques as \cite{BenArous2021}, but with an information exponent equal to $\ell_p^\star$.

\subsection{Discussion}\label{subsec:single_index_discussion}

\paragraph{Significance and necessity of the assumptions --}
Assumption \ref{assump:spherical_corr} simplifies Alg.~\ref{algo:optimizer_main_step} to a more tractable version. In particular, the normalization step allows us to keep track of only one parameter, the overlap between $\vec{w}_t$ and $\vec{w}^\star$. We expect however Thm.~\ref{thm:single_index_recovery} to hold in the general setting of Alg.~\ref{algo:optimizer_main_step}.

Assumption \ref{assump:derivatives} is a regularity assumption, ensuring that some quantities of interest in the proof are non-zero. The first part is satisfied by virtually every activation function; the second is more restrictive, but we show that it is satisfied for biased activations:
\begin{lemma}\label{lem:bias_activation}
    Let $\sigma$ be a non-polynomial analytic function. Then the second part of Assumption \ref{assump:derivatives} is true for the function $x \mapsto \sigma(x + b)$ for almost every choice of $b$ (according to the Lebesgue measure).
\end{lemma}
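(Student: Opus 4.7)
The plan is to reduce the lemma to showing that for each $n \geq 1$, the two real-valued functions of $b$
\begin{align*}
F_n(b) &:= \Ea{\sigma^{(n)}(x+b)\,\sigma'(x+b)^{n-1}},\\
G_n(b) &:= \Ea{x\,\sigma^{(n)}(x+b)\,\sigma'(x+b)^{n-1}},
\end{align*}
with $x \sim \cN(0,1)$, are real-analytic in $b$ and do not vanish identically. A nonzero real-analytic function on $\mathbb{R}$ has a discrete (hence Lebesgue-null) zero set, so $\{b \in \mathbb{R} : F_n(b) G_n(b) = 0\}$ is Lebesgue-null for each $n$, and the countable union over $n \geq 1$ remains null, yielding the lemma. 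Analyticity in $b$ follows from differentiation under the integral, justified by the Gaussian tail $e^{-x^2/2}$ dominating all $b$-derivatives of the analytic integrand uniformly on compact $b$-sets.

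The key observation is Stein's identity: for $x \sim \cN(0,1)$ and sufficiently regular $h$, $\Ea{x\, h(x)} = \Ea{h'(x)}$. Applied to $h(x) = g_n(x+b)$ with $g_n(y) := \sigma^{(n)}(y) \sigma'(y)^{n-1}$, this yields
$$G_n(b) = \Ea{g_n'(x+b)} = F_n'(b).$$
Consequently, both $F_n \equiv 0$ and $G_n \equiv 0$ are special cases of ``$F_n$ is a constant''. Writing $F_n(b) = (g_n \ast \phi_0)(b)$ with $\phi_0$ the standard Gaussian density, the relation $F_n \equiv c$ differentiated once in $b$ gives $g_n' \ast \phi_0 \equiv 0$; since the Fourier transform of $\phi_0$ vanishes nowhere, injectivity of Gaussian convolution forces $g_n' \equiv 0$, and by analyticity $g_n$ is itself a constant on $\mathbb{R}$.

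It remains to show that $g_n \equiv c$ is incompatible with $\sigma$ being non-polynomial and analytic with bounded first derivative. If $c = 0$, the analytic product $\sigma^{(n)}(y) \sigma'(y)^{n-1}$ vanishes identically on the connected domain $\mathbb{R}$, so by the identity theorem one of the factors vanishes identically, making $\sigma$ a polynomial of degree at most $n-1$ and contradicting the hypothesis. For $c \neq 0$ and $n = 1$, $g_1 = \sigma' \equiv c$ makes $\sigma$ affine. For $c \neq 0$ and $n \geq 2$, the relation $\sigma^{(n)}(y) = c / \sigma'(y)^{n-1}$ combined with $|\sigma'| \leq M$ forces $|\sigma^{(n)}(y)| \geq |c|/M^{n-1}$ with constant sign; integrating $n-1$ times from $0$ to $y$ yields $|\sigma'(y)| \gtrsim |y|^{n-1}$ as $|y| \to \infty$, contradicting the bounded first derivative in Assumption~\ref{assump:activation}.

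The main obstacle is the last case, $c \neq 0$ and $n \geq 2$: this is the only place where the bounded-derivative hypothesis (rather than only analyticity and non-polynomiality) is genuinely used, and without it the ODE $\sigma^{(n)} \sigma'^{n-1} \equiv c$ would admit non-polynomial analytic solutions on intervals of $\mathbb{R}$. The growth-versus-boundedness argument above closes this last case cleanly, completing the proof.
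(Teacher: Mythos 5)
Your proof follows the same high-level skeleton as the paper's — show both expectations are real-analytic in $b$ and non-identically zero, invoke discreteness of zero sets of non-trivial analytic functions, and take a countable union over $n$. The difference lies in how you establish the "non-identically zero'' step, and here you actually close a genuine gap in the paper's argument.

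The paper proves the statement via a sub-lemma: \emph{if $f$ is analytic and non-identically zero, then $\Ea{f(x+b)}\neq 0$ and $\Ea{xf(x+b)}\neq 0$ for a.e.\ $b$}. Its proof of the second assertion reduces by Stein's lemma to $\Ea{f'(x+b)}$ and re-applies the first assertion to $f'$ — which is only valid if $f'$ is itself non-identically zero, i.e.\ if $f$ is non-constant. As stated, the sub-lemma is in fact false (take $f\equiv 1$), and the paper never verifies that $f_n = \sigma^{(n)}(\sigma')^{n-1}$ is non-constant (as opposed to merely non-zero). Your observation $G_n = F_n'$ surfaces this collapse explicitly: both failure modes reduce to $g_n$ being constant. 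Your closure of the nonzero-constant case via boundedness of $\sigma'$ and integration of $\sigma^{(n)}\geq |c|/M^{n-1}$ is correct and is exactly the missing ingredient. Two small remarks: (i) you establish $g_n'\equiv 0$ via injectivity of Gaussian convolution, whereas the paper's Hermite-coefficient route ($\psi^{(k)}(0) \propto u_k(g_n)$) gives the same conclusion without appealing to Fourier theory and avoids temperedness concerns for higher derivatives of $\sigma$; either works here. (ii) The lemma's stated hypothesis is only "non-polynomial analytic," so your use of the bounded-derivative part of Assumption~\ref{assump:activation} technically enlarges the hypotheses — but that is consistent with the lemma's intended use and arguably required to make the statement true, so it is the right fix rather than a flaw.
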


Finally, although we state our result as an almost sure event over $\rho_0$, it can be reformulated into an event on the second layer weight $a_0$, in which the randomness comes from the initialization step.

\paragraph {Learning polynomial functions --} Although our definition of the Polynomial Generative Information exponent is more restrictive than the one in \cite{damian2024computational}, we show that it is enough to allow learning of a large class of functions, namely polynomials:
\begin{theorem}\label{thm:poly_pge}
    Assume that $h^\star$ is a polynomial function. Then the Polynomial Generative Information exponent of $h^\star$ is always at most 2, and is equal to 1 whenever $h^\star$ is not even.
\end{theorem}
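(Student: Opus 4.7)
The plan is to establish (A) that $\ell_p^\star \leq 2$ for every non-constant polynomial $h^\star$, and (B) that $\ell_p^\star = 1$ precisely when $h^\star$ is non-even. The easy half of (B), namely that $h^\star$ even implies $\ell_p^\star \geq 2$, follows by symmetry: when $h^\star(-x)=h^\star(x)$, the function $x\mapsto p(h^\star(x))\cdot x$ is odd for every polynomial $p$, so its Gaussian expectation vanishes.

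For both the remaining direction of (B) and for (A) in the even case, I would take $p(y)=y^n$ with $n$ large and analyze the Gaussian integrals $\E[X h^\star(X)^n]$ and $\E[(X^2-1)h^\star(X)^n]$ by a saddle-point (Laplace) expansion. Writing the integrand as $\exp\{n\log|h^\star(x)| - x^2/2\}$ times a polynomial prefactor, the stationary-point equation $xh^\star(x) = n h^{\star\prime}(x)$ places the two saddles at $x^\star \approx \pm\sqrt{nd}$ with $d = \deg h^\star$. For (A) in the even case, both saddles contribute equally by symmetry, and since $(x^\star)^2 \sim nd$ at each saddle the leading-order Laplace evaluation yields $\E[(X^2-1)h^\star(X)^n] \sim (nd-1)\,\E[h^\star(X)^n]$, which is nonzero for $n$ large.

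For the hard direction of (B), the sum of the contributions from $\pm x^\star$ to $\E[X h^\star(X)^n]$ is proportional to $x^\star \varphi(x^\star)\bigl(h^\star(x^\star)^n - h^\star(-x^\star)^n\bigr)$, with $\varphi$ the standard Gaussian density. The key identity is $h^\star(x)-h^\star(-x) = 2h^\star_{o}(x)$, where $h^\star_{o}$ denotes the odd part of $h^\star$; non-evenness ensures $h^\star_{o}\neq 0$, so $h^\star_{o}(x^\star)\neq 0$ for $x^\star$ large and the two saddle contributions do not cancel. When $\deg h^\star$ is odd, the leading terms $h^\star(\pm x^\star)$ themselves have opposite signs, so for odd $n$ the contributions from $+x^\star$ and $-x^\star$ add up; when $\deg h^\star$ is even, a first-order expansion gives $h^\star(x^\star)^n - h^\star(-x^\star)^n \sim 2n\,h^\star(x^\star)^{n-1}h^\star_{o}(x^\star) \neq 0$. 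In either case $p(y)=y^n$ witnesses $\ell_p^\star \leq 1$ for $n$ sufficiently large.

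The main obstacle is rigorously controlling the subleading cancellation in the even-degree non-even case when the lowest-degree nonzero odd Taylor coefficient of $h^\star$ has small index: then $h^\star_{o}(x^\star)/h^\star(x^\star)$ is only polynomially small in $n$, and the non-cancelling remainder of the two saddle contributions is itself suppressed by a polynomial factor in $n$. One must therefore carry the Laplace expansion to sufficient order and verify that the residual asymmetric term survives; this is a standard steepest-descent computation, made tractable by the explicit polynomial structure of $h^\star$ and $h^{\star\prime}$.
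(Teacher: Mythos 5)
Your strategy aligns with the paper's — both take $p(y)=y^n$ with $n$ large and work through the even/odd decomposition $h^\star = e + o$ — but the technique is genuinely different. You extract asymptotics of $\Ea{X\,h^\star(X)^n}$ via steepest descent around $x^\star\approx\pm\sqrt{n\deg h^\star}$, whereas the paper binomially expands $\Ea{x\,h^\star(x)^m}=\sum_k \binom{m}{k}\Ea{x\,e(x)^k o(x)^{m-k}}$ with $m$ odd, observes that the odd-$k$ terms vanish by parity while each even-$k$ term integrates an even polynomial with positive leading coefficient, and lower-bounds the sum against the single Gaussian moment $\Ea{x^{md_o+1}}$ minus an error that is only exponential in $m$; superexponential moment growth then finishes the job. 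The paper's route is entirely elementary and, importantly, cleanly handles the subtlety you flag. In the even-degree non-even case, your linearization $h^\star(x^\star)^n - h^\star(-x^\star)^n \sim 2n\,h^\star(x^\star)^{n-1}o(x^\star)$ is an asymptotic equivalence only when the degree gap $d_e-d_o$ exceeds two; when $d_e-d_o=1$ the higher binomial terms $\binom{n}{j}e^{n-j}o^j$ (odd $j$) actually dominate the $j=1$ term at $x^\star$, so the asymptotic form you write is not correct as stated, and you must instead argue that all these terms carry the same sign at $x^\star$ so the sum is bounded below by any one of them. That common-sign observation is essentially what the paper's binomial expansion isolates from the start, which is why its proof needs no saddle-point bookkeeping. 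Both routes reach the same conclusion, but the paper's is shorter and easier to make fully rigorous.
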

This theorem implies that the Extragradient algorithm class allows us to match the sample complexity of the algorithm of \cite{chen2020learning}, without the need for \emph{ad hoc} preprocessing.


\paragraph{Beyond Algorithm~\ref{algo:optimizer_main_step} --} A succinct summary of the proof goes as follows: the first time we see a sample $\vec{z}$, we store information about $y$ along the direction of $\vec{z}$. The second time we see the same sample, the component along $\vec{z}$ in $\vec{w}$ interact \emph{non-linearly} with $y$, which bypasses the CSQ framework. Importantly, we expect that this behavior also appears in SGD algorithms where we do not see the data twice \emph{in a row}, but \emph{across multiple epochs}! Indeed, upon small enough correlation between the $\vec{z}^\nu$, seeing multiple examples before the repetition of $\vec{z}$ should not interfere with the data stored along this direction.
We are therefore not claiming that Algorithm~\ref{algo:optimizer_main_step} is superior to other classically-used methods; rather, it provides a tractable and realistic setting to study data repetition in SGD, which was ignored in previous theoretical works on the topic. 

\begin{figure}[t]
    \centering
    \subfigure[$\scriptstyle h^\star(x) = \operatorname{relu}(x)$]{%
        \includegraphics[height=0.21\textwidth]{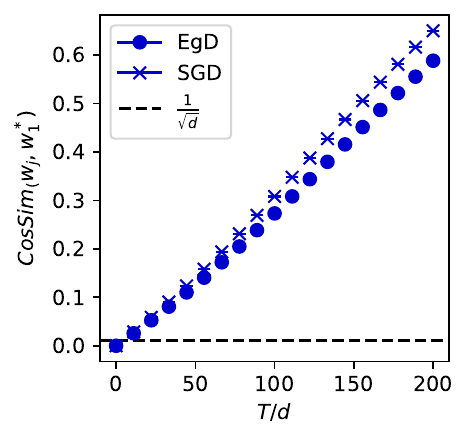}
        \label{fig:single_index_1}
    }\hfill
    \subfigure[$\scriptstyle h^\star(x) = \mathrm{He}_2(x)$]{%
        \includegraphics[height=0.21\textwidth]{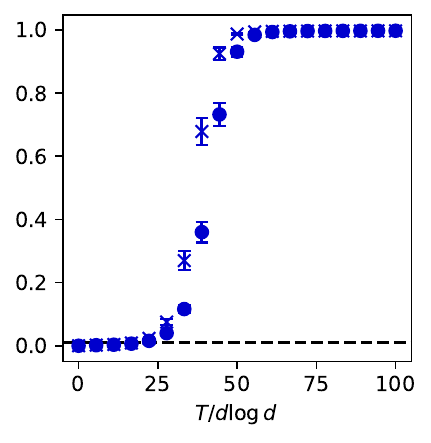}
        \label{fig:single_index_2}
    }\hfill
    \subfigure[$\scriptstyle h^\star(x) = \mathrm{He}_3(x)$]{%
        \includegraphics[height=0.21\textwidth]{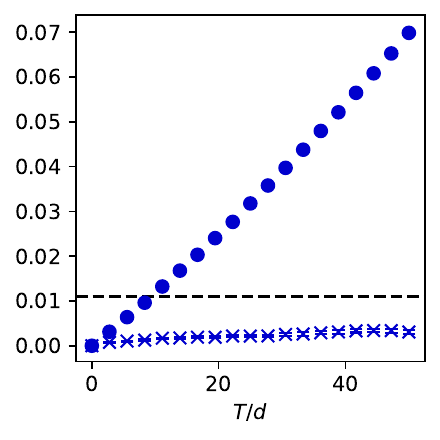}
        \label{fig:single_index_3}
    }\hfill
    \subfigure[$\scriptstyle h^\star(x) = \mathrm{He}_4(x)$]{%
        \includegraphics[height=0.21\textwidth]{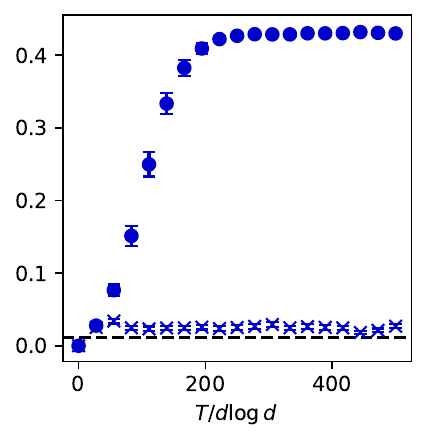}
        \label{fig:single_index_4}
    }

    \caption{\textbf{Learning single-index targets --}
    Evolution of the Cosine Similarity attained by EgD (crosses) and SGD (dots) as a function of the normalized iteration time.
    (dashed horizontal line $\frac{1}{\sqrt{d}}$ is a visual guide to place random performance).
    \textbf{(a) $(\ell,\ell^\star)=(1,1)$:} both algorithms learn in linear time.
    \textbf{(b) $(\ell,\ell^\star)=(2,2)$:} both algorithms learn in $O(d\log d)$ time.
    \textbf{(c) $(\ell,\ell^\star)=(3,1)$:} EgD learns in linear time, while SGD requires $O(d^2)$ time.
    \textbf{(d) $(\ell,\ell^\star)=(4,2)$:} EgD learns in $O(d\log d)$ time,  SGD requires $O(d^3)$ time
    (Details in App.~\ref{sec:app:implementation}).}
    \label{fig:main:single_index_complete}
\end{figure}

\subsection{Numerical illustrations}\label{subsec:single_numerics}
We illustrate in Fig.~\ref{fig:main:single_index_complete} the stark difference between the weak recovery dynamics of One-Pass SGD and Algo.~\ref{algo:optimizer_main_step} for single-index targets. The Cosine Similarity between the learned weight $\vec w_t$ and the ground truth informative direction $\vec w_\star$ is shown as a function of the time steps $t$ for different target functions. Two optimization routines are considered to exemplify the learning behaviour: a) Extragradient Descent (EgD), corresponding to the family of Algorithms~\ref{algo:optimizer_main_step} with positive $\rho$ parameters; b) One-Pass SGD, corresponding to vanilla SGD, or equivalently Algorithm~\ref{algo:optimizer_main_step} associated with $\rho=0$ hyperparameter. The scaling of $(\gamma, \rho)$ as a function of the input dimension  $d$ and exponent $\ell^\star$ are given in Thm.~\ref{thm:single_index_recovery}, while the mini-batch size is fixed to $n_b =1$ (See App.~\ref{sec:app:numerics} for extension to larger $n_b$). For all plots, we take $\sigma=\mathrm{relu}$ and we refer to App.~\ref{sec:app:implementation} for details.
\begin{itemize}[noitemsep,leftmargin=1em,wide=0pt]
    \item \textbf{SGD-easy non-symmetric targets ($\ell = \ell^\star =1$) --} The Left section of Fig.~\ref{fig:main:single_index_complete} shows $h^\star(x) = \mathrm{relu}(x)$. Here both SGD and Algorithm~\ref{algo:optimizer_main_step} learn in $T= O(d)$.
    \item \textbf{SGD-easy symmetric targets ($\ell = \ell^\star = 2$) -- } The Center-Left section of Fig.~\ref{fig:main:single_index_complete} shows $ h^\star(x) = \mathrm{He}_2(x)$. Here both SGD and Algorithm~\ref{algo:optimizer_main_step} learn in $T=O(d \log d)$. 
    \item \textbf{SGD-hard non-symmetric targets ($\ell > 2, \ell^\star =1$) -- }  The Center-Right section of Fig.~\ref{fig:main:single_index_complete} shows $ h^\star(x) = \mathrm{He}_3(x)$. Here Algortihm~\ref{algo:optimizer_main_step} learns in $T=O(d)$ while One Pass SGD suffers from the limitations detailed in eq.~\eqref{eq:gba_time_scaling}, i.e. $T= O(d^{\ell-1})$ with $\ell=3$ for this case.
    \item \textbf{SGD-hard symmetric targets ($\ell > 2, \ell^\star =2$) -- } The Right section of Fig.~\ref{fig:main:single_index_complete} shows $h^\star(x) = \mathrm{He}_4(x)$. Here Algortihm~\ref{algo:optimizer_main_step} learns in $T=O(d \log d)$ while One Pass SGD suffers from the time scaling law detailed in eq.~\eqref{eq:gba_time_scaling} $T= O(d^{\ell-1})$.
\end{itemize}

\begin{figure}[t]
    \centering
    \subfigure[{$\scriptstyle h^\star(\vec x) = x_1+x_1x_2$}]{%
        \includegraphics[height=0.22\textwidth, trim={0 8 0 0}, clip]{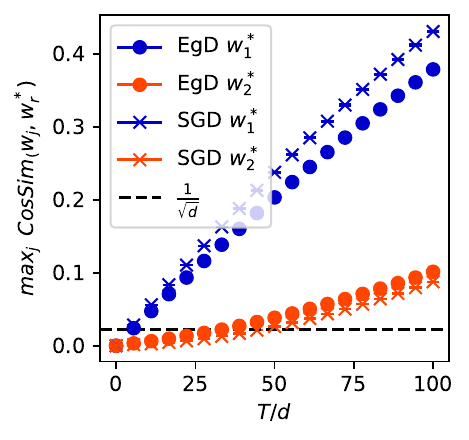}
        \label{fig:multi_index_1}
    }\hfill
    \subfigure[{$\scriptstyle h^\star(\vec x) = \mathrm{sign}{(x_1x_2)}$}]{%
        \includegraphics[height=0.22\textwidth, trim={0 8 0 0}, clip]{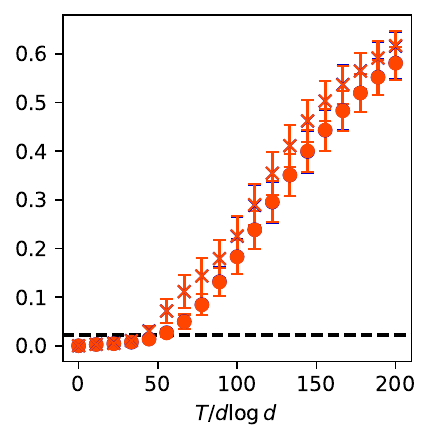}
        \label{fig:multi_index_2}
    }\hfill
    \subfigure[{$\scriptstyle h^\star(\vec x) = x_1+\mathrm{He}_3(x_2)$}]{%
        \includegraphics[height=0.22\textwidth, trim={0 8 0 0}, clip]{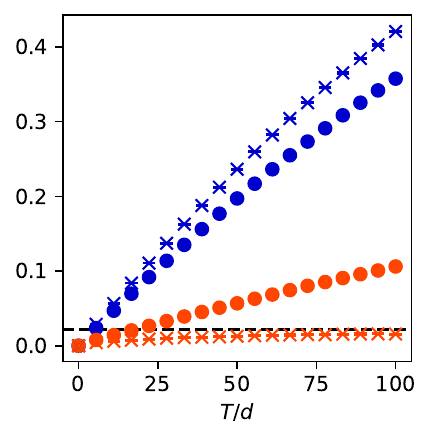}
        \label{fig:multi_index_3}
    }\hfill
    \subfigure[{$\scriptstyle h^\star (\vec x) = x_1x_2x_3$}]{%
        \includegraphics[height=0.22\textwidth, trim={0 8 0 0}, clip]{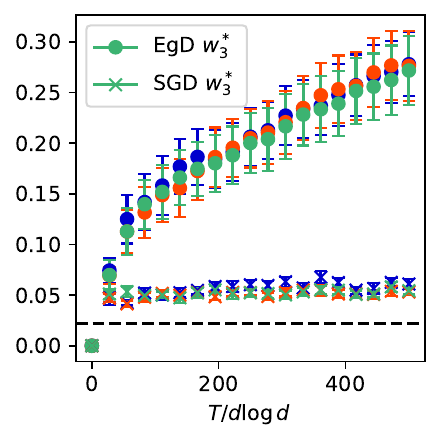}
        \label{fig:multi_index_4}
    }

    \caption{\textbf{Learning multi-index targets --}
    Evolution of the maximum Cosine Similarities attained by EgD (crosses) and SGD (dots) as a function of the normalized iteration time. The Different directions $\{\vec w^\star_r\}_{r \in [k]}$ are identified by colors: $\vec w^\star_1$ (blue), $\vec w^\star_2$ (orange), $\vec w^\star_3$ (green).
     (dashed horizontal line $\frac{1}{\sqrt{d}}$ is a visual guide to place random performance). 
    \textbf{(a)~$(\ell,\ell^\star)=(1,1)$:} both  algorithms learn the first direction in linear time, as well the second one in $O(d)$ steps using the staircase mechanism.
    \textbf{(b) $(\ell,\ell^\star)=(2,2)$:} both  algorithms learn the two directions simultaneously in $O(d\log d)$ steps.
    \textbf{(c)~$(\ell_{\vec w^\star_2},\ell_{\vec w^\star_2}^\star)=(3,1)$:} both algorithms learn $\vec w^\star_1$ in linear time, but only EgD learns $\vec w^\star_2$ in $O(d)$; SGD instead requires $O(d^2)$.
    \textbf{(d) $(\ell,\ell^\star)=(3,2)$:} EgD learns all 3 directions simultaneously in $O(d\log d)$ steps, while SGD needs $O(d^2)$ time.  (Details in App.~\ref{sec:app:implementation}).}
    \label{fig:main:multi_index_complete}
\end{figure}

\section{Multi-Index Model}
We now investigate the superiority of Algorithm~\ref{algo:optimizer_main_step} over the idealized One-Pass SGD training scheme when learning multi-index targets. In this setting, the networks weights can align with only a few select directions from $V^\star$. To quantify this behavior, we define directional versions of the Information and Generative Exponents (see also \citep{troiani2024fundamental}):
\begin{definition}\label{def:ie_pge_multi}
    Let $\vec{v}\in V^\star$. The Information Exponent $\ell_{\vec{v}}$ (resp. Polynomial Generative Exponent $\ell^\star_{\vec{v}}$) of $y$ in the direction $\vec v$ is the smallest $k$ such that
    \[ \Ea{yH_k(\langle \vec{v}, \vec{z} \rangle)} \neq 0 \quad (\text{resp.} \  \Ea{p_{\vec{v}}(y)H_k(\langle \vec{v}, \vec{z} \rangle)} \neq 0 ) \]
    for a polynomial $p_{\vec{v}}$. We also let $\ell = \min_{\vec{v}}\ell_{\vec{v}}$ and $\ell_p^\star = \min_{\vec{v}}\ell^\star_{\vec{v}}$.  The minimal subspace $U^\star \subseteq V^\star$ is defined as the smallest subspace $U$ such that $\ell_\vec{v} > \ell$ (resp. $\ell_{\vec{v}}^\star > \ell_p^\star$) for every $v\in U^\bot$.
\end{definition}
The identification of the class of hard functions has been subject of intense theoretical scrutiny \citep{abbe2021staircase,abbe2022merged,abbe2023sgd,dandi2023twolayer,bietti2023learning,troiani2024fundamental}; and for the problem of initial alignment it was shown to depend on the Information exponent defined above. 
Thus, similarly to the single-index scenario, the time complexity needed for Algorithm~\ref{algo:optimizer_main_step} to weakly recover the target subspace follows the law in eq.~\eqref{eq:gba_time_scaling} for the multi-index information exponent $\ell$. A natural question is thus to ask whether the class of Algorithm~\ref{algo:optimizer_main_step} also allows us to bypass this requirement. We show that this is indeed the case. Consider a neural network as in Equation~\eqref{eq:def_nn}; we assume that the internal learning rate $\rho$ is chosen independently for each neuron:
\begin{assumption}
    For each $j \in [p]$, the parameter $\rho_{0, j}$ is chosen independently from all other neurons, according to a \emph{symmetric} distribution which is absolutely continuous w.r.t the Lebesgue measure.
\end{assumption}

\begin{theorem}\label{thm:multi_index_recovery}
    Let $\delta > 0$, and $\Pi^\star$ the projection operator on $V^\star$. There exists constants $\gamma_0(\delta), C(\delta)$ depending only on $\delta$ such that if we take $p \geq C(\delta)k$ and
    \begin{itemize}
        \item if $\ell_p^\star = 1$, $\gamma = \gamma_0(\delta) d^{-1}$ and $T = C(\delta)d$,
        \item if $\ell_p^\star = 2$, $\gamma = \gamma_0(\delta) (d\log(d))^{-1}$ and $T = C(\delta)d\log(d)^2$,
    \end{itemize}
    then with probability $1 - \delta$, there exists a $t \leq T$ such that at least a positive proportion of the vectors $\vec{w}_{t, i}$ satisfy $\norm{\Pi^\star \vec{w}_{t, i}} \geq \eta$.
Further, if $\ell_p^\star = 1$, then the vectors $\{\Pi^\star \vec{w}_{t, i}\}_{i \in [p]}$ span the minimal subspace $U^\star$.
\end{theorem}

\begin{figure}
    \centering
    \subfigure[ ${\scriptstyle h^\star(\vec x)= x_1 + x_1 \mathrm{He}_3(x_2)}$]{%
        \includegraphics[height=0.23\textwidth, trim={0 8 0 0}, clip]{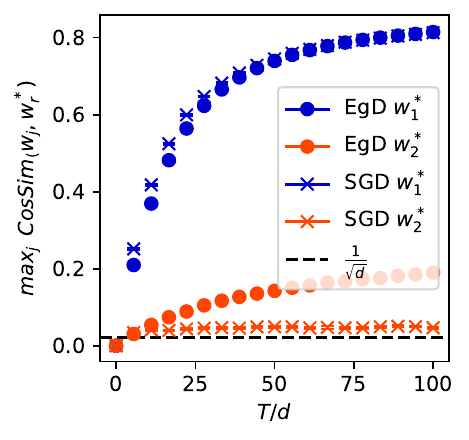}
        \label{fig:staircase_1}
    }
    \hfill \subfigure[${\scriptstyle h_{\mathrm{sign}}^\star(\vec{x}) = \mathrm{sign}(x_1x_2x_3), \quad h^\star_{\mathrm{stair}}(\vec x) = \mathrm{He}_2(x_1) + \mathrm{sign}(x_1x_2x_3)}$]{%
        \includegraphics[height=0.23\textwidth, trim={0 8 0 0}, clip]{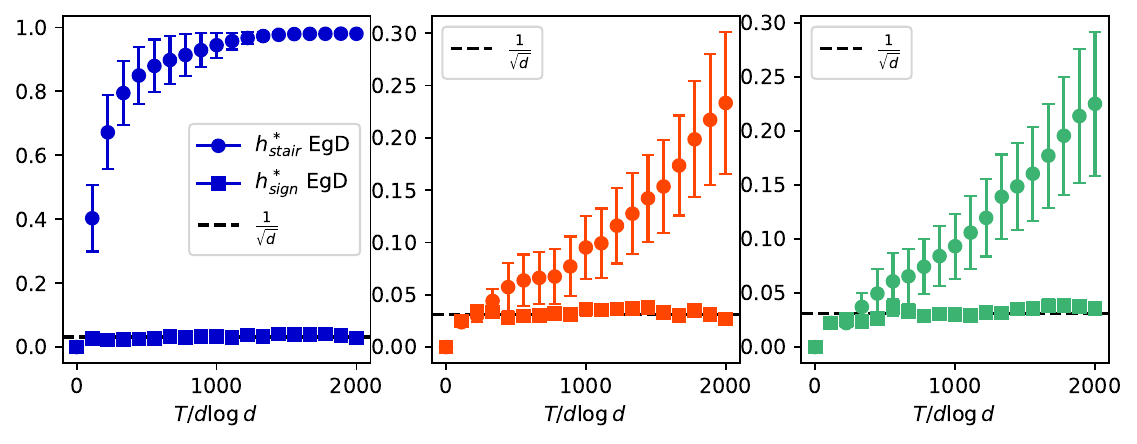}
        \label{fig:staircase_2}
    }
    \caption{\textbf{Hierarchical learning --} Evolution of the maximum Cosine Similarities with different target directions $\{\vec w^\star_r\}_{r \in [k]}$ are identified by different colors: $\vec w^\star_1$ (blue), $\vec w^\star_2$ (orange), $\vec w^\star_3$ (green) as a function of the normalized iteration time.
    \textbf{(a):} both the algorithms learn $\vec w^\star_1$ in linear time, but only EgD can also learn $\vec w^\star_2$ in $O(d)$ via a staircase mechanism; SGD requires another $O(d^2)$ steps to take advantage of the staircase and learn $\vec w^\star_2$.
    \textbf{(b):} EgD performance for the target function $f^\star_{\mathrm{stair}}$ (dots) and $f^\star_{\mathrm{sign}}$(squares); we plot in separate subplots the overlap with different directions to highlight the presence of hierarchical learning mechanism. Learning the first target direction $\vec w^\star_1$ triggers the hierarchical mechanism and EgD is able to weakly recover the full target subspace $\mathrm{Span}(\vec w^\star_1, \vec w^\star_2, \vec w^\star_3)$ in $O(d \log d)$, while this does not happen when removing $\mathrm{He}_2(x_1)$ from the target.
    See App.~\ref{sec:app:implementation} for additional details.}
    \label{fig:main:staircase1}
\end{figure}

\subsection{Numerical investigation with multi-index target}\label{subsec:multi_numerics}

We illustrate the results of Thm.~\ref{thm:multi_index_recovery} numerically by comparing the performance of One-Pass SGD with Alg.~\ref{algo:optimizer_main_step} in Fig~\ref{fig:main:multi_index_complete}. We measure the maximal cosine similarity between the hidden layer neurons and the different rows of the target's weight matrix $W^\star$ as a function of the iteration time. We set without loss of generality the target's directions to the standard basis of $\mathbb{R}^d$ to ease the notation. The algorithms considered are again EgD and vanilla One-Pass SGD (See App.~\ref{sec:app:implementation} for details).
\begin{itemize}[noitemsep,leftmargin=1em,wide=0pt]
\item \textbf{SGD-easy non-symmetric targets ($\ell \!=\! \ell^\star \!=\! 1$) --} Fig.~\ref{fig:main:multi_index_complete}, left, shows $h^\star(x_1,x_2) = x_1 + x_1x_2$. Here both SGD and Alg.~\ref{algo:optimizer_main_step} learn in $T= O(d)$ the directions $(\vec w^\star_1, \vec w^\star_2)$.

\item \textbf{SGD-easy symmetric targets ($\ell \!=\! \ell^\star \!=\! 2$) --} Fig.~\ref{fig:main:multi_index_complete}, center left, shows $h^\star(x_1,x_2) = x_1x_2$.  Here both SGD and Alg.~\ref{algo:optimizer_main_step} learn in $T=O(d \log d)$ the directions $(\vec w^\star_1, \vec w^\star_2)$. 

\item \textbf{Non-symmetric targets with SGD-easy and hard directions ($\ell_{\vec{v}} > 2, \ell^\star_{\vec{v}}=1$) --} Fig.~\ref{fig:main:multi_index_complete}, center-right, shows $h^\star(x_1,x_2) \!=\! x_1 \!+\! \mathrm{He}_3(x_2) $. Here both SGD and Alg.~\ref{algo:optimizer_main_step} learn in $T=O(d)$ the direction $\vec w^\star_1$, that satisfies $\ell_{\vec{w_1^\star}} = \ell_{\vec{w_1}^\star}^\star = 1$. However, since $\ell_{\vec{w_2^\star}} = 3$ while $\ell_{\vec{w_2^\star}}^\star = 1$, One-Pass SGD suffers from the limitations detailed in eq.~\eqref{eq:gba_time_scaling} and requires $\Omega(d^2)$ samples to learn the second direction $\vec w^\star_2$. This contrasts with the behaviour of Alg.~\ref{algo:optimizer_main_step} that learns also $\vec w^\star_2$ in $T=O(d)$ steps.

\item \textbf{SGD-hard symmetric targets ($\ell > 2,  \ell^\star =2$) --} The right section Fig.~\ref{fig:main:multi_index_complete} shows $h^\star(x_1,x_2,x_3) = x_1 x_2 x_3$. The Information Exponent is $\ell=3$, hence vanilla SGD does not learn any direction in the target subspace in $T=O(d \log d)$ iterations. However, Algorithms~\ref{algo:optimizer_main_step} learns in $T=O(d \log d)$ steps all the three directions $\{\vec w^\star_1, \vec w^\star_2, \vec w^\star_3\}$ since the Generative Exponents $\ell_{\vec{w}^\star_r}^\star$ are all equal to two.
\end{itemize}

\subsection{Learning hard functions through a hierarchical mechanism}\label{subsec:staircase}

The investigation of the hierarchical nature of SGD learning has attracted noticeable attention \citep{abbe2021staircase,abbe2022merged,abbe2023sgd, dandi2023twolayer}. In this paper, we portray a completely different picture in terms of computational efficiencies when data repetition is considered in the algorithmic SGD routine. One may wonder if there is a generalization of such a hierarchical learning mechanism to the present novel setting. Indeed for optimal AMP algorithm, a diffrent picture emerges, (called \emph{Grand staircase} in \cite{troiani2024fundamental}). Here we show that coupling between directions can hierarchically guide the learning process as well with SGD.

\begin{figure}[t]
    \centering
    \vspace{-.4cm}
    \includegraphics[width=0.75\linewidth]{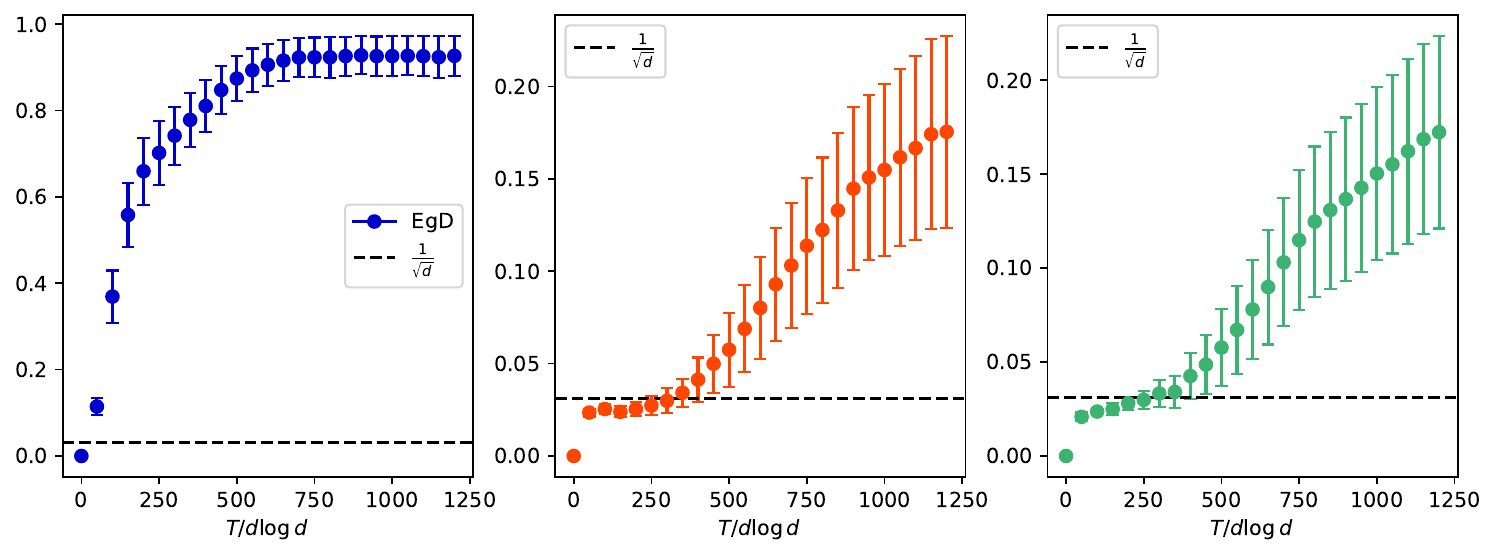}
    \caption{\textbf{Hierarchical learning --} Evolution of the maximum Cosine Similarities attained by EgD with different target directions $\{\vec w^\star_r\}_{r \in [k]}$ are identified by different colors: $\vec w^\star_1$ (blue), $\vec w^\star_2$ (orange), $\vec w^\star_3$ (green) as a function of the normalized iteration time. The target function is $f^\star_{\mathrm{sq-stair}}(\vec z)= \mathrm{He}_4(z_1) + \mathrm{sign}(z_1z_2z_3)$, that is an SQ-staircase (or grand staircase), not a CSQ-staircase. EgD learn $\vec w^\star_1$ in \(O(d\log d)\) steps and use the information to learn the other two directions in another \(O(d\log d)\) steps. SGD (not showed in the plot) cannot take advantage of the staircase mechanism since $\mathrm{He}_4(z_1)$ and $\mathrm{sign}(z_1z_2z_3)$ have information exponent \(\ell = 4\) and \(\ell = 3\) respectively. (See App.~\ref{sec:app:implementation}).}   
    \label{fig:main:staircase2}
\end{figure}
The Left panel in Fig.~\ref{fig:main:staircase1} shows an example of the so-called staircase functions \citep{abbe2021staircase}, i.e. $f^\star(\vec z) = z_1 + z_1 \mathrm{He}_3(z_2)$. While one-pass SGD needs to learn hierarchically first the direction $\vec w^\star_1$ in $T=O(d)$ and then $\vec w^\star_2$ in $T=O(d^2)$, Algorithm~\ref{algo:optimizer_main_step} easily learns both in linear time. This observation could lead to infer that hierarchical learning mechanisms are not present when more realistic training scenarios are considered.

To refute this, we illustrate in the right panel of Fig.~\ref{fig:main:staircase1} a scenario that precisely exemplifies the presence of hierarchical mechanisms within Algorithm~\ref{algo:optimizer_main_step}. We run this algorithm on two different target functions, $f_{\mathrm{sign}}^\star(\vec{z}) = \mathrm{sign}(z_1z_2z_3)$ and $f^\star_{\mathrm{stair}}(\vec z) = \mathrm{He}_2(z_1) + \mathrm{sign}(z_1z_2z_3)$. We show in App. \ref{sec:app:proofs} that $3-$sparse parities of the form $\mathrm{sign}(z_1z_2z_3)$ are hard functions even for SGD with data repetition (Algorithm~\ref{algo:optimizer_main_step}), and indeed they are not learned in (almost) linear time. On the other hand, for the function $f^\star_{\mathrm{stair}}$, our simulations predict that the first direction $\vec w^\star_1$ is learned in $T = O(d \log d)$ iterations (see the rightmost panel of Fig.~\ref{fig:main:multi_index_complete}). However, once the direction $\vec w^\star_1$ is learned, it can be used to obtain order-one correlation with $\{\vec w^\star_2, \vec w^\star_3\}$ again in $T = O(d \log d)$ steps. While the latter example belongs to the class of ``CSQ'' staircase function depicted in \cite{abbe2021staircase}, novel ``SQ'' hierarchical mechanisms arise in our framework, such as for instance the function $f_{\mathrm{sq-stair}}^\star(\vec z) = \mathrm{He}_4(z_1) + \mathrm{sign}(z_1z_2z_3)$ (see Fig.~\ref{fig:main:staircase2}). 

The staircase picture in terms of generative exponent was discussed for AMP-type algorithm in \cite{troiani2024fundamental}. Our study indicate that this new class (they called grand staircase) is not restricted to AMP: We anticipate that grand staircase functions can be efficiently learned by two-layer networks when data reusing is permitted, whereas only standard staircase ones emerge  under single-pass SGD. Additional recent results by \citep{joshi2024complexity}  align with this perspective.

    

\section{Conclusions}
 We have pushed in this paper the boundaries of learning multi-index models in high-dimensional settings using gradient-type algorithms, and shown that, contrary to what was believed, they are very efficient in doing so. We argue in this manuscript with a simple analytically solvable model how to bypass the limitations of the CSQ framework: we process the same sample twice. Crucially, similar phenomenon appears also in standard SGD when training over multiple epochs, with the only difference of not seeing same the data consecutevely. We believe that the model proposed in this manuscript paves the way for the analysis of more realistic training scenario where \textit{correlation} in the data is taken into account, surpassing the limiting i.i.d. scenario.

\section*{Acknowledgements}
Upon completion of this work, we became aware of the interesting paper from \citep{lee2024neural} showing strong recovery with similar sample complexity, also by exploiting the reuse of training data. Our work was conducted independently and simultaneously. The authors would like to thank Nicolas Flammarion, Bruno Loureiro, Nati Srebro, Emanuele Troiani, and Lenka Zdeborov\'a for interesting discussions. This work was supported by the Swiss National Science Foundation under grant SNSF OperaGOST (grant number 200390).

\typeout{}

\bibliography{biblio}

\begin{thebibliography}{46}
\providecommand{\natexlab}[1]{#1}
\providecommand{\url}[1]{\texttt{#1}}
\expandafter\ifx\csname urlstyle\endcsname\relax
  \providecommand{\doi}[1]{doi: #1}\else
  \providecommand{\doi}{doi: \begingroup \urlstyle{rm}\Url}\fi

\bibitem[Mei et~al.(2018)Mei, Montanari, and Nguyen]{mei2018mean}
Song Mei, Andrea Montanari, and Phan-Minh Nguyen.
\newblock A mean field view of the landscape of two-layer neural networks.
\newblock \emph{Proceedings of the National Academy of Sciences}, 115\penalty0 (33):\penalty0 E7665--E7671, 2018.

\bibitem[Chizat and Bach(2018)]{chizat2018global}
Lenaic Chizat and Francis Bach.
\newblock On the global convergence of gradient descent for over-parameterized models using optimal transport.
\newblock \emph{Advances in neural information processing systems}, 31, 2018.

\bibitem[Rotskoff and Vanden-Eijnden(2022)]{rotskoff2018trainability}
Grant Rotskoff and Eric Vanden-Eijnden.
\newblock Trainability and accuracy of artificial neural networks: An interacting particle system approach.
\newblock \emph{Communications on Pure and Applied Mathematics}, 75\penalty0 (9):\penalty0 1889--1935, 2022.
\newblock \doi{https://doi.org/10.1002/cpa.22074}.

\bibitem[Sirignano and Spiliopoulos(2020)]{sirignano2020mean}
Justin Sirignano and Konstantinos Spiliopoulos.
\newblock Mean field analysis of neural networks: A central limit theorem.
\newblock \emph{Stochastic Processes and their Applications}, 130\penalty0 (3):\penalty0 1820--1852, 2020.

\bibitem[{Ben Arous} et~al.(2021){Ben Arous}, Gheissari, and Jagannath]{BenArous2021}
Gerard {Ben Arous}, Reza Gheissari, and Aukosh Jagannath.
\newblock Online stochastic gradient descent on non-convex losses from high-dimensional inference.
\newblock \emph{Journal of Machine Learning Research}, 22\penalty0 (106):\penalty0 1--51, 2021.

\bibitem[Abbe et~al.(2022)Abbe, Boix-Adsera, and Misiakiewicz]{abbe2022merged}
Emmanuel Abbe, Enric Boix-Adsera, and Theodor Misiakiewicz.
\newblock The merged-staircase property: a necessary and nearly sufficient condition for sgd learning of sparse functions on two-layer neural networks.
\newblock In \emph{Conference on Learning Theory}, pages 4782--4887. PMLR, 2022.

\bibitem[Abbe et~al.(2023)Abbe, Adser{\`a}, and Misiakiewicz]{abbe2023sgd}
Emmanuel Abbe, Enric~Boix Adser{\`a}, and Theodor Misiakiewicz.
\newblock Sgd learning on neural networks: leap complexity and saddle-to-saddle dynamics.
\newblock In Gergely Neu and Lorenzo Rosasco, editors, \emph{Proceedings of Thirty Sixth Conference on Learning Theory}, volume 195 of \emph{Proceedings of Machine Learning Research}, pages 2552--2623. PMLR, 12--15 Jul 2023.
\newblock URL \url{https://proceedings.mlr.press/v195/abbe23a.html}.

\bibitem[Damian et~al.(2022)Damian, Lee, and Soltanolkotabi]{damian2022neural}
Alexandru Damian, Jason Lee, and Mahdi Soltanolkotabi.
\newblock Neural networks can learn representations with gradient descent.
\newblock In Po-Ling Loh and Maxim Raginsky, editors, \emph{Proceedings of Thirty Fifth Conference on Learning Theory}, volume 178 of \emph{Proceedings of Machine Learning Research}, pages 5413--5452. PMLR, 02--05 Jul 2022.

\bibitem[Damian et~al.(2023)Damian, Nichani, Ge, and Lee]{damian_2023_smoothing}
Alex Damian, Eshaan Nichani, Rong Ge, and Jason~D. Lee.
\newblock Smoothing the landscape boosts the signal for {SGD:} optimal sample complexity for learning single index models.
\newblock In Alice Oh, Tristan Naumann, Amir Globerson, Kate Saenko, Moritz Hardt, and Sergey Levine, editors, \emph{Advances in Neural Information Processing Systems 36: Annual Conference on Neural Information Processing Systems 2023, NeurIPS 2023, New Orleans, LA, USA, December 10 - 16, 2023}, 2023.
\newblock URL \url{http://papers.nips.cc/paper\_files/paper/2023/hash/02763667a5761ff92bb15d8751bcd223-Abstract-Conference.html}.

\bibitem[Dandi et~al.(2024{\natexlab{a}})Dandi, Krzakala, Loureiro, Pesce, and Stephan]{dandi2023twolayer}
Yatin Dandi, Florent Krzakala, Bruno Loureiro, Luca Pesce, and Ludovic Stephan.
\newblock How two-layer neural networks learn, one (giant) step at a time.
\newblock \emph{Journal of Machine Learning Research}, 25\penalty0 (349):\penalty0 1--65, 2024{\natexlab{a}}.
\newblock URL \url{http://jmlr.org/papers/v25/23-1543.html}.

\bibitem[Bietti et~al.(2023)Bietti, Bruna, and Pillaud-Vivien]{bietti2023learning}
Alberto Bietti, Joan Bruna, and Loucas Pillaud-Vivien.
\newblock On learning gaussian multi-index models with gradient flow.
\newblock \emph{arXiv preprint arXiv:2310.19793}, 2023.

\bibitem[Ba et~al.(2023)Ba, Erdogdu, Suzuki, Wang, and Wu]{ba2023learning}
Jimmy Ba, Murat~A Erdogdu, Taiji Suzuki, Zhichao Wang, and Denny Wu.
\newblock Learning in the presence of low-dimensional structure: A spiked random matrix perspective.
\newblock In A.~Oh, T.~Naumann, A.~Globerson, K.~Saenko, M.~Hardt, and S.~Levine, editors, \emph{Advances in Neural Information Processing Systems}, volume~36, pages 17420--17449. Curran Associates, Inc., 2023.
\newblock URL \url{https://proceedings.neurips.cc/paper_files/paper/2023/file/38a1671ab0747b6ffe4d1c6ef117a3a9-Paper-Conference.pdf}.

\bibitem[Moniri et~al.(2024)Moniri, Lee, Hassani, and Dobriban]{moniri2023theory}
Behrad Moniri, Donghwan Lee, Hamed Hassani, and Edgar Dobriban.
\newblock A theory of non-linear feature learning with one gradient step in two-layer neural networks.
\newblock In \emph{Proceedings of the 41st International Conference on Machine Learning}, 2024.

\bibitem[Mousavi-Hosseini et~al.(2023)Mousavi-Hosseini, Wu, Suzuki, and Erdogdu]{mousavihosseini2023gradientbased}
Alireza Mousavi-Hosseini, Denny Wu, Taiji Suzuki, and Murat~A Erdogdu.
\newblock Gradient-based feature learning under structured data.
\newblock In A.~Oh, T.~Naumann, A.~Globerson, K.~Saenko, M.~Hardt, and S.~Levine, editors, \emph{Advances in Neural Information Processing Systems}, volume~36, pages 71449--71485. Curran Associates, Inc., 2023.
\newblock URL \url{https://proceedings.neurips.cc/paper_files/paper/2023/file/e21955c93dede886af1d0d362c756757-Paper-Conference.pdf}.

\bibitem[Zweig and Bruna(2024)]{zweig2023symmetric}
Aaron Zweig and Joan Bruna.
\newblock Symmetric single index learning.
\newblock In \emph{The Twelfth International Conference on Learning Representations}, 2024.

\bibitem[Barz and Denzler(2020)]{barz2020we}
Bj{\"o}rn Barz and Joachim Denzler.
\newblock Do we train on test data? purging cifar of near-duplicates.
\newblock \emph{Journal of Imaging}, 6\penalty0 (6):\penalty0 41, 2020.

\bibitem[Zhang et~al.(2019)Zhang, Lucas, Ba, and Hinton]{lookahead}
Michael Zhang, James Lucas, Jimmy Ba, and Geoffrey~E Hinton.
\newblock Lookahead optimizer: k steps forward, 1 step back.
\newblock In H.~Wallach, H.~Larochelle, A.~Beygelzimer, F.~d\textquotesingle Alch\'{e}-Buc, E.~Fox, and R.~Garnett, editors, \emph{Advances in Neural Information Processing Systems}, volume~32. Curran Associates, Inc., 2019.
\newblock URL \url{https://proceedings.neurips.cc/paper_files/paper/2019/file/90fd4f88f588ae64038134f1eeaa023f-Paper.pdf}.

\bibitem[Korpelevi\v{c}(1976)]{Korpelevich1976TheEM}
G.~M. Korpelevi\v{c}.
\newblock An extragradient method for finding saddle points and for other problems.
\newblock \emph{\`Ekonom. i Mat. Metody}, 12\penalty0 (4):\penalty0 747--756, 1976.
\newblock ISSN 0424-7388.

\bibitem[Foret et~al.(2021)Foret, Kleiner, Mobahi, and Neyshabur]{foret2021sharpnessaware}
Pierre Foret, Ariel Kleiner, Hossein Mobahi, and Behnam Neyshabur.
\newblock Sharpness-aware minimization for efficiently improving generalization.
\newblock In \emph{International Conference on Learning Representations}, 2021.

\bibitem[Dandi et~al.(2024{\natexlab{b}})Dandi, Troiani, Arnaboldi, Pesce, Zdeborova, and Krzakala]{dandi2024benefits}
Yatin Dandi, Emanuele Troiani, Luca Arnaboldi, Luca Pesce, Lenka Zdeborova, and Florent Krzakala.
\newblock The benefits of reusing batches for gradient descent in two-layer networks: Breaking the curse of information and leap exponents.
\newblock In Ruslan Salakhutdinov, Zico Kolter, Katherine Heller, Adrian Weller, Nuria Oliver, Jonathan Scarlett, and Felix Berkenkamp, editors, \emph{Proceedings of the 41st International Conference on Machine Learning}, volume 235 of \emph{Proceedings of Machine Learning Research}, pages 9991--10016. PMLR, 21--27 Jul 2024{\natexlab{b}}.
\newblock URL \url{https://proceedings.mlr.press/v235/dandi24a.html}.

\bibitem[Abbe et~al.(2021)Abbe, Boix-Adsera, Brennan, Bresler, and Nagaraj]{abbe2021staircase}
Emmanuel Abbe, Enric Boix-Adsera, Matthew~S Brennan, Guy Bresler, and Dheeraj Nagaraj.
\newblock The staircase property: How hierarchical structure can guide deep learning.
\newblock \emph{Advances in Neural Information Processing Systems}, 34:\penalty0 26989--27002, 2021.

\bibitem[Mondelli and Montanari(2018)]{mondelli2018fundamental}
Marco Mondelli and Andrea Montanari.
\newblock Fundamental limits of weak recovery with applications to phase retrieval.
\newblock In \emph{Conference On Learning Theory}, pages 1445--1450. PMLR, 2018.

\bibitem[Luo et~al.(2019)Luo, Alghamdi, and Lu]{luo2019optimal}
Wangyu Luo, Wael Alghamdi, and Yue~M Lu.
\newblock Optimal spectral initialization for signal recovery with applications to phase retrieval.
\newblock \emph{IEEE Transactions on Signal Processing}, 67\penalty0 (9):\penalty0 2347--2356, 2019.

\bibitem[Maillard et~al.(2020)Maillard, Loureiro, Krzakala, and Zdeborov\'{a}]{maillard2020phase}
Antoine Maillard, Bruno Loureiro, Florent Krzakala, and Lenka Zdeborov\'{a}.
\newblock Phase retrieval in high dimensions: Statistical and computational phase transitions.
\newblock In H.~Larochelle, M.~Ranzato, R.~Hadsell, M.F. Balcan, and H.~Lin, editors, \emph{Advances in Neural Information Processing Systems}, volume~33, pages 11071--11082. Curran Associates, Inc., 2020.
\newblock URL \url{https://proceedings.neurips.cc/paper_files/paper/2020/file/7ec0dbeee45813422897e04ad8424a5e-Paper.pdf}.

\bibitem[Chen and Meka(2020)]{chen2020learning}
Sitan Chen and Raghu Meka.
\newblock Learning polynomials in few relevant dimensions.
\newblock In \emph{Conference on Learning Theory}, pages 1161--1227. PMLR, 2020.

\bibitem[Arnaboldi et~al.(2023)Arnaboldi, Stephan, Krzakala, and Loureiro]{arnaboldi.stephan.ea_2023_high}
Luca Arnaboldi, Ludovic Stephan, Florent Krzakala, and Bruno Loureiro.
\newblock From high-dimensional \& mean-field dynamics to dimensionless odes: A unifying approach to sgd in two-layers networks.
\newblock In Gergely Neu and Lorenzo Rosasco, editors, \emph{Proceedings of Thirty Sixth Conference on Learning Theory}, volume 195 of \emph{Proceedings of Machine Learning Research}, pages 1199--1227. PMLR, 12--15 Jul 2023.
\newblock URL \url{https://proceedings.mlr.press/v195/arnaboldi23a.html}.

\bibitem[Damian et~al.(2024)Damian, Pillaud-Vivien, Lee, and Bruna]{damian2024computational}
Alex Damian, Loucas Pillaud-Vivien, Jason~D Lee, and Joan Bruna.
\newblock Computational-statistical gaps in gaussian single-index models.
\newblock \emph{arXiv preprint arXiv:2403.05529}, 2024.

\bibitem[Troiani et~al.(2024)Troiani, Dandi, Defilippis, Zdeborov{\'a}, Loureiro, and Krzakala]{troiani2024fundamental}
Emanuele Troiani, Yatin Dandi, Leonardo Defilippis, Lenka Zdeborov{\'a}, Bruno Loureiro, and Florent Krzakala.
\newblock Fundamental limits of weak learnability in high-dimensional multi-index models.
\newblock \emph{arXiv preprint arXiv:2405.15480}, 2024.

\bibitem[Saad and Solla(1995)]{saad.solla_1995_line}
David Saad and Sara~A. Solla.
\newblock On-line learning in soft committee machines.
\newblock \emph{Physical Review E}, 52\penalty0 (4):\penalty0 4225--4243, October 1995.
\newblock \doi{10.1103/PhysRevE.52.4225}.

\bibitem[Coolen and Saad(2000)]{PhysRevE.62.5444}
A.~C.~C. Coolen and D.~Saad.
\newblock Dynamics of learning with restricted training sets.
\newblock \emph{Phys. Rev. E}, 62:\penalty0 5444--5487, Oct 2000.
\newblock \doi{10.1103/PhysRevE.62.5444}.
\newblock URL \url{https://link.aps.org/doi/10.1103/PhysRevE.62.5444}.

\bibitem[Coolen et~al.(2000)Coolen, Saad, and Xiong]{A.C.C.Coolen_2000}
A.~C.~C. Coolen, D.~Saad, and Yuan-Sheng Xiong.
\newblock On-line learning from restricted training sets in multilayer neural networks.
\newblock \emph{Europhysics Letters}, 51\penalty0 (6):\penalty0 691, sep 2000.
\newblock \doi{10.1209/epl/i2000-00394-5}.
\newblock URL \url{https://dx.doi.org/10.1209/epl/i2000-00394-5}.

\bibitem[Lee et~al.(2024)Lee, Oko, Suzuki, and Wu]{lee2024neural}
Jason~D. Lee, Kazusato Oko, Taiji Suzuki, and Denny Wu.
\newblock Neural network learns low-dimensional polynomials with {SGD} near the information-theoretic limit.
\newblock In \emph{Thirty-eighth Annual Conference on Neural Information Processing Systems}, 2024.

\bibitem[Ba et~al.(2022)Ba, Erdogdu, Suzuki, Wang, Wu, and Yang]{ba2022high}
Jimmy Ba, Murat~A Erdogdu, Taiji Suzuki, Zhichao Wang, Denny Wu, and Greg Yang.
\newblock High-dimensional asymptotics of feature learning: How one gradient step improves the representation.
\newblock In S.~Koyejo, S.~Mohamed, A.~Agarwal, D.~Belgrave, K.~Cho, and A.~Oh, editors, \emph{Advances in Neural Information Processing Systems}, volume~35, pages 37932--37946. Curran Associates, Inc., 2022.

\bibitem[Bietti et~al.(2022)Bietti, Bruna, Sanford, and Song]{Bietti2022}
Alberto Bietti, Joan Bruna, Clayton Sanford, and Min~Jae Song.
\newblock Learning single-index models with shallow neural networks.
\newblock In S.~Koyejo, S.~Mohamed, A.~Agarwal, D.~Belgrave, K.~Cho, and A.~Oh, editors, \emph{Advances in Neural Information Processing Systems}, volume~35, pages 9768--9783. Curran Associates, Inc., 2022.

\bibitem[O'Donnell(2014)]{odonnell_2014_analysis}
Ryan O'Donnell.
\newblock \emph{Analysis of {Boolean} {Functions}}.
\newblock Cambridge University Press, Cambridge, 2014.
\newblock ISBN 9781107038325.
\newblock \doi{10.1017/CBO9781139814782}.

\bibitem[Barbier et~al.(2019)Barbier, Krzakala, Macris, Miolane, and Zdeborov{\'a}]{barbier2019optimal}
Jean Barbier, Florent Krzakala, Nicolas Macris, L{\'e}o Miolane, and Lenka Zdeborov{\'a}.
\newblock Optimal errors and phase transitions in high-dimensional generalized linear models.
\newblock \emph{Proceedings of the National Academy of Sciences}, 116\penalty0 (12):\penalty0 5451--5460, 2019.

\bibitem[Celentano et~al.(2021)Celentano, Cheng, and Montanari]{celentano2021highdimensional}
Michael Celentano, Chen Cheng, and Andrea Montanari.
\newblock The high-dimensional asymptotics of first order methods with random data.
\newblock \emph{arXiv:2112.07572}, 2021.

\bibitem[Kou et~al.(2024)Kou, Chen, Gu, and Kakade]{kou2024matching}
Yiwen Kou, Zixiang Chen, Quanquan Gu, and Sham~M. Kakade.
\newblock Matching the statistical query lower bound for \$k\$-sparse parity problems with sign stochastic gradient descent.
\newblock In \emph{Thirty-eighth Annual Conference on Neural Information Processing Systems}, 2024.

\bibitem[Jacot et~al.(2018)Jacot, Gabriel, and Hongler]{jacot2018neural}
Arthur Jacot, Franck Gabriel, and Cl{\'e}ment Hongler.
\newblock Neural tangent kernel: Convergence and generalization in neural networks.
\newblock \emph{Advances in neural information processing systems}, 31, 2018.

\bibitem[Ghorbani et~al.(2020)Ghorbani, Mei, Misiakiewicz, and Montanari]{Ghorbani2020}
Behrooz Ghorbani, Song Mei, Theodor Misiakiewicz, and Andrea Montanari.
\newblock When do neural networks outperform kernel methods?
\newblock In H.~Larochelle, M.~Ranzato, R.~Hadsell, M.F. Balcan, and H.~Lin, editors, \emph{Advances in Neural Information Processing Systems}, volume~33, pages 14820--14830. Curran Associates, Inc., 2020.

\bibitem[Dudeja and Hsu(2018)]{pmlr-v75-dudeja18a}
Rishabh Dudeja and Daniel Hsu.
\newblock Learning single-index models in gaussian space.
\newblock In Sébastien Bubeck, Vianney Perchet, and Philippe Rigollet, editors, \emph{Proceedings of the 31st Conference On Learning Theory}, volume~75 of \emph{Proceedings of Machine Learning Research}, pages 1887--1930. PMLR, 06--09 Jul 2018.
\newblock URL \url{https://proceedings.mlr.press/v75/dudeja18a.html}.

\bibitem[Atanasov et~al.(2022)Atanasov, Bordelon, and Pehlevan]{atanasov2022neural}
Alexander Atanasov, Blake Bordelon, and Cengiz Pehlevan.
\newblock Neural networks as kernel learners: The silent alignment effect.
\newblock In \emph{International Conference on Learning Representations}, 2022.

\bibitem[Petrini et~al.(2022)Petrini, Cagnetta, Vanden{-}Eijnden, and Wyart]{petrini2022learning}
Leonardo Petrini, Francesco Cagnetta, Eric Vanden{-}Eijnden, and Matthieu Wyart.
\newblock Learning sparse features can lead to overfitting in neural networks.
\newblock In Sanmi Koyejo, S.~Mohamed, A.~Agarwal, Danielle Belgrave, K.~Cho, and A.~Oh, editors, \emph{Advances in Neural Information Processing Systems 35: Annual Conference on Neural Information Processing Systems 2022, NeurIPS 2022, New Orleans, LA, USA, November 28 - December 9, 2022}, 2022.
\newblock URL \url{http://papers.nips.cc/paper\_files/paper/2022/hash/3d3a9e085540c65dd3e5731361f9320e-Abstract-Conference.html}.

\bibitem[Joshi et~al.(2024)Joshi, Misiakiewicz, and Srebro]{joshi2024complexity}
Nirmit Joshi, Theodor Misiakiewicz, and Nathan Srebro.
\newblock On the complexity of learning sparse functions with statistical and gradient queries.
\newblock In \emph{Thirty-eighth Annual Conference on Neural Information Processing Systems}, 2024.

\bibitem[Grad(1949)]{grad_1949_note}
Harold Grad.
\newblock Note on {N}-dimensional hermite polynomials.
\newblock \emph{Communications on Pure and Applied Mathematics}, 2\penalty0 (4):\penalty0 325--330, 1949.
\newblock ISSN 1097-0312.
\newblock \doi{10.1002/cpa.3160020402}.

\bibitem[Arnaboldi et~al.(2024)Arnaboldi, Dandi, Krzakala, Loureiro, Pesce, and Stephan]{arnaboldi2024online}
Luca Arnaboldi, Yatin Dandi, Florent Krzakala, Bruno Loureiro, Luca Pesce, and Ludovic Stephan.
\newblock Online learning and information exponents: The importance of batch size and time/complexity tradeoffs.
\newblock In \emph{International Conference on Machine Learning}, 2024.

\end{thebibliography}
\bibliographystyle{unsrtnat}

\newpage 

\appendix
\newpage 

\section{Theoretical framework}
\label{sec:app:framework}

\subsection{Difference inequalities for vector-valued processes}

We begin with a lemma that generalizes \cite[Proposition 4.1]{BenArous2021}. Let $\Phi: \R^d \to \R^d$ be a random function, and consider the process $(\vec{u}_t)_{t \geq 0} \in \mathbb{S}^{d-1}$ given by
\[ \vec{u}_{t+1} = \frac{\vec{u}_t + \gamma \Phi(\vec{u}_t)}{\norm{\vec{u}_t + \gamma \Phi(\vec{u}_t)}}. \]
We make the following assumptions:
\begin{assumption}\label{assump:app:initialization}
  The initial value of $\vec{u_0} \in \mathbb{S}^{d-1}$ is approximately isotropic, in the sense that there exists a constant $c > 0$ such that
  \[ \Ea{\vec{u_0}\vec{u_0}^\top} \succ \frac{c}d I\]
\end{assumption}
\begin{assumption}\label{assump:app:concentration}
  There exist constants $C, \iota > 0$ such that
  \begin{align}
    \sup_{\vec{u} \in \mathbb{S}^{d-1}}\, \norm{\Ea{\Phi(\vec u)\Phi(\vec u)^\top}} &\leq C \\
    \sup_{\vec{u} \in \mathbb{S}^{d-1}}\, \Ea{\norm{\Phi(\vec u)}^{4 + \iota}} &\leq C N^{\frac{4 + \iota}2}
  \end{align}
  Further, we have $\langle \vec{u}, \Phi(\vec{u}) \rangle = 0$ a.s. for any $\vec{u} \in \mathbb{S}^{d-1}$.
\end{assumption}

We fix a vector $\vec{v}$ and consider the random process $m_t(\vec{v}) := \langle \vec{u}_t, \vec{v} \rangle$. For any $s \geq 0$, we define the stopping times
\[ \tau_\theta^+(\vec{v}) = \inf \{t \geq 0 : m_t(\vec{v}) \geq \theta \} \quad \text{and} \quad \tau_\theta^-(\vec{v}) = \inf \{t \geq 0 : m_t(\vec{v}) \leq \theta \}  \]
\begin{proposition}\label{prop:app:diff_equation}
  Fix an $\alpha > 0$. There exist constants $c(\delta, \alpha), \kappa(\delta) > 0$ such that if
  \begin{equation}\label{eq:app:bound_gamma_T}
    \gamma \leq c(\delta, \alpha)d^{-1} \quad \text{and} \quad T \leq c(\delta, \alpha) \gamma^{-2} d^{-1},
  \end{equation}
  then, with probability $1 - \delta$, the following holds. The initialization $m_0(\vec{v})$ satisfies
  \begin{equation}
    |m_0(\vec{v})| \geq \frac{\kappa}{\sqrt{d}},
  \end{equation}
  and:
  \begin{itemize}
    \item if $m_0(v) > 0$, for any $t \leq T \wedge \tau_0^-(\vec{v})$,
      \begin{equation}\label{eq:app:diff_equation_positive}
        (1 - \alpha) m_0(\vec{v}) + \gamma\sum_{s=0}^{t-1} \langle \Psi^+(\vec{u}_s), \vec{v} \rangle \leq m_t(\vec{v}) \leq (1 + \alpha) m_0(\vec{v}) + \gamma\sum_{s=0}^{t-1} \langle \Psi(\vec{u}_s), \vec{v} \rangle,
      \end{equation}
    \item if $m_0(v) < 0$, for any $t \leq T \wedge \tau_0^+(\vec{v})$,
      \begin{equation}\label{eq:app:diff_equation_negative}
        (1 + \alpha) m_0(\vec{v}) + \gamma\sum_{s=0}^{t-1} \langle \Psi(\vec{u}_s), \vec{v} \rangle \leq m_t(\vec{v}) \leq (1 - \alpha) m_0(\vec{v}) + \gamma\sum_{s=0}^{t-1} \langle \Psi^-(\vec{u}_s), \vec{v} \rangle,
      \end{equation}
  \end{itemize}
  where $\Psi, \Psi^{\pm}$ is a deterministic function equal to
  \[ \Psi(\vec{u}) = \Ea{\Phi(\vec{u})} \quad \text{and} \quad \Psi^{\pm}(\vec{u}) = \Psi(\vec{u}) \mp \gamma \Ea{\norm{\Phi(\vec{u})}^2}\vec{u}.\]
\end{proposition}

\begin{proof}
  We assume for simplicity that $m_0(\vec{v}) > 0$; the case where $m_0(\vec{v}) > 0$ is treated identically. From Assumption~\ref{assump:app:initialization}, by a simple Markov inequality, there exists a constant $\kappa(\delta) > 0$ such that
  \begin{equation} \label{eq:app:large_init_event}
    \dP\left(|m_0(\vec{v})| \geq \frac{\kappa}{\sqrt{d}}\right) \geq 1 - \frac{\delta}3.
  \end{equation}
  The update equation for $m_t(\vec{v})$  reads
  \[ m_{t+1}(\vec{v}) = \frac{m_t(\vec{v}) + \gamma \langle \Phi(\vec{u}_t), \vec{v} \rangle}{\sqrt{1 + \gamma^2  \norm{\Phi(\vec{u}_t)}^2}}. \]
  We then have two cases:
  \begin{itemize}
    \item either $m_t(\vec{v}) + \gamma \langle \Phi(\vec{u}_t), \vec{v} \rangle < 0$, which implies that $t+1 = \tau_0^-(\vec{v})$ and there is nothing to prove,
    \item or we can use the inequality $\frac1{\sqrt{1+x}} \geq 1 - x$, which implies that
      \begin{align*}
        m_{t+1}(\vec{v}) &\geq (m_t(\vec{v}) + \gamma \langle \Phi(\vec{u}_t), \vec{v} \rangle)(1 - \gamma^2  \norm{\Phi(u_t)}^2) \\
        &= (1 - \gamma^2  \norm{\Phi(u_t)}^2)m_t + \gamma \langle \Phi(\vec{u}_t), \vec{v} \rangle - \gamma^3 \langle \Phi(\vec{u}_t), \vec{v} \rangle  \norm{\Phi(u_t)}^2
      \end{align*}
  \end{itemize}
  By recursion, deterministically,
  \[ m_t(\vec{v}) \geq m_0(\vec{v}) + \gamma\sum_{s = 0}^{t-1} \left(\langle \Phi(\vec{u}_s), \vec{v} \rangle - \gamma\norm{\Phi(\vec{u}_s)}^2\langle \vec{u}_s, \vec{v} \rangle\right) - \sum_{s=0}^{t-1}\gamma^3 \left|\langle \Phi(\vec{u}_s), \vec{v} \rangle\right|\norm{\Phi(\vec{u}_s)}^2 \]

  Since Assumption~\ref{assump:app:concentration} implies Assumption B in \cite{BenArous2021}, for a small enough choice of $c(\delta, \alpha)$, one has
  \[ \dP\left( \sum_{t=0}^T\gamma^3 \left|\langle \Phi(\vec{u}_t), \vec{v} \rangle\right| \norm{\Phi(\vec{u}_s)}^2 \leq \frac{\alpha\kappa}{2\sqrt{d}} \right) \geq 1 - \frac\delta6. \]

  Additionnally, the same assumption implies that
  \[ \operatorname{Var}(\langle \Phi(\vec{u}_s), \vec{v} \rangle)  \leq \Ea{\langle \Phi(\vec{u}_s), \vec{v} \rangle^2} \leq C_1 \quad \text{and} \quad \operatorname{Var}(\gamma\norm{\Phi(\vec{u}_s)}^2) \leq \gamma^2\Ea{\norm{\Phi(\vec{u}_s)}^4} \leq c(\delta)^2 C_1.\]
  We can then apply Doob's martingale inequality: there exists a constant $C(\delta)$ such that with probability at least $1 - \frac\delta6$
  \[ \sup_{t \leq T} \left| \gamma\sum_{s = 0}^t  \left( \langle \Phi(\vec{u}_s), \vec{v} \rangle - \gamma\norm{\Phi(\vec{u}_s)}^2\langle \vec{u}_s, \vec{v} \rangle - \Ea{\langle \Phi(\vec{u}_s), \vec{v} \rangle - \gamma\norm{\Phi(\vec{u}_s)}^2\langle \vec{u}_s, \vec{v} \rangle} \right) \right| \leq C(\delta)\gamma\sqrt{T}. \]
  The bound~\eqref{eq:app:bound_gamma_T} on $T$ implies that we can choose $c(\delta, \alpha)$ small enough so that
  \[ C(\delta)\gamma\sqrt{T} \leq \frac{C(\delta) \sqrt{c(\delta, \alpha)}}{\sqrt{d}} \leq \frac{\alpha\kappa}{2\sqrt{d}}.\]
  Combining all those inequalities with a union bound, under the event of Eq.~\eqref{eq:app:large_init_event},
  \begin{equation}\label{eq:app:diff_equation_lower}
    \dP\left( m_t(\vec{v}) \geq (1 - \alpha) m_0(\vec{v}) + \gamma\sum_{s=0}^{t-1} \langle \Psi^+(\vec{u}_s), \vec{v} \rangle \quad \forall t < T \wedge \tau_0^-(\vec{v})\right) \geq 1 - \frac\delta3,
  \end{equation}
  having used that $\langle \Psi^+(\vec{u}), \vec{v} \rangle = \Ea{\langle \Phi(\vec{u}), \vec{v} \rangle - \gamma\norm{\Phi(\vec{u})}^2\langle \vec{u}, \vec{v} \rangle}$.

  The upper bound follows similarly:  using this time the fact that for $x > 0$ we have $\frac1{\sqrt{1+x}} \leq 1$, we have under \eqref{eq:app:large_init_event}
  \begin{equation}\label{eq:app:diff_equation_upper}
    \dP\left( m_t(\vec{v}) \leq (1 + \alpha) m_0(\vec{v}) + \gamma\sum_{s=0}^{t-1} \langle \Psi(\vec{u}_s), \vec{v} \rangle \quad \forall t < T \wedge \tau_0^-(\vec{v})\right) \geq 1 - \frac\delta3.
  \end{equation}
  A union bound between the events of Eqs.~\eqref{eq:app:large_init_event}, \eqref{eq:app:diff_equation_lower}, \eqref{eq:app:diff_equation_upper} concludes the proof.
\end{proof}

\subsection{Low-dimensional sufficient statistics}

In \cite{BenArous2021}, the function $\Psi(\vec{u})$ can be written as $\Psi(\vec{u}) = \phi(\langle \vec{u}, \vec{v} \rangle)\vec{v}$. As a result, the bounds of Proposition~\ref{prop:app:diff_equation} are actually a closed function of $m_t(\vec{v})$. We generalize this phenomenon to a larger class:

\begin{assumption}\label{assump:app:low_dim}
  There exists a $q > 0$ such that the function $\Psi$ is $q$-approximately $k$-dimensional, in the sense that there exist a function $\psi: \R^k \to \R^k$ and an orthonormal matrix $W^\star \in \R^{k \times d}$ such that for any $\vec{u}\in\R^d$,
  \[ \norm{ W^\star \Psi(\vec{u}) - \psi(W^\star \vec{u})} \leq \left( \norm{W^\star \vec{u}} \vee \frac1{\sqrt{d}} \right) \cdot q. \]
\end{assumption}

By an abuse of notation, we shall sometimes view $\psi$ as a function from $V^\star$ to itself, where $V^\star$ is the span of the columns of $W^\star$. However, Assumption~\ref{assump:app:low_dim} ensures that all quantities related to $\psi$ are only depending on the (oftentimes fixed) dimension $k$ instead of $d$.

It turns out that in this case, we can control accurately the trajectory of $W^\star \vec{u}$, at least when $f$ is not too flat around $\vec{0}$. Let $\Pi^\star = (W^{\star})^{\top}W^\star$ be the projection operator on $V^\star$, and define the stopping time
\[ \tau_\theta = \min \{t \geq 0 : \norm{\Pi^\star \vec{u}} \geq \theta \}. \]

\begin{proposition}\label{prop:app:low_dim_linear}
  Assume that Condition \ref{assump:app:low_dim} holds with $q$ small enough, and that $\langle \vec{u}_0, \psi(\vec{0}) \rangle > 0$ (in particular, $\psi(\vec{0}) \neq 0$). For any $\delta, \epsilon > 0$, there exist constants $c(\delta, \epsilon), \eta(\epsilon)$ such that if
  \begin{equation}
    \gamma \leq c d^{-1} \quad \text{and} \quad T \leq c  \gamma^{-2} d^{-1},
  \end{equation}
  then
  \begin{equation}
    \dP\left(\norm{\Pi^\star \vec{u}_t - \gamma t \psi(\vec{0})} \leq \sqrt{k} \eta \epsilon \quad \forall t \leq T \wedge \tau_\eta \right) \geq 1 - \delta.
  \end{equation}
\end{proposition}

\begin{proof}
  Let $\vec{v}_1, \dots, \vec{v}_k$ be an orthonormal basis of $V^\star$, with $\vec{v}_1$ aligned with $\psi(\vec{0})$. By applying Proposition~\ref{prop:app:diff_equation} to $\delta' = \frac{\delta}{2k}$ and $\alpha = \frac12$, with probability $1 - \frac\delta2$, for any $i \in [k]$ and $t \leq T \wedge \tau^{\pm}(\vec{v}_i)$, $m_t(\vec{v}_i)$ satisfies the inequality~\eqref{eq:app:diff_equation_positive} or \eqref{eq:app:diff_equation_negative} (depending on the sign of $m_0(\vec{v_i}))$. In particular, for $\vec{v}_1 = \psi(\vec{0})$, since we assumed that the initial sign is positive, we have
  \begin{equation}\label{eq:app:u_psi_ineq}
    \frac12 \langle \vec{u}_0, \psi(\vec{0}) \rangle + \gamma\sum_{s=0}^{t-1} \langle \Psi^+(\vec{u}_s), \psi(\vec{0}) \rangle \leq \langle \vec{u}_t, \psi(\vec{0}) \rangle \leq  \frac32 \langle \vec{u}_0, \psi(\vec{0}) \rangle + \gamma\sum_{s=0}^{t-1} \langle \Psi(\vec{u}_s), \psi(\vec{0}) \rangle.
  \end{equation}

  Define $C(\delta)$ such that  $\langle \vec{u}_t, \psi(\vec{0}) \rangle \leq C(\delta)d^{-1/2}$ with probability at least $1 - \frac{\delta}{2k}$. By a Taylor expansion, there exists an $\eta(\epsilon) > 0$ such that for any $\vec{v} \in \R^d$ with $\norm{\vec{v}} \leq \eta$, $\norm{\psi(\vec{v}) - \psi(\vec{0})} \leq \epsilon'$, where $\epsilon'$ will be fixed later. With Assumption~\ref{assump:app:low_dim}, this implies that for any $\vec{u} \in \R^d$,
  \[ \norm{\psi(\vec{0})}^2 - (\epsilon' + \eta q) \norm{\psi(\vec{0})} \leq \langle \Psi(\vec{u}), \psi(\vec{0}) \rangle \leq \norm{\psi(\vec{0})}^2 + (\epsilon' + \eta q) \norm{\psi(\vec{0})}. \]
  Further, by Jensen's inequality, $\Ea{\norm{\Phi(\vec{u})}^2} \leq C_1 d$ so by choosing $c(\delta, \epsilon)$ small enough, we can ensure that
  \[ \langle \Psi^+(\vec{u}), \psi(\vec{0}) \rangle \geq \norm{\psi(\vec{0})}^2 - (2\epsilon + \eta q) \norm{\psi(\vec{0})} \]
  Plugging those estimates in \eqref{eq:app:u_psi_ineq}, we find that for $t \leq T \wedge \tau_0^-(\vec{v_1}) \wedge \tau_\eta$, and $d$ large enough so that
  \[ C(\delta)d^{-1/2}\leq \frac{\eta\epsilon}{3} \norm{\psi(\vec{0})}, \]
  we have
  \begin{equation}
    \gamma t \left(\norm{\psi(\vec{0})}^2 - (2\epsilon' + \eta q) \norm{\psi(\vec{0})}\right)  \leq \langle \vec{u}_t, \psi(\vec{0}) \rangle \leq \frac{\eta\epsilon}{2} \norm{\psi(\vec{0})} + \gamma t  \left(\norm{\psi(\vec{0})}^2 + (\epsilon' + \eta q) \norm{\psi(\vec{0})}\right).
  \end{equation}

  For small enough $\epsilon$ and $q$, the LHS of the above inequality is always at least $\gamma t \norm{\psi(0)}^2 / 2$, which implies that
  \begin{equation}\label{eq:app:bound_tau_eta}
    \tau_0^-(\vec{v}_1) \geq \tau_\eta \quad \text{and} \quad T \wedge \tau_\eta \leq \frac{2 \eta}{\gamma \norm{\psi(\vec{0})}}
  \end{equation}
  Choosing $\epsilon'$ and $\eta$ small enough so that
  \[ \frac{(2\epsilon' + \eta q)}{\norm{\psi(\vec{0})}^2} \leq \frac{\epsilon}{2}\]
  we ensure that for $t \leq T \wedge \tau_\eta$ we have
  \begin{equation}
    \left| \langle \vec{u_t} - \gamma t \psi(\vec{0}), \psi(\vec{0}) \rangle \right| \leq \epsilon \eta \norm{\psi(\vec{0})}.
  \end{equation}

  We now move on to the other $\vec{v}_i$. For $i \geq 2$, $\vec{v}_i$ is orthogonal to $\psi(\vec{0})$, and this time Assumption~\ref{assump:app:low_dim} implies
  \[ \left| \langle \Psi(\vec{u}), \vec{v}_i \rangle \right| \leq \epsilon + \eta q. \]
  Regardless of the sign of $m_0(\vec{v}_i)$, this implies that for $t \leq T \wedge \tau_\eta$,
  \[ |\langle \vec{u_t}, \vec{v}_i \rangle| \leq \frac32|\langle \vec{u}_0, \vec{v}_i \rangle| + \gamma t (\epsilon' + \eta q) \leq \frac{\eta\epsilon}{2} + \frac{2\eta(2\epsilon' + \eta q)}{\norm{\psi(\vec{0})}}, \]
  having used the bound~\eqref{eq:app:bound_tau_eta} on $\tau_\eta$ and bounding the initial value as above with probability $1- \frac\delta{2k}$. We can always choose $\eta, \epsilon'$ such that the above bound is at most $\epsilon$; since
  \[ \norm{\Pi_V \vec{u}_t - \gamma t \psi(\vec{0})}^2 = \frac{\langle \vec{u_t} - \gamma t \psi(\vec{0}), \psi(\vec{0}) \rangle^2}{\norm{\psi(\vec{0})}^2} + \sum_{i=2}^k \langle \vec{u_t}, \vec{v}_i \rangle^2 \leq k\epsilon^2, \]
  this concludes the proof.
\end{proof}

\begin{proposition}\label{prop:app:low_dim_exp}
  Assume that $\psi(\vec{0}) = \vec{0}$, and $H_\psi := d\psi(\vec{0}) \neq 0$ is symmetric. Let $\bar\lambda$ be the largest eigenvalue of $H_\psi$, and assume that $\bar\lambda > 0$.
  For any $\delta, \epsilon > 0$, there exist constants $c(\delta, \epsilon), \eta(\epsilon), c'(\delta, \epsilon), C(\delta, \epsilon)$ such that if
  \begin{equation}
    \gamma \leq c d^{-1} \quad \text{and} \quad T \leq c  \gamma^{-2} d^{-1},
  \end{equation}
  then
  \begin{equation}
    \dP\left(\norm{\Pi^\star \vec{u}_t} \geq c' \norm{\Pi^\star \vec{u}_0} e^{\gamma(\bar\lambda - Ck(\epsilon + q))t} \quad \forall t \leq T \wedge \tau_\eta \right) \geq 1 - \delta.
  \end{equation}
  Further, the mass of $\Pi^\star \vec{u}_t$ is concentrated on the top eigenspace of $H_\psi$: there exists a $\tilde\lambda < \bar\lambda - Ck(\epsilon + q)$ such that on the same event,
  \[ \norm{\Pi^\star \vec{u}_{\tau_\eta} - \bar\Pi^\star \vec{u}_t} \leq \norm{\Pi^\star \vec{u}_0} e^{\gamma \tilde\lambda t} \quad \forall t \leq T \wedge t_\eta, \]
  where $\bar\Pi^\star$ is the projection on the associated to $\bar\lambda$.
\end{proposition}

\begin{proof}
  Fix $(\vec{v}_1, \dots, \vec{v}_k)$ an orthonormal basis of $V^\star$, consisting of eigenvectors of $H_\psi$, and let $\lambda_i$ be the eigenvalue associated with $\vec{v}_i$. The largest eigenvalue of $H_\psi$ that differs from $\bar\lambda$ will be denoted $\underline\lambda$ (we can let $\underline\lambda = -\infty$ if $H_\psi = \bar\lambda I$).
  
  We choose an $\eta(\epsilon)$ such that for $\vec{v}\in \R^k$ such that $\norm{\vec{v}} \leq \eta$,
  \[ \norm{\psi(\vec{v}) - H_\psi \vec{v}} \leq \epsilon \norm{\vec{v}}. \]
  In particular, for any $i \in [k]$ and any vector $\vec{u}$, we have for small enough choice of $c(\delta, \epsilon)$
  \[ \langle \Psi^+(\vec{u}), \vec{v}_i \rangle \geq \lambda_i \langle \vec{v}_i, \vec{u} \rangle - (2\epsilon + q)\norm{\Pi^\star\vec{u}} \quad \langle \Psi(\vec{u}), \vec{v}_i \rangle \leq \lambda_i \langle \vec{v}_i, \vec{u} \rangle + (\epsilon + q)\norm{\Pi^\star\vec{u}} \]
  Applying Proposition~\ref{prop:app:diff_equation} to $\alpha = \epsilon$, and assuming that $m_0(\vec{v}_i) > 0$, we find that on an event with probability $1 - \delta/(2k)$, for $t \leq T \wedge \tau_0^+(\vec{v_i}) \wedge \tau_\eta$,
  \begin{align*}
     \langle \vec{v}_i, \vec{u}_t \rangle &\geq (1 - \epsilon)m_0(\vec{v}_i) + \gamma \sum_{s=0}^{t-1} \left(\lambda_i\langle \vec{v}_i, \vec{u}_s \rangle  - (2\epsilon + q)\norm{\Pi^\star\vec{u}_s}\right) \\ 
     \langle \vec{v}_i, \vec{u}_t \rangle &\leq (1 + \epsilon)m_0(\vec{v}_i) + \gamma \sum_{s=0}^{t-1} \left(\lambda_i\langle \vec{v}_i, \vec{u}_s \rangle + (\epsilon + q)\norm{\Pi^\star\vec{u}_s}\right)
  \end{align*}
  The same argument when $m_0(\vec{v}_i) < 0$, we obtain
  that with probability $1 - \delta/2$, for any $t \leq T \wedge \tau_\eta$ and $i \in [k]$:
  \begin{align}
    |\langle \vec{v}_i, \vec{u}_t \rangle| &\geq (1 - \epsilon)|\langle \vec{v}_i, \vec{u}_0 \rangle| + \gamma \sum_{s=0}^{t-1} \left(\lambda_i|\langle \vec{v}_i, \vec{u}_s \rangle|  - (2\epsilon + q)\norm{\Pi^\star\vec{u}_s}\right) \label{eq:app:lower_bound_exp}\\
    |\langle \vec{v}_i, \vec{u}_t \rangle| &\leq (1 + \epsilon)|\langle \vec{v}_i, \vec{u}_0 \rangle| + \gamma \sum_{s=0}^{t-1} \left(\lambda_i|\langle \vec{v}_i, \vec{u}_s \rangle| + (\epsilon + q)\norm{\Pi^\star\vec{u}_s}\right)\label{eq:app:upper_bound_exp}
  \end{align}
Define $\bar\Pi^\star$ the projection on the eigenspace associated to $\bar\lambda$, and $\underline\Pi^\star$ its orthogonal projection. Summing the upper bound of \eqref{eq:app:upper_bound_exp} on all $v_i$ with $\lambda_i \leq \underline\lambda$,
\[ \norm{\underline\Pi^\star \vec{u}_t}_1 \leq (1 + \epsilon)\norm{\underline\Pi^\star \vec{u}_0}_1 + \gamma (\underline\lambda + k(\epsilon + q)) \sum_{s=0}^{t-1} \norm{\Pi^\star \vec{u}_s}_1, \]
  where the $\ell^1$ norm $\norm{\:\cdot\:}_1$ in $V^\star$ is computed w.r.t the basis $(\vec{v}_1, \dots, \vec{v}_k)$, and we used the norm bound $\norm{\Pi^\star \vec{u}_s} \leq \norm{\Pi^\star \vec{u}_s}_1$.

  Call $\cP(t_0)$ the following condition:
  \begin{equation}\label{eq:app:rec_condition}
      \norm{\underline\Pi^\star \vec{u}_t}_1 \leq C\norm{\bar\Pi^\star \vec{u}_t}_1 \quad \text{for all} \  t < t_0, ,
  \end{equation} where $C$ will be fixed later, and assume that $\cP(t_0)$ holds for some $t_0 > 0$.
  Using the discrete Grönwall inequality, the following holds: for any $t \leq T \wedge \tau_\eta \wedge t_0$,
  \begin{equation}\label{eq:app:bad_upper_bound}
      \norm{\underline\Pi^\star \vec{u}_t}_1 \leq (1 + \epsilon)\norm{\underline\Pi^\star \vec{u}_0}_1 e^{\gamma (\underline\lambda + (1+C)k(\epsilon + q))t}.
  \end{equation} 
   We now sum the lower bound of \eqref{eq:app:lower_bound_exp} on all $v_i$ with $\lambda_i = \bar\lambda$, to obtain
    \[ \norm{\bar\Pi^\star\vec{u}_t}_1 \geq (1 - \epsilon) \norm{\bar\Pi^\star \vec{u}_0}_1 + \gamma \sum_{s=0}^{t-1} (\bar\lambda - k(2\epsilon + q)) \norm{\bar\Pi^\star \vec{u}_s}_1 - k(2\epsilon + q)\norm{\underline\Pi^\star \vec{u}_s}_1. \]
      above bound implies that for all $t \leq t_0$,
     \[ \norm{\bar\Pi^\star\vec{u}_t}_1 \geq (1 - \epsilon) \norm{\bar\Pi^\star \vec{u}_0}_1 + \gamma \sum_{s=0}^{t-1} (\bar\lambda - (1+C)k(2\epsilon + q)) \norm{\bar\Pi^\star \vec{u}_s}_1. \]
     By the discrete Grönwall inequality,
     \begin{align} 
        \norm{\bar\Pi^\star\vec{u}_t}_1 &\geq (1 - \epsilon) \norm{\bar\Pi^\star \vec{u}_0}_1 \left(1 +  \gamma(\bar\lambda - (1+C)k(2\epsilon + q)) \right)^t \nonumber\\ 
        &\geq (1 - \epsilon) \norm{\bar\Pi^\star \vec{u}_0}_1 \exp\left(\gamma(\bar\lambda - (1+C)k(2\epsilon + q))t - O(\gamma^2 T) \right) \nonumber\\
        &\geq (1 - 2\epsilon) \norm{\bar\Pi^\star \vec{u}_0}_1 \exp\left(\gamma(\bar\lambda - (1+C)k(2\epsilon + q))t\right), \label{eq:app:good_lower_bound}
     \end{align}
     where the last inequality holds for large enough $d$ because of the bound \eqref{eq:app:bound_gamma_T}.
     Now, fix a constant $r(\delta)$ such that with probability $1 - \delta/2$, $\norm{\bar\Pi^\star \vec{u}_0}_1 \leq r(\delta)\norm{\underline\Pi^\star \vec{u}_0}_1$. Then condition $\cP(1)$ holds as long as $C > r(\delta)$, and $\cP(t_0)$ implies $\cP(t_0+1)$ whenever the lower bound of eq. \eqref{eq:app:good_lower_bound} is higher than the upper bound of eq. \eqref{eq:app:bad_upper_bound} for $t \leq t_0$. In particular, this holds whenever
     \[ C \geq \frac{1 + \epsilon}{1 - 2\epsilon}r(\delta) \quad \text{and} \quad \bar\lambda - (1+C)k(2\epsilon + q) \geq  \underline\lambda + (1+C)k(\epsilon + q).\]
     The above condition can be satisfied if and only if
     \[ \frac{1 + \epsilon}{1 - 2\epsilon}r(\delta) < \frac{\bar\lambda - \underline\lambda}{3\epsilon + 2q} - 1, \]
     which can always be satisfied when $\epsilon, q$ are low enough. The proposition ensues from the remark that $\norm{\bar\Pi^\star\vec{u}_t} \leq \norm{\Pi^\star \vec{u}} \leq (1+C)\norm{\bar\Pi^\star \vec{u}_t}$ for any $t \leq T \wedge \tau_\eta$.
\end{proof}

\subsection{Bounding the hitting times}

It finally remains to show how the bounds of Propositions \ref{prop:app:low_dim_linear} and \ref{prop:app:low_dim_exp} allow us to bound the hitting times $\tau_\eta$. We summarize those in the following proposition:
\begin{proposition}\label{prop:app:hitting_times}
    Let $\vec{u}$ be a process that satisfies Assumptions \ref{assump:app:initialization}-\ref{assump:app:low_dim} for low enough $q$. Then for any $\delta, \epsilon > 0$, there exist constants $\gamma_0(\delta, \epsilon), \eta(\epsilon), \kappa(\delta), K(\delta), C(\delta, \epsilon)$ such that, with probability $1 - \delta$:
    \begin{enumerate}
        \item if $\psi(\vec{0}) \neq \vec{0}$ and $\langle \vec{u}_0, \psi(\vec{0}) \rangle > 0$, letting $\gamma = \gamma_0 d^{-1}$,
        \[ \frac{\eta(1 - \sqrt{k}\epsilon)}{\gamma \norm{\psi(\vec{0})}} \leq \tau_\eta \leq \frac{\eta(1 + \sqrt{k}\epsilon)}{\gamma \norm{\psi(\vec{0})}} \]
        \item if $\psi(\vec{0}) = 0$ and $d\psi(\vec{0})$ is symmetric and has a positive maximum eigenvalue $\bar\lambda$, then letting $\gamma = \gamma_0 (d\log(d))^{-1}$
        \[ \frac{\log\left(\frac{\eta\sqrt{d}}{(1+\epsilon)K}\right)}{\gamma(\bar\lambda + k(\epsilon + q))} \leq \tau_\eta \leq \frac{\log\left(\frac{\eta\sqrt{d}}{\kappa c'}\right)}{\gamma(\bar\lambda - Ck(\epsilon + q))}\]
    \end{enumerate}
    Further, the directions of $\Pi^\star \vec{u}_{\tau_\eta}$ concentrate, in the following sense:
    \begin{enumerate}
        \item if $\psi(\vec{0}) \neq \vec{0}$,
        \[ \norm{\frac{\Pi^\star \vec{u}_{\tau_\eta}}{\eta} -  \frac{\psi(\vec{0})}{\norm{\psi(\vec{0})}}} \leq 2\sqrt{k}\epsilon \]
        \item if $\psi(\vec{0}) = \vec{0}$, for some $\beta < 1$,
        \[ \norm{\Pi^\star \vec{u}_{\tau_\eta} - \bar \Pi^\star \vec{u}_{\tau_\eta}} \leq C(\delta, \epsilon) d^{-\frac{1 - \beta}{2}},\]
        where $\bar\Pi^\star$ is the projection on the top eigenspace of $d\psi(\vec{0})$.
    \end{enumerate}
\end{proposition}

\begin{proof}
    The proof for the linear case is almost immediate. From Proposition \ref{prop:app:low_dim_linear}, with probability $1 - \delta$, we have for all $t \leq T \wedge \tau_\eta$
    \[ \gamma t \norm{\psi(\vec{0})} - \sqrt{k}\eta\epsilon \leq \norm{\Pi^\star\vec{u}_t} \leq \gamma t \norm{\psi(\vec{0})} + \sqrt{k}\eta\epsilon \]
    Define
    \[ \bar t_\epsilon = \frac{\eta(1 + \sqrt{k}\epsilon)}{\gamma \norm{\psi(\vec{0})}} \quad \text{and} \quad \underline t_\epsilon = \frac{\eta(1 - \sqrt{k}\epsilon)}{\gamma \norm{\psi(\vec{0})}}. \]
    For $t \leq \underline t_\epsilon$, the upper bound on $\norm{\Pi^\star \vec{u}_t}$ is lower than $\eta$, which directly means that $\tau_\eta \geq \underline t_\epsilon$. The lower bound is higher than $\eta$ for $t = \bar t_\epsilon$, which implies that $\tau_\eta \leq \bar t_\epsilon$ as long as $T \geq \bar t_\epsilon$. Such a $T$ satisfies the bound \eqref{eq:app:bound_gamma_T} as long as
    \[ \frac{\eta(1 + \sqrt{k}\epsilon)}{\gamma \norm{\psi(\vec{0})}} \leq c(\delta, \epsilon) \gamma^{-2}d^{-1}, \]
    which is equivalent to
    \[ \gamma \leq \frac{c(\delta,\epsilon)\norm{\psi(\vec{0})}}{\eta(1 + \sqrt{k}\epsilon)}\, d^{-1},\]
    which can always be ensured by decreasing $\gamma_0(\delta, \epsilon)$. Finally, we use the bound
    \[ \norm{\frac{\vec{x}}{\norm{\vec{x}}} - \frac{\vec{y}}{\norm{\vec{y}}}} \leq 2 \frac{\norm{\vec{x} - \vec{y}}}{\norm{\vec{x}}} \]
    to conclude that
    \[ \norm{\frac{\Pi^\star \vec{u}_{\tau_\eta}}{\eta} - \frac{\psi(\vec{0})}{\norm{\psi(\vec{0})}}} \leq 2\sqrt{k}\epsilon. \]

    We now move on to the quadratic case. By Proposition \ref{prop:app:low_dim_exp} and the same reasoning as in the proof of \eqref{eq:app:bad_upper_bound}, we have with probability $1 - \delta$, for all $t \leq T \wedge \tau_\eta$,
    \[ c' \norm{\Pi^\star \vec{u}_0} e^{\gamma(\bar\lambda - Ck(\epsilon + q))t} \leq \norm{\Pi^\star \vec{u}_t} \leq (1 + \epsilon)\norm{\Pi^\star \vec{u}_0} e^{\gamma(\bar\lambda + k(\epsilon + q))t} \]
    Upon redefining $\delta$, we can assume that the ``good'' event implies that
    \[ \frac{\kappa(\delta)}{\sqrt{d}} \leq \norm{\Pi^\star \vec{u}_0} \leq \frac{K(\delta)}{\sqrt{d}} \]
    for constants $k, K$. As in the linear case, the upper bound above implies that
    \[ \tau_\eta \geq \frac{\log\left(\frac{\eta\sqrt{d}}{(1+\epsilon)K}\right)}{\gamma(\bar\lambda + k(\epsilon + q))}, \]
    while the lower bound implies that
    \[ T \wedge \tau_\eta \leq \frac{\log\left(\frac{\eta\sqrt{d}}{\kappa c'}\right)}{\gamma(\bar\lambda - Ck(\epsilon + q))} =: \bar t_\epsilon'. \]
    It therefore remains to show that $\bar t_\epsilon' \leq T$; for large enough $d$ and small enough $\epsilon, q$, we can write $\bar t_\epsilon' \leq C'(\delta, \epsilon) \gamma^{-1} \log(d)$, and
    \[ C'(\delta, \epsilon) \gamma^{-1} \log(d) \leq c\gamma^{-2}d^{-1} \Leftrightarrow \gamma \leq c(C' d \log(d))^{-1}.\]
    This condition can again be ensured by decreasing $\gamma_0(\delta, \epsilon)$ as needed. Finally, let 
    \[ \beta = \frac{\tilde\lambda}{\bar\lambda - Ck(\epsilon+q)}, \]
    where $\tilde \lambda$ is the one of Proposition~\ref{prop:app:low_dim_exp}. Then $\beta < 1$ and we have at $t = \tau_\eta$
    \[ \norm{\vec{u}_{\tau_\eta} - \bar\Pi^\star \vec{u}_{\tau_\eta}} \leq \frac{K}{\sqrt{d}} \exp\left(\beta \log\left(\frac{\eta\sqrt{d}}{\kappa c'}\right)\right) \leq C''(\delta, \epsilon) d^{\frac{1 - \beta}{2}}. \]
\end{proof}

\section{Proof of the Main Results}
\label{sec:app:proofs}

\subsection{Proof of Theorem \ref{thm:single_index_recovery}}

We formalize here the proof sketch of Theorem \ref{thm:single_index_recovery}.  We track the dynamics of $\vec{w}_t$ through the following sufficient statistic:
\begin{equation}
  m_t := \langle \vec{w}_t, \vec{w}^\star \rangle.
\end{equation}
The dynamics of $m_t$ fall under the framework of Section~\ref{sec:app:framework}, by letting
\[ \Phi(\vec{w}) = -\nabla_{\vec w}^\bot\cL\left(f(\vec z; \tilde{\vec{w}}(\rho), a_0),  y \right). \]
Indeed, for the spherical gradient,
\begin{align*}
  \nabla_{\vec{w}}^\bot \cL(f(\vec{z}; \vec{w}, a_0), y) = a_0 y \sigma'(\langle \vec{w}, z \rangle) \cdot \left(\vec{z} - \langle \vec{z}, \vec{w} \rangle \vec{w} \right).
\end{align*},
hence plugging in the expression for $\tilde{\vec{w}}(\rho)$
\begin{equation}\label{eq:app:phi_expression}
    \Phi(\vec{w}) = h^\star(\langle \vec{w}^\star, \vec{z} \rangle) \sigma'\left(\langle \vec{w}, \vec{z} \rangle + \rho h^\star(\langle \vec{w}^\star, \vec{z} \rangle) \sigma'(\langle \vec{w}, \vec{z} \rangle) \cdot \norm{\vec{z}}^2\right) \left(\vec{z} - \langle \vec{z}, \vec{w} \rangle \vec{w} \right).
\end{equation} 
Assumptions  \ref{assump:poly_growth} and \ref{assump:init} on $\sigma$ and $h^\star$ then directly imply Assumptions \ref{assump:app:initialization} and \ref{assump:app:concentration} in the Appendix. As a result, we only need to study the quantity $\Psi(\vec{w}) = \Ea{\Phi(\vec{w})}$.
We show the following:
\begin{lemma}\label{lem:app:psi_low_rank}
    The function $\Psi$ is approximately one-dimensional: there exists a function $\phi$ such that for any $\vec{w} \in \mathbb{S}^{d-1}$, we have
    \[ |\langle \vec{w}^\star, \Psi(\vec{w}) \rangle- a_0 \phi(\vec{w}, \vec{w}^\star)| \leq \frac{C}d \]
\end{lemma}

\begin{proof}
Define the following quantities:
\begin{align}
  &\Xi(x, x^\star, x^\bot) = h^\star(x^\star) \sigma'(x + a_0\rho_0 \sigma'(x) h^\star(x^\star)) x^\bot \\
  &\phi(m) = \E[\Xi(x, x^\star, x^\bot)] \  \text{for} \  \dbinom{x}{x^\star} \sim  \cN\left( \vec{0},
    \begin{pmatrix}
      1 & m \\
      m & 1
  \end{pmatrix} \right) \  \text{and} \  x^\bot = x^\star - m x.\label{eq:app:def_phi}
\end{align}

  For simplicity, we let $u = \langle \vec{w}, \vec{z} \rangle + a_0\rho_0 h^\star(\langle \vec{w}^\star, \vec{z} \rangle)$. Doing a Taylor expansion of $\sigma'$ in the second order, we have
  \begin{align*}
    \sigma'\left(\langle \vec{w}, \vec{z} \rangle + \rho h^\star(\langle \vec{w}^\star, \vec{z} \rangle) \sigma'(\langle \vec{w}, \vec{z} \rangle) \cdot \norm{\vec{z}}^2\right) = &\sigma'(u)
    + \sigma''(u) \rho h^\star(\langle \vec{w}^\star, \vec{z} \rangle) \sigma'(\langle \vec{w}, \vec{z} \rangle) \cdot  \left(\norm{\vec{z}}^2 - d \right) \\
    &+ R(u) \rho^2 h^\star(\langle \vec{w}^\star, \vec{z} \rangle)^2 \sigma'(\langle \vec{w}, \vec{z} \rangle)^2 \cdot  \left(\norm{\vec{z}}^2 - d \right)^2 ,
  \end{align*}
  where $R(u)$ is the Taylor remainder of $\sigma'$ at second order around $u$.

  The distribution of the triplet $(\langle \vec{w}, \vec{z} \rangle, \langle \vec{w^\star}, \vec{z} \rangle, \langle \vec{w^\star}, \vec{z}^\bot \rangle)$ is the same as the one of $(x, x^\star, x^\bot)$, hence
  \begin{align*}
    \Ea{\langle \Phi(\vec{z}), \vec{w}^\star \rangle} - a_0\phi(m) = &\underbrace{\rho\Ea{\sigma''(u)  h^\star(\langle \vec{w}^\star, \vec{z} \rangle)^2 \sigma'(\langle \vec{w}, \vec{z} \rangle) \cdot  \left(\norm{\vec{z}}^2 - d \right) \langle \vec{w}^\star, \vec{z}^\bot \rangle}}_{(1)} \\
    &+ \underbrace{\rho^2 \Ea{R(u) h^\star(\langle \vec{w}^\star, \vec{z} \rangle)^3 \sigma'(\langle \vec{w}, \vec{z} \rangle)^2 \cdot  \left(\norm{\vec{z}}^2 - d \right)^2  \langle \vec{w}^\star, \vec{z}^\bot \rangle}}_{(2)}.
  \end{align*}
  We first bound (2). From the Hölder inequality (since $\frac12 + \frac18 + \frac18 + \frac18 + \frac18 = 1$),
  \begin{align*}
    (2) &\leq \rho^2 \Ea{R(u)^8}^{\frac18}\Ea{h^\star(x^\star)^8}^{\frac18}\Ea{\sigma'(x)^8}^{\frac18}\Ea{\langle \vec{w}^\star, \vec{z}^\bot \rangle^8}^{\frac18}\Ea{\left(\norm{\vec{z}}^2 - d \right)^4}^{\frac12}
  \end{align*}
  Since $\sigma$ is three times differentiable almost everywhere, and from the polynomial bounds of Assumption~\ref{assump:derivatives}, the first four expectations are bounded by an absolute constant. The last term is the centered fourth moment of a $\chi^2(d)$ distribution, which is equal to $12d(d+4)$. As a result, using that $\rho^2 = O(d^{-2})$,
  \[ (2) \leq Cd^{-1}. \]
  For the first term, we let $\vec{z} = \vec{z}_\parallel + \vec{z}'$, where $\vec{z}_\parallel$ is the projection of $\vec{z}$ on the span of $\vec{w}$ and $\vec{w}^\star$. We can then write
  \begin{align*} 
    (1)& = \underbrace{\Ea{\sigma''(u) \rho  h^\star(\langle \vec{w}^\star, \vec{z} \rangle)^2 \sigma'(\langle \vec{w}, \vec{z} \rangle) \cdot  \norm{\vec{z}_\parallel}^2 \cdot \langle \vec{w}^\star, \vec{z}^\bot \rangle}}_{(1')} \\
    &\quad+ \underbrace{\Ea{\sigma''(u) \rho  h^\star(\langle \vec{w}^\star, \vec{z} \rangle)^2 \sigma'(\langle \vec{w}, \vec{z} \rangle) \cdot  \left(\norm{\vec{z'}}^2 - d \right) \langle \vec{w}^\star, \vec{z}^\bot \rangle}}_{(1'')}.
  \end{align*}
  Up to the factor of $\rho$, the term $(1')$ is the expectation of a polynomially bounded function of a $2$-dimensional Gaussian variable, hence
  \[ (1') \leq C \rho \leq C' d^{-1}. \]
  In $(1'')$, the vector $\vec{z'}$ is independent from $\langle \vec{w}, \vec{z} \rangle$ and $\langle \vec{w}^\star, \vec{z} \rangle$, so the expectation factorizes as
  \begin{align*}
    |(1'')| &= \left|\Ea{\sigma''(u) \rho  h^\star(\langle \vec{w}^\star, \vec{z} \rangle)^2 \sigma'(\langle \vec{w}, \vec{z} \rangle) \langle \vec{w}^\star, \vec{z}^\bot \rangle }\cdot  \Ea{\left(\norm{\vec{z'}}^2 - d \right) }\right| \\
    &\leq C \rho |d - 2 - d| \leq C' d^{-1},
  \end{align*}
  where we used that $\vec{z}'$ is a $(d-2)$-dimensional Gaussian variable. This concludes the proof.
\end{proof}

Lemma \ref{lem:app:psi_low_rank} implies that Assumption~\ref{assump:app:low_dim} holds for $\Psi$ and any choice of $q$, as long as $d$ is large enough.  As a result, the proof of Theorem~\ref{thm:single_index_recovery} hinges on the following facts about $\phi$:
\begin{proposition}\label{prop:app:phi_phip_nnz}
  For almost any choice of $\rho_0$ (w.r.t to the Lebesgue measure),
  \begin{itemize}
    \item if $\ell_p^\star = 1$, then $\phi(0) \neq 0$;
    \item if $\ell_p^\star = 2$, then $\phi'(0) \neq 0$.
  \end{itemize}
\end{proposition}

\begin{proof}[of Theorem \ref{thm:single_index_recovery}]
    Our goal is to apply Proposition~\ref{prop:app:hitting_times} to the function $\phi$. If $\ell_p^\star = 1$, then $\phi(0) \neq 0$ by the previous lemma, and with a uniform initialization of $\vec{w}^\star$ we have $m_0 \phi(0) > 0$ with probability $1/2$.  Combining this with the $(1- \delta)$ probability event of Proposition \ref{prop:app:hitting_times} for small enough $\epsilon$, with probability $1 - \delta$, we have $t_\eta \leq C(\delta)d^{-1}$ with probability $1/2 - \delta$.
    We now move to the proof of the case where $\ell_p^\star =1$. In this case, Proposition~\ref{prop:app:hitting_times} implies the desired result as soon as $\phi'(0) > 0$ (which corresponds in the $k = 1$ setting to the only eigenvalue of $d\phi(0)$ being positive). Since the Lebesgue measure is invariant w.r.t multiplication by $\pm 1$, the factor $a_0 \rho_0$ in the definition of $\phi'(0)$ has the same distribution as $\rho_0$, hence we can write $\phi'(0) = a'_0 |\phi'(0)|$ where $a'_0$ is also uniform in $\{-1, 1\}$. The event $\phi'(0) > 0$ therefore also has probability $1/2$, and we conclude as before.
\end{proof}

\subsubsection*{Analysis of $\phi$: proof of Proposition~\ref{prop:app:phi_phip_nnz}}

The main ingredient here is to write $\phi(m) = \phi(m; \rho_0)$ and differentiate w.r.t $\rho_0$. For simplicity, we take $a_0 = 1$ in the definition of $\phi$. We show the following:
\begin{lemma}\label{lem:app:phi_diff}
  Under Assumption \ref{assump:derivatives}, the function $\phi(m; \rho_0)$ is analytic in $\rho_0$. Further, we have
  \begin{equation}
    \left.\frac{\partial^k \phi(m; \rho_0)}{\partial \rho_0^k}\right|_{\rho_0 = 0} = \Ea{h^\star(x^\star)^{k+1} x^\bot \sigma^{(k+1)}(x) \sigma'(x)^k}
  \end{equation}
  where $x, x^\star$ follow the same distribution as in eq.~\eqref{eq:app:def_phi}.
\end{lemma}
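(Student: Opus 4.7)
The plan is a direct chain-rule computation for the pointwise $k$-th derivative of $\Psi$, followed by a dominated convergence argument to push the derivative through the Gaussian expectation, and finally a Morera-type argument for analyticity. First I would recast $\Psi$ as
\begin{equation}
\Psi(x, x^\star; \rho_0) = h^\star(x^\star)\, x^\star \cdot \sigma'\bigl(x + \rho_0\, u(x, x^\star)\bigr), \qquad u(x, x^\star) := h^\star(x^\star)\,\sigma'(x),
\end{equation}
so that the only $\rho_0$-dependence sits inside $\sigma'$, and the shift coefficient $u$ is itself independent of $\rho_0$. A one-line induction in $k$ using the chain rule gives $\partial_{\rho_0}^k \sigma'(x+\rho_0 u) = u^k\, \sigma^{(k+1)}(x+\rho_0 u)$, hence
\begin{equation}
\frac{\partial^k}{\partial \rho_0^k}\Psi(x, x^\star; \rho_0) = [h^\star(x^\star)]^{k+1}\, x^\star\, \sigma'(x)^k\, \sigma^{(k+1)}\bigl(x + \rho_0\, u(x, x^\star)\bigr).
\end{equation}
Setting $\rho_0 = 0$ and interchanging differentiation with $\E$ produces the stated identity.

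To justify the interchange, I would proceed by induction in $k$ via dominated convergence, the inductive step requiring a Gaussian-integrable majorant for $|\partial_{\rho_0}^k \Psi|$ uniform in $\rho_0$ over a neighborhood of $0$. Analyticity of $\sigma$ from Assumption~\ref{assump:activation} together with Cauchy's integral formula on a disk of fixed radius $r$ around each real point yields a bound of the form $|\sigma^{(k+1)}(z)| \leq C_k$ on a complex strip of width $r$. Combined with the polynomial growth of $h^\star$ and the uniform boundedness of $\sigma'$, this gives $|\partial_{\rho_0}^k \Psi| \leq C_k\, (1+|x^\star|)^{M_k}$ for an integer $M_k$, which is integrable against the Gaussian density uniformly for $|\rho_0|$ small. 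Dominated convergence then delivers $\partial_{\rho_0}^k \phi(m; \rho_0) = \E\bigl[\partial_{\rho_0}^k \Psi(x, x^\star; \rho_0)\bigr]$, and evaluation at $\rho_0 = 0$ closes the computation.

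Analyticity of $\rho_0 \mapsto \phi(m; \rho_0)$ near $0$ follows from the same majorant together with Morera's theorem: for each fixed $(x, x^\star)$, $\Psi(x, x^\star; \cdot)$ extends holomorphically to a complex neighborhood of the real axis by analyticity of $\sigma'$, and the bound above combined with Fubini lets one interchange the expectation with any closed contour integral in $\rho_0$, so $\phi(m; \cdot)$ is itself holomorphic there. Equivalently, expanding $\sigma'(x+\rho_0 u)$ as a power series in $\rho_0$ and integrating termwise (again licensed by the majorant) yields a convergent Taylor series for $\phi(m; \rho_0)$ whose coefficients match the claim. The one non-routine ingredient is the uniform-in-$k$ control on $\sigma^{(k+1)}$ needed for the dominated convergence; this is precisely what the global analyticity assumption on $\sigma$ provides, and once it is granted both conclusions of the lemma follow immediately from the chain-rule identity.
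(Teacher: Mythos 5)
Your proof takes essentially the same approach as the paper: rewrite $\Psi$ so that $\rho_0$ appears only as a shift inside $\sigma'$, apply the chain rule inductively to get $\partial_{\rho_0}^k \Psi = [h^\star(x^\star)]^{k+1} x^\star \sigma'(x)^k \sigma^{(k+1)}(x+\rho_0 u)$, differentiate under the expectation, and set $\rho_0 = 0$. The only difference is that you spell out the dominated-convergence and Morera justifications that the paper leaves implicit (``a property conserved through integration''); one small caution there is that Assumption~\ref{assump:activation} bounds $\sigma'$ and $\sigma''$ only on $\mathbb{R}$, so the uniform control of $\sigma^{(k+1)}$ via Cauchy estimates needs $\sigma'$ to extend to a bounded holomorphic function on a complex strip, which is an additional (though mild and standard) regularity hypothesis that both your argument and the paper's rely on implicitly.
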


\begin{proof}
  The first statement stems from the analyticity of $\sigma$, and the fact that it is a property conserved through integration. For the second, we differentiate inside the expectation, to find
  \[ \frac{\partial \phi}{\partial \rho_0} = \Ea{\sigma'(x)h^\star(x^\star)^2 \sigma''(x + \rho_0 \sigma'(x)h^\star(x^\star)) x^\bot}; \]
  the result follows by induction and taking $\rho_0 = 0$ at the end.
\end{proof}

We define the following quantities related to Assumption~\ref{assump:init}:
\begin{align*}
  u_{0}^{(k)} &= \Ea{\sigma^{(k)}(x) \sigma'(x)^{k-1}}  &u_{1}^{(k)} &= \Ea{x \sigma^{(k)}(x) \sigma'(x)^{k-1}} \\
  v_{0}^{(k)} &= \Ea{x^\star h^\star(x^\star)^k} & v_{1}^{(k)} &= \Ea{[(x^\star)^2-1] h^\star(x^\star)^k} \\
  \Delta^{(k)} &= \Ea{h^\star(x^\star)^k}
\end{align*}
Then, by Proposition 11.37 from \cite{odonnell_2014_analysis}, and using $x^\bot = x^\star - mx$:
\begin{align*}
  \left.\frac{\partial^k \phi(m; \rho_0)}{\partial \rho_0^k}\right|_{\rho_0 = 0} &= \Ea{h^\star(x^\star)^{k+1} x^\star \sigma^{(k+1)}(x) \sigma'(x)^k} - m \cdot \Ea{h^\star(x^\star)^{k+1} x \sigma^{(k+1)}(x) \sigma'(x)^k} \\
  &= \left(u_{0}^{(k)}v_{0}^{(k)} + u_{1}^{(k)}(v_{1}^{(k)} + \Delta^{(k)})\cdot m + o(m)\right) - m \left(u_1^{(k)} \Delta^{(k)} + o(1)\right) \\
  &= u_{0}^{(k)}v_{0}^{(k)} + u_{1}^{(k)}v_{1}^{(k)}\cdot m + o(m)
\end{align*}
This allows us to show Proposition \ref{prop:app:phi_phip_nnz}:
\begin{proof}
  Assume first that $\ell_p^\star = 1$. Then $\phi(0; \rho_0)$ is an analytic function of $\rho_0$ with
  \[ \left.\frac{\partial^k \phi(0; \rho_0)}{\partial \rho_0^k}\right|_{\rho_0 = 0} = u_{0}^{(k)}v_{0}^{(k)}. \]
  Since $\ell_p^\star = 1$, there exists a $k \in \mathbb{N}$ such that $v_{0}^{(k)} \neq 0$, and by Assumption~\ref{assump:derivatives} the coefficient $u_{0}^{(k)}$ is nonzero for every $k$. As a result, $\phi(0;\rho_0)$ is an analytic and non-identically zero function of $\rho_0$, so it is non-zero for almost every choice of $\rho_0$, as requested. The case $k_p = 2$ is done in a similar way, noting that this time
  \[ \left.\frac{\partial^k \phi'(0; \rho_0)}{\partial \rho_0^k}\right|_{\rho_0 = 0} = u_{1}^{(k)}v_{1}^{(k)}. \]
\end{proof}

\subsection{Proof of Theorem \ref{thm:multi_index_recovery}}

The proof of Theorem~\ref{thm:multi_index_recovery} follows in the same way as the one of Theorem~\ref{thm:single_index_recovery}. Recall the definition of $\Phi$ in eq.~\eqref{eq:app:phi_expression}, and define
\begin{align}
    &\Xi(z, \vec{z}^\star, \vec{z}^\bot) = h^\star(\vec{z}^\star) \sigma'(z + a_0\rho_0\sigma'(z)h^\star(\vec{z}^\star)) \vec{z}^\bot \\
    &\psi(\vec{p}) = \Ea{\Xi(z, \vec{z}^\star, \vec{z}^\bot)} \  \text{for} \  \dbinom{z}{\vec{z}^\star} \sim  \cN\left( \vec{0},
    \begin{pmatrix}
      1 & \vec{p} \\
      \vec{p} & I_k
  \end{pmatrix} \right) \  \text{and} \  \vec{z}^\bot = \vec{z}^\star - z\vec{p}.\label{eq:app:def_psi}
\end{align}
Again, the variables $(z, \vec{z}^\star)$ correspond to $\langle \vec{w}, \vec{z} \rangle, W^\star\vec{z}$, and the vector $\vec{p}$ to $W^\star\vec{w}$. As before, the only difference between $\Ea{\Phi(\vec{w})}$ and $\psi(W^\star \vec{w})$ lies in replacing $\norm{\vec{z}}$ with $d$, and the proof of Lemma~\ref{lem:app:psi_low_rank} implies the following: for any $\vec{w}\in \mathbb{S}^{d-1}$,
\[ \norm{W^\star\Psi(\vec{w}) - \psi(W^\star \vec{w})} \leq \frac Cd. \]
As a result, the function $\Psi$ is approximately $k$-dimensional for any choice of $q$ as $d \to \infty$. 

We can view the vector $\psi(\vec{p})$ as a random variable depending on the choice of $a_0, \rho_0$. Since $a_0$ is a sign, we will only consider the dependency of $\psi$ in $\rho_0$. As before, let $u_i^{(k)}$ be the $i$-th Hermite coefficient of $z \mapsto \sigma^{(k+1)}(z) \sigma'(z)^k$, and $C_i^{(k)}$ the $i$-th Hermite tensor of $h^\star$ \citep{grad_1949_note} -- in particular, $C_i^{(k)}$ is a tensor of order $i$.
We have the analogous result to Lemma~\ref{lem:app:phi_diff}:

\begin{lemma}
    Under Assumption~\ref{assump:derivatives}, the function $\psi(\vec{p}; \rho_0)$ is analytic in $\rho_0$, with
    \[  \left.\frac{\partial^k \psi(\vec{p}; \rho_0)}{\partial \rho_0^k}\right|_{\rho_0 = 0} = u_0^{(k)}C_1^{(k)} + u_1^{(k)}C_2^{(k)} \vec{p} + o(\norm{\vec{p}}). \]
\end{lemma}

\begin{proof}
    As in Lemma~\ref{lem:app:phi_diff}, we can write $\psi(\vec{0}) = \psi(\vec{0}, \rho_0)$, with
    \[  \left.\frac{\partial^k \psi(\vec{p}; \rho_0)}{\partial \rho_0^k}\right|_{\rho_0 = 0} = \Ea{h^\star(\vec{z}^\star)^{k+1} \vec{z}^\bot \sigma^{(k+1)}(z) \sigma'(z)^k}. \]
     From \cite[Lemma 19]{dandi2023twolayer}, we have
    \[ \Ea{h^\star(\vec{z}^\star)^{k+1} \vec{z}^\star \sigma^{(k+1)}(z) \sigma'(z)^k} = \sum_{i=0}^\infty u_{i+1}^{(k)} \langle \vec{p}^{\otimes i}, C_i^{(k)} \rangle \vec{p} + \sum_{i = 0}^\infty u_{i}^{(k)} C_{i+1}^{(k)} \times_{1, \dots, i} \vec{p}^{\otimes i}. \]
    Since $\vec{z}^\bot = \vec{z} - z\vec{p}$, we find
    \begin{equation}
        \left.\frac{\partial^k \psi(\vec{p}; \rho_0)}{\partial \rho_0^k}\right|_{\rho_0 = 0} = \sum_{i=0}^{\infty} u_i^{(k)} \left( C_{i+1}^{(k)} \times_{1, \dots, i} \vec{p}^{\otimes i} - \langle C_{i+1}^{(k)}, \vec{p}^{\otimes (i+1)} \rangle \vec{p} \right).
    \end{equation}
    For $i \geq 0$, we have $\norm{C_{i+1}^{(k)} \times_{1, \dots, i} \vec{p}^{\otimes i}} = O\left(\norm{\vec{p}}^i\right)$ and $\langle C_{i}^{(k)}, \vec{p}^{\otimes (i)} \rangle = O\left(\norm{\vec{p}}^i\right)$. The only terms in the sum which are not of order $\norm{\vec{p}}^2$ are the first two terms in the left sum, which yields the statement of the Lemma.
\end{proof}

We are now ready to show the first part of Theorem~\ref{thm:multi_index_recovery}. We begin with a characterization of $U^\star$ whenever $\ell_p^\star = 1$:
\begin{lemma}
    If $\ell_p^\star = 1$, then
    \[ U^\star = \operatorname{span}\left(\left\{C_1^{(k)}\right\}_{k \geq 1}\right). \]
    If $\ell_p^\star = 2$, then
    \[ U^\star = \bigoplus_{k\geq 1}\mathrm{Im}\left(C_2^{(k)}\right). \]
\end{lemma}

\begin{proof}
    By the properties of the Hermite tensors, we have for any $v \in V^\star$ and any polynomial $p$
    \[ \Ea{y^k H_1(\langle \vec{v}, \vec{z} \rangle)} = \langle \vec{v}, C_1^{(k)} \rangle \]
    As a result, if $\vec{v} = \sum_{k \geq 0} v_k C_1^{(k)}$, then by letting $p(y) = \sum_{k \geq 0} a_k y^k$ we have
    \[ \Ea{p(y) H_1(\langle \vec{v}, \vec{z} \rangle)} = \norm{\vec{v}}^2 \neq 0.\]
    Conversely, if $\vec{v}$ is orthogonal to every vector $C_1^{(k)}$, then $\Ea{y^k H_1(\langle \vec{v}, \vec{z} \rangle)} = 0$ for all $k \geq 0$, which implies that $\vec{v} \notin U^\star$.

    The case $\ell_p^\star = 2$ is similar, since this time
    \[ \Ea{y^k H_2(\langle \vec{v}, \vec{z} \rangle)} = \vec{v}^\top C_2^{(k)} \vec{v}. \]
    As a result, for fixed $k$, the largest vector space $U_k$ such that $\Ea{y^k H_2(\langle \vec{v}, \vec{z} \rangle)}$ for all $\vec{v}\in U_k$ is $\ker(C_2^{(k)}) = \mathrm{Im}(C_2^{(k)})^\bot$. The result ensues from $U^\star = \bigcap_{k \geq 1} U_k^\bot$.
\end{proof}

Using Assumption~\ref{assump:derivatives}, the following corollary ensues:
\begin{corollary}
    If $\ell_p^\star = 1$, then for any $\vec{v}\in U^\star$ and almost every $\rho_0$,
    \[ \langle \vec{v}, \psi(\vec{0}) \rangle \neq 0. \]
    As a result, with probability one, the set $\{ \psi(\vec{0}; \rho_{0, i}) \}_{i \in [p]}$ spans $U^\star$ as soon as $p \geq k$.
    
    If $\ell_p^\star = 2$, then with probability at least $1/2$ on the choice of $\rho_0, a_0$, $d\psi(\vec{0})$ is symmetric and has a strictly positive eigenvalue.
\end{corollary}

\begin{proof}
    Let $\vec{v}\in U^\star$, and $k \geq 1$ such that $\langle \vec{v}, C_1^{(k)} \rangle \neq 0$. Then 
    \[\left.\frac{\partial^k \langle \vec{v}, \psi(\vec{0}; \rho_0) \rangle}{\partial \rho_0^k}\right|_{\rho_0 = 0} = u_0^{(k)} \langle \vec{v}, C_1^{(k)} \rangle \neq 0, \]
    so the function $\rho_0 \mapsto \langle \vec{v}, \psi(\vec{0}; \rho_0) \rangle$ is not the zero function. Since it is also analytic, it is nonzero for almost every choice of $\rho$.
    
    We now prove that if $p \leq \dim(U^\star)$, then the family $\{ \psi(\vec{0}; \rho_{0, i}) \}_{i \in [p]}$ is free, which implies our second result. Assume by recursion that $\{ \psi(\vec{0}; \rho_{0, i}) \}_{i \in [p-1]}$ is free, and let $\vec{v}_p \in U^\star$ orthogonal to this family. Then with probability one, $\langle \vec{v}_p, \psi(\vec{0}; \rho_{0, p}) \rangle \neq 0$, so $\psi(\vec{0}; \rho_{0, p})$ is independent from $\{ \psi(\vec{0}; \rho_{0, i}) \}_{i \in [p-1]}$.

    Finally, if $\ell_p^\star = 2$, then by symmetry of the distribution of $\rho_0$ we can write $\psi(\vec{p}; a_0, \rho_0) = a_0 \psi(\vec{p}; 1, \rho')$ with $\rho' = a_0 \rho_0$ chosen independently from $a_0$. Writing
    \[ d\psi(\vec{0}; \rho') = \sum_{k \geq 1} (\rho')^k u_1^{(k)} C_2^{(k)}, \]
    we can see that $d\psi(\vec{0})$ is a symmetric matrix, which is non-zero with probability one on $\rho'$. Since either $d\psi(\vec{0})$ or $-d\psi(\vec{0})$ has a positive eigenvalue, $a_0d\psi(\vec{0})$ does with probability $1/2$.
\end{proof}

We are now ready to prove Theorem~\ref{thm:multi_index_recovery}. 
\begin{proof}[of Theorem~\ref{thm:multi_index_recovery}]
    We begin with the case $\ell_p^\star = 1$. Fix a $p = p(\delta)$ to be chosen later, and let $\psi_i(\vec{p}) = \psi(\vec{p}; a_{0, i}, \rho_{0, i})$ for $i \in [p]$. With probability $1/2$ on $\vec{w}_{0, i}$, we have $\langle \vec{w}_{0, i}, \psi_i(\vec{0}) \rangle$; let $p_+$ be the number of such neurons. For $p = O(\log(1/\delta))$, we have $p_+ > \max(p/4, k)$ with probability at least $1 - \frac\delta2$.

    In particular, with probability one, the set $\{ \psi_i(\vec{0}) \}_{i\in [p_+]}$ spans $U^\star$. Let $\epsilon > 0$ to be chosen later, $\eta_i(\epsilon)$ adapted to $\psi_i$ in Proposition~\ref{prop:app:hitting_times}, $\tau_i$ the hitting time for $\eta_i$ in the $i$-th neuron. Consider the time $\tau = \min_i \tau_i$; applying Proposition \ref{prop:app:hitting_times} to $\delta' = \delta/p$, we have
    \[ \tau \leq \max_{i \in [p_+]} \frac{\eta_i(1 + \sqrt{k}\epsilon)}{\gamma_0 \norm{\psi_i(\vec{0})}} d \leq C(\delta) d. \]
    However, we also have $\tau > c(\delta) \gamma^{-1}$, and hence from Proposition~\ref{prop:app:low_dim_linear}, for any $i \in [p_+]$
    \[ \norm{\vec{w}_{t, i}} \geq \gamma \tau \norm{\psi_i(\vec{0})} - \sqrt{k}\eta_i\epsilon \geq c'(\delta) \]
    for small enough $\epsilon$. This proves the first part of the theorem. Now, consider the vectors
    \[ \vec{u}_i = \frac{\Pi^\star \vec{w}_{\tau, i}}{\norm{\Pi^\star \vec{w}_{\tau, i}}}; \]
    by the same argument as in Proposition~\ref{prop:app:hitting_times}, for any $i \in [p_+]$,
    \[ \norm{\vec{u}_i - \frac{\psi_i(\vec{0})}{\norm{\psi_i(\vec{0})}}} \leq \sqrt{k}\epsilon. \]
    Since the $\{\psi_i(\vec{0})\}_{i\in[p_+]}$ span $U^\star$ with probability one, so do the $\{\vec{u}_i\}_{i \in [p_+]}$ for a small enough choice of $\epsilon$.

    For the case $\ell_p^\star = 2$, we define the $p_+$ ``good'' neurons as those with $d\psi_i(\vec{0})$ having a positive eigenvalue. Since this happens with probability at least $1/2$ independently for each neuron, the bounds on $p(\delta)$ from earlier still apply.
    For those neurons, and small enough $\epsilon$, we define the $\eta_i(\epsilon)$ as those in Proposition~\ref{prop:app:hitting_times}, and $\tau = \max_i \tau_i$. Then by Proposition~\ref{prop:app:hitting_times}, we have $\tau \leq C(\delta)d\log(d)^2$ for some constant $C(\delta)$.

    Now, fix a vector $i \in [p_+]$. We define the $j$-th ``excursion'' to be the $j$-th sequence of times $\tau_i^{(j)}, \dots, \tau_i^{(j)} + e_i^{(j)}$ such that $\norm{\Pi^\star\vec{w}_{t, i}} \geq \eta_i$. Then:
    \begin{itemize}
        \item  With probability at least $1 - e^{-c\gamma}$, we have $\norm{\vec{w}_{t+1, i} - \vec{w}_{t, i}} \leq C \sqrt{\gamma}$, and thus for any excursion $j$ we have $\norm{\Pi^\star\vec{w}_{\tau_i^{(j)} + e_i^{(j)} +1, i}} \geq \eta_i/2$.
        \item between two excursions, the results of Proposition~\ref{prop:app:low_dim_exp} apply, and thus the norm of $\norm{\Pi^\star \vec{w}_{t, i}}$ increases exponentially.
    \end{itemize}
   As a result, for $\tau_i \leq t \leq \tau$, we have $\norm{\Pi^\star \vec{w}_{t, i}} > \eta_i / 2$, which finishes the proof of Theorem~\ref{thm:multi_index_recovery}.
\end{proof}

\section{Additional proofs}

\subsection{Proof of Lemma~\ref{lem:bias_activation}}

Let $\sigma$ be an analytic function which is not a polynomial. Then for any $k \geq 0$, the function $\sigma^{(k)}$ is also analytic and non-identically zero, and hence so is the function $f_n = \sigma^{(k)} \cdot (\sigma')^k$. Lemma \ref{lem:bias_activation} thus ensues from the following result:
\begin{lemma}
  Let $f$ be an analytic and non-identically zero function. Then
  \[ \Ea{f(x+b)} \neq 0 \quad \text{and} \quad \Ea{x f(x+b)} \neq 0 \]
  for almost every $b$ under the Lebesgue measure.
\end{lemma}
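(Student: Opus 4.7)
The plan is to show that both $F(b) := \Ea{f(x+b)}$ and $G(b) := \Ea{x f(x+b)}$ (with $x \sim \cN(0,1)$) are real-analytic in $b$ and not identically zero; the lemma then follows at once, since the zero set of a non-trivial real-analytic function on $\mathbb{R}$ is discrete and hence Lebesgue-null.

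Analyticity is routine: rewriting $F(b) = \int f(y)\phi(y-b)\,dy$ with $\phi$ the standard Gaussian density, the super-exponential decay of $\phi$ lets dominated convergence justify differentiation under the integral to all orders and, in fact, analytic continuation of $F$ to a strip in $\mathbb{C}$. The core step is then to prove $F \not\equiv 0$. The quickest route is to observe that $F = f \ast \phi$, so that, treating $f$ as a tempered distribution (which only requires the mild growth on $f$ needed to make $F(b)$ well-defined in the first place), taking Fourier transforms gives $\widehat{F} = \widehat{f} \cdot \widehat{\phi}$; since $\widehat{\phi}$ never vanishes, $F \equiv 0$ forces $\widehat{f} \equiv 0$ and hence $f \equiv 0$, contradicting the hypothesis. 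An equivalent path is to compute the Taylor coefficients $F^{(n)}(0) = \Ea{f^{(n)}(x)} = \Ea{f(x) H_n(x)}$, where the second equality follows from integration by parts together with the Rodrigues identity $\phi^{(n)} = (-1)^n H_n \phi$ for the probabilists' Hermite polynomials; $F \equiv 0$ would make all Hermite coefficients of $f$ vanish, contradicting completeness of the Hermite basis of $L^2(\phi)$.

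For $G$, the Gaussian integration-by-parts (Stein's) identity $\Ea{x g(x)} = \Ea{g'(x)}$ applied to $g(x) = f(x+b)$ yields $G(b) = \Ea{f'(x+b)}$; that is, $G$ is exactly the function $F$ associated with $f'$ in place of $f$. Since $f$ is analytic and non-constant in the application (in Lemma~\ref{lem:bias_activation} one has $f = \sigma^{(n)}(\sigma')^{n-1}$ with $\sigma$ analytic and non-polynomial, so $f'$ is non-identically zero), the argument for $F$ applies verbatim to $f'$ and yields $G \not\equiv 0$.

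The main obstacle I anticipate is the functional-analytic justification of the distributional Fourier argument (or equivalently the Hermite-completeness step) for a class of analytic $f$ broad enough to cover the intended applications. In the concrete setting of Lemma~\ref{lem:bias_activation} this is not a real issue: the boundedness of $\sigma'$ and $\sigma''$ places the relevant $f_n = \sigma^{(n)}(\sigma')^{n-1}$ (at least for the orders that arise in practice) comfortably in $L^2(\phi)$ and controls its Fourier transform as a tempered distribution, so every manipulation above becomes rigorous.
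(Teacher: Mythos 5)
Your Hermite-coefficient route is exactly the paper's own proof: the paper defines $\psi(b) = \Ea{f(x+b)}$, computes $\psi^{(k)}(0) = \Ea{f^{(k)}(x)} = c_k u_k(f)$ (a nonzero multiple of the $k$-th Hermite coefficient), concludes $\psi$ is a non-identically-zero analytic function, and then handles the second expectation via Stein's lemma, $\Ea{xf(x+b)} = \Ea{f'(x+b)}$. Your Fourier-transform route ($\widehat{f\ast\phi} = \widehat{f}\widehat{\phi}$ with $\widehat{\phi}$ nowhere vanishing) is a repackaging of the same underlying fact (non-degeneracy of the Gaussian convolution kernel), so the proof is essentially the same. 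One genuinely useful observation you make that the paper elides: as literally stated the lemma is false for nonzero constant $f$, since then $\Ea{xf(x+b)} \equiv 0$; the correct hypothesis for the second conclusion is that $f'$ (not merely $f$) be non-identically zero. You correctly note that in the intended application $f = \sigma^{(n)}(\sigma')^{n-1}$ with $\sigma$ analytic non-polynomial, so $f$ is non-constant and the argument applies to $f'$; the paper's proof silently assumes this.
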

\begin{proof}
  The function $\psi(b) = \Ea{f(x+b)}$ is analytic in $b$, and we have
  \[ \psi^{(k)}(0) = \Ea{f^{(k)}(x)} = c_k u_k(f), \]
  where $u_k(f)$ is the $k$-th Hermite coefficient of $f$ and $c_k$ is a nonzero absolute constant. Since $f$ is nonzero, at least one of its Hermite coefficients is nonzero, and $\psi$ is a non-identically zero analytic function. As a result, $\psi(b) \neq 0$ for almost every choice of $b$.

  The other case is handled as the first one, noting that by Stein's lemma
  \[ \Ea{x f(x+b)} = \Ea{f'(x+b)}. \]
\end{proof}
The above shows that for fixed $n \in \mathbb{N}$, the set
\[ \mathcal{B}_n = \left\{ b \in \R \ : \ \Ea{\sigma^{(n)}(x+b) \sigma'(x+b)^{n-1}} = 0 \quad \text{or} \quad \Ea{x \sigma^{(n)}(x+b) \sigma'(x+b)^{n-1}} = 0  \right\} \]
has measure $0$. Since there is a countable number of such subsets, the set
\[ \mathcal{B} = \bigcup_{n \in \mathbb{N}} \mathcal{B}_n \]
also has measure zero, which is equivalent to the statement of Lemma~\ref{lem:bias_activation}.

\subsection{Proof of Theorem~\ref{thm:poly_pge}}

We decompose the polynomial $h^\star$ as an even and odd part $e(x)$ and $o(x)$, with degrees $d_o$ and $d_e$. We first assume that $o$ is non-zero, and that the leading coefficient of $o$ is positive. Then, for odd $m \geq 0$, we have
\[ \Ea{xh^\star(x)^m} = \sum_{k = 0}^m \dbinom{m}{k} \Ea{x e(x)^k o(x)^{m-k}} \]
Each term in the above sum is zero if $k$ is odd, we focus on the terms where $k$ is even. There exist constants $A, \varepsilon > 0$ such that if $|x| \geq A$,
\[ |e(x)| \geq \varepsilon |x|^{d_e} \quad \text{and} \quad |o(x)| \geq \varepsilon |x|^{d_o}, \]
and we let
\[ B = \sup_{x \in  [-A, A]} |e(x)| \vee \sup_{x \in  [-A, A]} |o(x)| \]
As a result, when $k$ is even, both $e(x)^k$ and $xo(x)^{m-k}$ are even polynomials with positive leading coefficients, so
\begin{align*}
  \Ea{x e(x)^k o(x)^{m-k}} &= \Ea{x e(x)^k o(x)^{m-k} \mathbf{1}_{x\in [-A, A]}} + \Ea{x e(x)^k o(x)^{m-k} \mathbf{1}_{x\notin [-A, A]}} \\
  &\geq \varepsilon^2\Ea{ x^{kd_e + (m-k)d_o+1} \mathbf{1}_{x\notin [-A, A]}} - AB^m
\end{align*}
Let $d(m, k) = kd_e + (m-k)d_o+1$. Then,
\begin{align*}
  \Ea{x e(x)^k o(x)^{m-k}} \geq \varepsilon^2\Ea{x^{d(m, k)}} -  A^{d(m, k)} - AB^m.
\end{align*}
Going back to the sum, and with the crude bound $\dbinom{m}{k} \leq 2^m$,
\begin{align*}
  \Ea{xh^\star(x)^m} &\geq \varepsilon^2\Ea{x^{d(m, 0)}} - \sum_{k=1}^m 2^m\left(A^{d(m, k)} + AB^m\right) \\
  &\geq \Ea{x^{m+1}} - C^m
\end{align*}
Since the Gaussian moments grow faster than any power of $m$, this last expression is non-zero for a large enough choice of $m$.

Finally, if $o$ is the zero polynomial, we can do the same reasoning with $e(x)^m$ to get
\[ \Ea{(x^2-1) e(x)^m} = \varepsilon^2(\Ea{x^{m d_e+2}} - \Ea{x^{m d_e}}) - C^m = \varepsilon^2 md_e \Ea{x^{m d_e}} - C^m,\]
and we conclude as before.

\subsection{Hardness of sign functions}
We show in the appendix the following statement:
\begin{proposition}\label{prop:app:sign_hard}
  Let $m \in \mathbb{N}$, and
  \[ h^\star(\vec{x}) = \mathrm{sign}(x_1\dots x_m) \]
  Then the function $h^\star$ has Information and Generative Exponents $\ell = \ell^\star = m$.
\end{proposition}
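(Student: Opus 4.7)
--}
The strategy rests on two observations, one about the structure of $\pm 1$-valued targets, and one about the Hermite expansion of $h^\star$.

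\emph{Step 1 (GE collapses to IE).} Since $y = \mathrm{sign}(x_1 \cdots x_m) \in \{-1, +1\}$, any real-valued function $f_{\vec v}$ on the image of $y$ is determined by its two values $f_{\vec v}(\pm 1)$, hence
\[ f_{\vec v}(y) \;=\; \frac{f_{\vec v}(1)+f_{\vec v}(-1)}{2} \;+\; \frac{f_{\vec v}(1)-f_{\vec v}(-1)}{2}\, y. \]
For every $k \geq 1$, the Hermite polynomial $H_k(\langle \vec v, \vec z \rangle)$ has zero mean, so the constant part drops out and $\Ea{f_{\vec v}(y) H_k(\langle \vec v, \vec z \rangle)}$ is a scalar multiple of $\Ea{y H_k(\langle \vec v, \vec z \rangle)}$. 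Thus $\ell^\star_{\vec v} = \ell_{\vec v}$ for every direction $\vec v$, and it suffices to show $\min_{\vec v} \ell_{\vec v} = m$.

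\emph{Step 2 (Hermite expansion of $h^\star$).} Without loss of generality the $\vec w_i^\star$ are orthonormal; write $x_i = \langle \vec w_i^\star, \vec z \rangle$, so $(x_1,\dots,x_m)$ are i.i.d.\ standard Gaussians. Expand $h^\star$ in the tensor basis of normalized probabilist Hermite polynomials $\tilde H_k := \mathrm{He}_k/\sqrt{k!}$:
\[ h^\star(x_1,\dots,x_m) \;=\; \sum_{\vec k \in \N^m} \hat h^\star(\vec k) \prod_{i=1}^m \tilde H_{k_i}(x_i),\qquad \hat h^\star(\vec k) = \EEb{}{h^\star(X)\prod_i \tilde H_{k_i}(X_i)}. \]
Because $\mathrm{sign}(x_1 \cdots x_m)$ is odd in each coordinate, $\hat h^\star(\vec k)$ vanishes unless every $k_i$ is odd. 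In particular $\hat h^\star(\vec k) = 0$ whenever $|\vec k| < m$, and the unique multi-index of total degree $m$ that contributes is $\vec k = (1,\dots,1)$, with coefficient $\hat h^\star(1,\dots,1) = \Ea{|X_1\cdots X_m|} = (2/\pi)^{m/2} \neq 0$.

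\emph{Step 3 (multinomial identity and conclusion).} For any unit $\vec v = \sum v_i \vec w_i^\star \in V^\star$, the random variable $u = \langle \vec v, \vec z \rangle = \sum_i v_i x_i$ is a standard Gaussian combination with $\sum v_i^2 = 1$, so the Hermite linearization formula
\[ \tilde H_k\!\Bigl(\sum_{i=1}^m v_i x_i\Bigr) \;=\; \sum_{|\vec k|=k} \sqrt{\binom{k}{k_1,\dots,k_m}}\, \prod_{i=1}^m v_i^{k_i}\, \tilde H_{k_i}(x_i) \]
(easily proved by induction, or checked against $\Ea{\tilde H_k(u)\tilde H_k(u)}=1$) combined with the orthonormality of the tensor basis gives
\[ \Ea{y\, \tilde H_k(\langle \vec v, \vec z \rangle)} \;=\; \sum_{|\vec k|=k} \sqrt{\binom{k}{k_1,\dots,k_m}}\, \Bigl(\prod_i v_i^{k_i}\Bigr)\, \hat h^\star(\vec k). \]
For $k < m$, every term on the right vanishes by Step~2, so $\ell_{\vec v} \geq m$ for every $\vec v$. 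For $k = m$ the only surviving multi-index is $(1,\dots,1)$, yielding $\sqrt{m!}\, (\prod_i v_i)\, (2/\pi)^{m/2}$, which is nonzero as soon as all $v_i \neq 0$ (e.g.\ $\vec v = m^{-1/2}(1,\dots,1)$). Hence $\min_{\vec v}\ell_{\vec v} = m$, and combined with Step~1 we conclude $\ell = \ell^\star = m$.

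There is no real obstacle here; the only thing to be a little careful about is normalization conventions in the Hermite multinomial identity, and the observation in Step~1 that binariness of $y$ wipes out any potential gain from nonlinear label transformations.
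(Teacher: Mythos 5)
Your proof is correct, and the structure is recognizably the same as the paper's (odd symmetry of $\mathrm{sign}$ in each coordinate drives the lower bound; the all-ones direction provides the upper bound), but the mechanics differ in a couple of places worth noting.

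Your Step~1 is a genuinely cleaner observation than what the paper does. The paper proves a lemma stating that for any $f:\{-1,1\}\to\R$ and any $g$ independent of $x_m$, one has $\Ea{f(h^\star(\vec x))g(\vec x)} = \tfrac{f(1)+f(-1)}{2}\Ea{g(\vec x)}$, and then carries the arbitrary $f$ through the entire lower-bound argument monomial by monomial. You instead note upfront that any $f$ on a binary alphabet is an affine function of $y$, the constant part is killed by mean-zero Hermites, and therefore $\ell^\star_{\vec v} = \ell_{\vec v}$ identically. This reduces the whole proposition to the IE case in one line, which is tighter and avoids the slight awkwardness in the paper's lemma (which is phrased as ``independent from $x_m$'' but then applied to monomials that may miss any variable, requiring an implicit symmetry step).

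For the remaining $\min_{\vec v}\ell_{\vec v}=m$, the paper argues that every monomial of $H_k(\langle\vec v,\vec x\rangle)$ with $k<m$ omits at least one coordinate, and that adding a missing coordinate randomly flips $\mathrm{sign}(x_1\cdots x_m)$, killing the expectation; you instead expand $h^\star$ and $H_k(\langle\vec v,\vec z\rangle)$ in the tensor Hermite basis and use that $h^\star$ is odd in each coordinate, so its only Hermite coefficients of total degree $\le m$ sit at $(1,\dots,1)$. These are two presentations of the same parity fact; yours is a bit more systematic since it packages the entire argument into orthogonality of the tensor basis plus the multinomial Hermite identity, at the cost of having to be careful about Hermite normalizations (which you flag). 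Both upper bounds use $\vec v\propto(1,\dots,1)$ and compute $\Ea{|x_1\cdots x_m|}=(2/\pi)^{m/2}$. No gaps.
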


We first show the lower bound. For any vector $\vec{v}\in \R^m$, and $k < m$, $H_k(\langle \vec{v}, \vec{x} \rangle)$ is a polynomial in $z$ of degree $k$, and therefore each monomial term is missing at least one variable. We show that this implies the lower bound of Proposition~\ref{prop:app:sign_hard} through the following lemma:
\begin{lemma}
  Let $f: \{-1, 1\} \to \R$ be an arbitrary function, and $g: \R^m \to \R$ a function which is independent from $x_m$. Then
  \[ \Ea{f(h^\star(\vec{x})) g(\vec{x})} = \frac{f(1)+f(-1)}{2}\Ea{g(\vec{x})}\]
\end{lemma}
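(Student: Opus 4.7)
The plan is to exploit the independence of the Gaussian coordinates together with the fact that $g$ does not depend on $x_m$, reducing the statement to a conditional expectation over $x_m$ alone.

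First, I would use the tower property, conditioning on the sigma-algebra generated by $x_1, \dots, x_{m-1}$. Since $g(\vec{x})$ is independent of $x_m$, it is measurable with respect to this sigma-algebra and can be pulled out of the inner expectation, giving
\[
\Ea{f(h^\star(\vec{x}))\, g(\vec{x})} = \Ea{g(\vec{x})\, \E\!\left[f\!\left(\mathrm{sign}(x_1 \cdots x_m)\right) \,\middle|\, x_1, \dots, x_{m-1}\right]}.
\]
The coordinates being independent standard Gaussians, $x_i \neq 0$ almost surely, so for almost every realization of $(x_1, \dots, x_{m-1})$ we may set $s := \mathrm{sign}(x_1 \cdots x_{m-1}) \in \{-1, +1\}$ and factor $\mathrm{sign}(x_1 \cdots x_m) = s \cdot \mathrm{sign}(x_m)$.

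Next, since $x_m$ is independent of $(x_1, \dots, x_{m-1})$ and has a symmetric (zero-mean Gaussian) distribution, $\mathrm{sign}(x_m)$ takes values $\pm 1$ each with probability $1/2$. Therefore the inner conditional expectation evaluates to
\[
\E\!\left[f(s \cdot \mathrm{sign}(x_m)) \,\middle|\, x_1, \dots, x_{m-1}\right] = \tfrac{1}{2} f(s) + \tfrac{1}{2} f(-s) = \tfrac{1}{2}\bigl(f(1) + f(-1)\bigr),
\]
where the last equality uses $\{f(s), f(-s)\} = \{f(1), f(-1)\}$ regardless of the value of $s \in \{-1,+1\}$. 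This constant can be pulled out of the outer expectation over $g(\vec{x})$, yielding the claimed identity.

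There is no real obstacle here beyond carefully writing the conditioning step; the key structural observation is simply that the hidden coordinate $x_m$ acts as a symmetric sign flip on $h^\star(\vec{x})$, which symmetrizes $f \circ h^\star$ against any function that ignores $x_m$. The lemma then immediately implies the lower bound $\ell^\star \geq m$ in Proposition~\ref{prop:app:sign_hard}: any Hermite polynomial $H_k(\langle \vec{v}, \vec{x} \rangle)$ with $k < m$ is a polynomial of degree less than $m$ in $m$ variables, hence each monomial omits some coordinate $x_j$, and after summing one can reduce to this case (up to a coordinate relabeling placing the omitted variable in position $m$) to conclude that $\Ea{f(h^\star(\vec{x})) H_k(\langle \vec{v}, \vec{x} \rangle)} = 0$ whenever $f(1) + f(-1) = 0$, while the case $f(1) + f(-1) \neq 0$ gives a multiple of $\Ea{H_k(\langle \vec{v}, \vec{x}\rangle)} = 0$ for $k \geq 1$.
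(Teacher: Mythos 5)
Your proof is correct and follows essentially the same route as the paper: condition on $x_1,\dots,x_{m-1}$ via the tower property, pull $g$ out since it is measurable with respect to that sigma-algebra, and observe that the independent symmetric coordinate $x_m$ randomly flips the sign of $h^\star$, making the inner conditional expectation equal $\tfrac12(f(1)+f(-1))$. You simply spell out the last step a bit more explicitly (factoring $\mathrm{sign}(x_1\cdots x_m)=s\cdot\mathrm{sign}(x_m)$ and noting $\{f(s),f(-s)\}=\{f(1),f(-1)\}$) than the paper does.
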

\begin{proof}
  We simply write
  \begin{align*}
    \Ea{f(h^\star(\vec{x})) g(\vec{x})}
    &= \Ea{\Ea{f(h^\star(\vec{x})) g(\vec{x}) \mid x_1, \dots, x_{m-1}}}\\
    &= \Ea{g(\vec{x}) \Ea{f(h^\star(\vec{x})) \mid x_1, \dots, x_{m-1}}} \\
    &= \Ea{g(\vec{x}) \frac{f(1)+f(-1)}{2}} \\
    &=  \frac{f(1)+f(-1)}{2}\Ea{g(\vec{x})},
  \end{align*}
  where at the second line we used that $g$ only depends on $x_1, \dots, x_{m-1}$ and at the third line the addition of $x_m$ randomly flips the sign of $h^\star$.
\end{proof}

Using this property on every monomial in $H_k(\langle \vec{v}, \vec{x} \rangle)$ and summing back, we have for any function $f$
\begin{equation}
  \Ea{f(h^\star(x)) H_k(\langle \vec{v}, \vec{x} \rangle)} = \frac{f(1)+f(-1)}{2}\Ea{H_k(\langle \vec{v}, \vec{x} \rangle)}
\end{equation}
But the latter is zero when $k\geq 1$ by orthogonality of the Hermite polynomials. This shows that $\ell_{\vec{v}}^\star \geq m$ for all $\vec{v}\in\R^k$, and therefore $m \leq \ell^\star \leq \ell$.

For the upper bound, let $\vec{v} = \vec{e}_1 + \dots + \vec{e_m}$. Then
\[ H_k(\langle \vec{v}, \vec{x} \rangle) = m! \, x_1\dots x_m + Q(\vec{x}), \]
where $Q$ is a polynomial where each monomial has a missing variable. As a result,
\begin{align*}
  \Ea{h^\star(\vec{x}) H_k(\langle \vec{v}, \vec{x} \rangle)} &= m!\, \Ea{h^\star(\vec{x}) x_1\dots x_m} \\
  &= m!\, \Ea{|x_1\dots x_m|} \\
  &= m! \left( \sqrt{\frac2\pi} \right)^m
\end{align*}
which is a non-zero value, hence $\ell^\star \leq \ell \leq m$.

\section{Additional Numerical Investigation}
\label{sec:app:numerics}
In this section, we present further numerical investigation to support our theory and extend it beyond the mathematical hypotheses used in the formal proof. 

\subsection{Testing wider range of Algorithm~\ref{algo:optimizer_main_step}}
In the main we presented all the numerical simulation using \(\rho>0\), i.e. \emph{ExtraGradient method/descent}. 
In Figure~\ref{fig:app:other-algos}~Left, we repeat one of the experiment of Figure~\ref{fig:main:single_index_complete} in the case \(\rho<0\), the \emph{Sharpe Aware Minimization}. The two algorithms are based on very different principle, but in our context they behave in the same way: the reusing of the data makes high information exponent function easily learnable. The global picture for \(\rho\neq0\) is the same regardless of its sign. 

\begin{figure}
    \centering
    \includegraphics[width=0.4\textwidth]{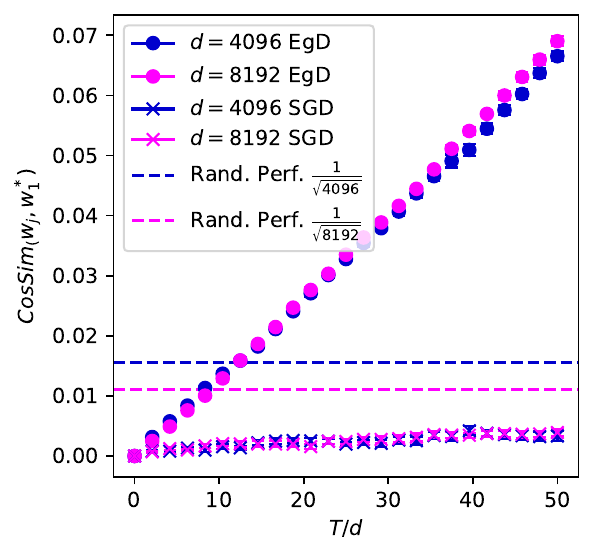}
    \includegraphics[width=0.4\textwidth]{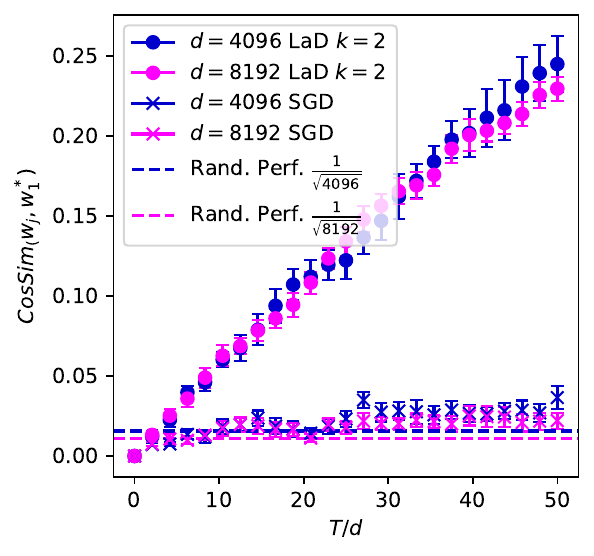}
    \caption{
        example of two different algorithm with data repetitions learning an hard single-index target \(h^\star(\vec z)=\mathrm{He}_3(z_1)\), \(\ell =3, \ell^\star=1\).
        \textbf{Left}: SAM, with \(\rho_0=0.1,\gamma_0=0.01\).
        \textbf{Right}: 2-Lookahed with \(\gamma_0=0.1\).
        See details in App.~\ref{sec:app:implementation}.
    }
    \label{fig:app:other-algos}
\end{figure}
Altought Algorithm~\ref{algo:optimizer_main_step} includes a broad variety of algorithm, it is not the most general domain of validity of our theory. Simple algorithms such as repeating the gradient step twice on the same data before discarding it (also known as \emph{2-lookahead optimizer} \citep{lookahead}) are not part of Algorithm~\ref{algo:optimizer_main_step}, yet our considerations are still valid. In fact, the data repetition is sufficient to introduce correlations across two consecutive steps, whose effect builds up along the dynamics causing the network to learn hard targets. A simple illustration of this claim is available in Figure~\ref{fig:app:other-algos}~Right.

\subsection{Extensive batch size}
In this section, we want to present some evidence that our result does not change when the batch size is $1 < n_b \le O(d^{\frac{\ell^\star}{2}})$. The formal proof we presented in the paper is valid in the case where the batch size is \(n_b=1\), but we strongly believe it is the case when training with larger batches.

In the context of the information exponent, it has been shown in \cite{arnaboldi2024online} that the total sample complexity does not change when using batch sizes \(n_b \le O\left(d^{\frac{\ell}{2}}\right)\). The heuristic argument for this behavior is that using a larger batch size increases the learning rate since the noise of the gradient is ``more'' averaged out, ultimately reducing the number of steps.
The same argument should pass from information exponent to generative exponent, allowing the claim (when \(n_b \le O\left(d^{\frac{\ell^\star}{2}}\right)\)) 
\begin{align}
    T \cdot n_b \sim 
        \begin{cases}
        &O(d^{\ell -1} ) \qquad \hspace{0.8em}\text{if $\ell^\star>2$} \\
        &O(d \log{d} ) \qquad \text{if $\ell^\star=2$} \\
        &O(d) \qquad \hspace{2.2em}\text{if $\ell^\star=1$}
    \end{cases}.
\end{align}

Note that this drastically reduces the time steps needed to learn the teacher's directions, although the sample complexity does not change. \cite{dandi2024benefits} first shows that some high-information exponent functions, such as \(\mathrm{He}_3\), can be learned in just a few steps with full-batch gradient descent \(n_b = O(d)\). In the same work, they introduce a larger class of hard functions that cannot be recovered in \(T=O(1)\), not providing any information on possible bounds on the number of time steps, nor any claim suggesting that not all the function in the class have the same sample complexity. Our theory can provide a full understanding of the phenomena:
\begin{itemize}
    \item \(\mathrm{He}_3\) has generative exponent \(\ell^\star=1\), that implies we need \(O(d)\) samples to weakly recover the target; when training with batch size \(n_b=O(d)\), we have \(T=O(1)\) as in \cite{dandi2024benefits}; Figure~\ref{fig:app:single-index-large-batch}~(Left) is a numerical proof of this fact.
    \item \(\mathrm{He}_4\) has generative exponent \(\ell^\star=2\), that implies we need \(O(d\log d)\) samples to weakly recover the target; when training with batch size \(n_b=O(d)\), we have \(T=O(\log d)\); Figure~\ref{fig:app:single-index-large-batch}~(Right) is a numerical proof of this fact.
\end{itemize}

\begin{figure}
    \centering
    \includegraphics[height=0.4\textwidth]{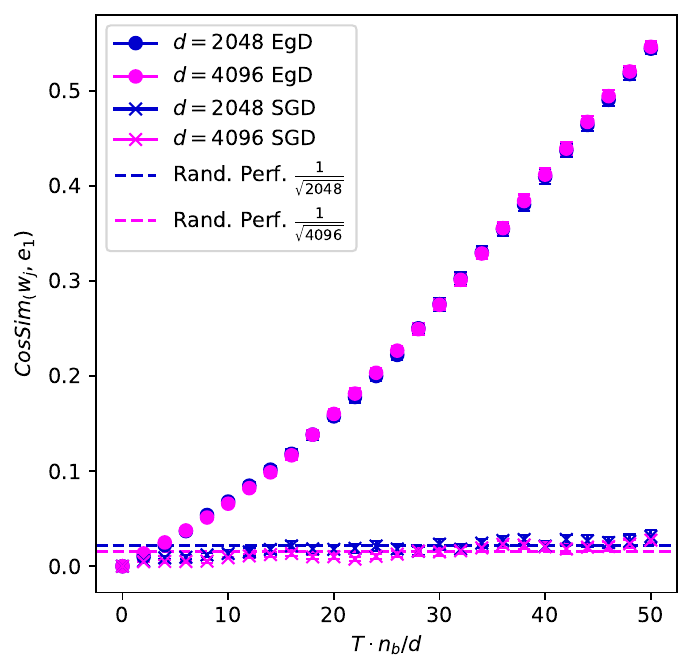}
    \includegraphics[height=0.4\textwidth]{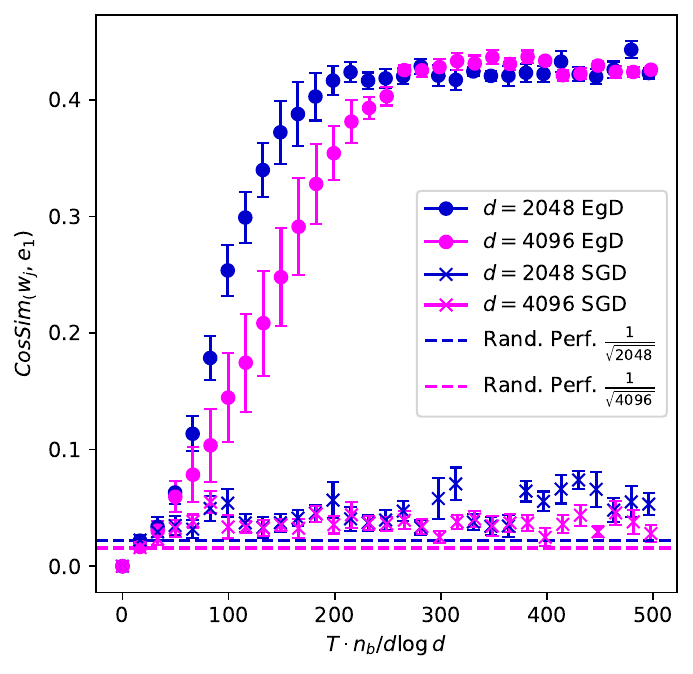}
    \caption{
        Single-index recovery for large-batch training.
        The dashed horizontal line $\frac{1}{\sqrt{d}}$ is a visual guide to place random performance.
        The plots show $\sigma = \mathrm{relu}$, $\gamma=0.01$, $\rho = 0.1$, and the performance is computed averaging across 10 different runs.
        \textbf{Left $(\ell,\ell^\star)=(3,1)$:} $h^\star = \mathrm{He}_3.$
        \textbf{Right $(\ell,\ell^\star)=(4,2)$:} $h^\star = \mathrm{He}_4$.
        See details in App.~\ref{sec:app:implementation}.
    }
    \label{fig:app:single-index-large-batch}
\end{figure}

We can push our claims beyond single-index targets and illustrate the same phenomena for multi-index functions.
In Figure~\ref{fig:app:multi-index-large-batch} we test Extragradient Descent against two hard multi-index functions: once again the sample complexity is the same as the case \(n_b=1\), while the number of steps is reduced by a factor \(n_b=d\).

\begin{figure}
    \centering
    \includegraphics[height=0.45\textwidth]{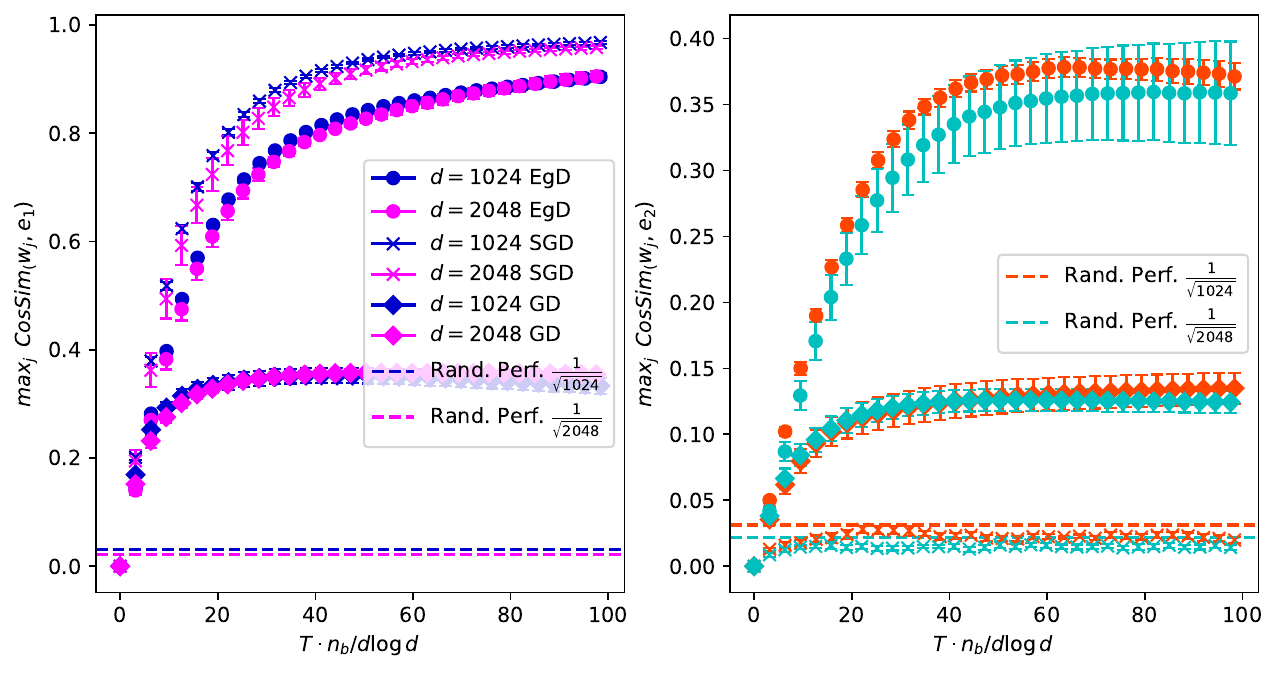}
    \includegraphics[height=0.33\textwidth]{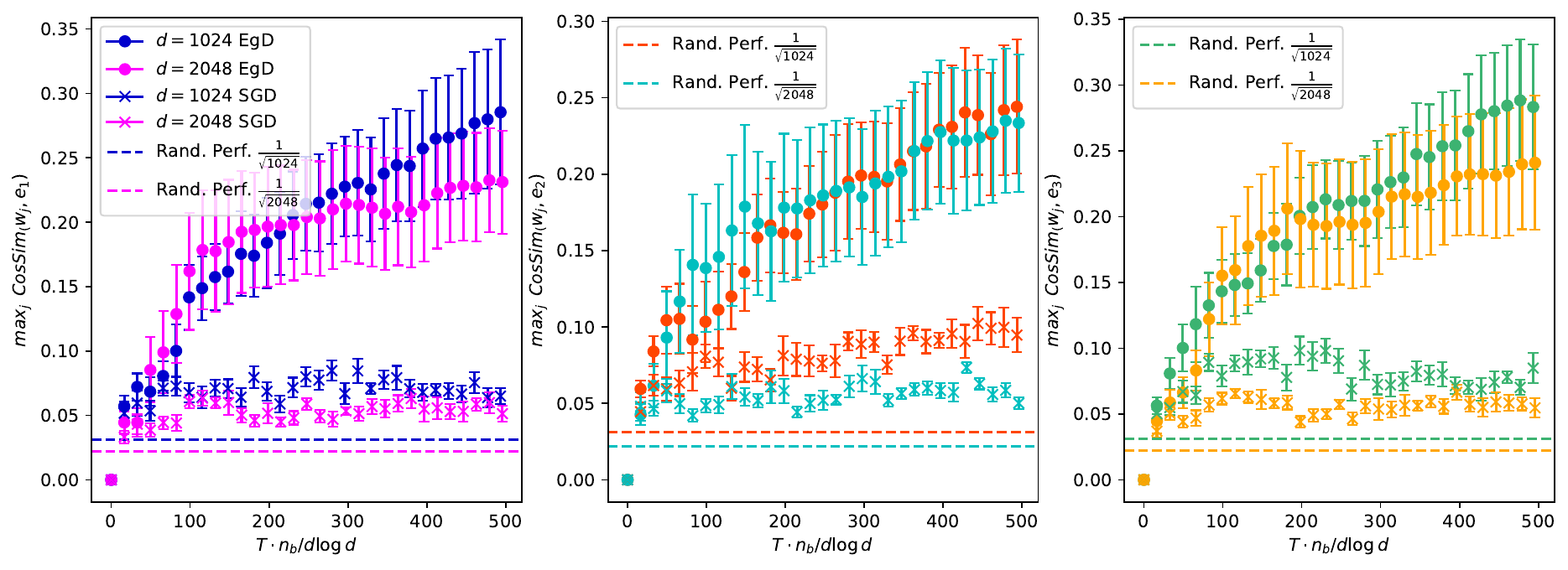}
    \caption{%
        Multi-index recovery for large-batch training.
        The dashed horizontal line $\frac{1}{\sqrt{d}}$ is a visual guide to place random performance.
        The plots show $\sigma = \operatorname{relu}$.
        \textbf{Upper $(\ell_{\vec{v}},\ell_{\vec{v}}^\star)=(3,1)$:} $h^\star(\vec z) = \operatorname{sign}(z_1z_2).$
        \textbf{Lower $(\ell_{\vec{v}},\ell_{\vec{v}}^\star)=(4,2)$:} $h^\star(\vec z) = z_1z_2z_3$.
        See Appendix~\ref{sec:app:implementation} for details.}
    \label{fig:app:multi-index-large-batch}
\end{figure}

\section{Implementation Details} \label{sec:app:implementation}
In this section. we provide all the implementation details needed to reproduce the Figures of the paper. The full detailed implementation can be repository linked in the main paper.

Unless explicitly stated, we always run the plain version of Algorithm~\ref{algo:optimizer_main_step} with squared loss, as opposed to Assumption~\ref{assump:spherical_corr} which requires \emph{correlation loss} and spherical gradient. We use the reLU function as an activation. Finally, we always run simulations with \(W^0\bot W^\star\), which does not fall under Assumption~\ref{assump:init}. Indeed, our numerical simulations deviate from the theoretical assumptions needed for the formal proofs, but the aim here is to show that our claims are valid beyond the technical limitations of the theory and that they can allow us to understand settings closer to what is used in practice.

In the plots with a single-index target we plot the \emph{absolute value} of the cosine similarity between the student weight \(\vec{w}\) and the teacher weight \(\vec{w}^\star\), averaged across \(N\) different runs (where we vary both seen samples and initial conditions). If \(q\) in the index for different runs:
\[\text{yaxis}(t)=\frac1N\sum_{q=1}^N\left|\mathrm{CosSim}(\vec w^{(q)}_t,\vec w^{\star{(q)}})\right|,\]
where
\[\mathrm{CosSim}(\vec w,\vec w^{\star}) = \frac{\vec w\cdot\vec w^\star}{\norm{\vec w}_2\norm{\vec{w}^\star}_2}.\]
In the plots with a multi-index target, we take the maximum across all the network weights
\[\text{yaxis}(t)=\frac1N\sum_{q=1}^N\max_{j\in[p]}\left|\mathrm{CosSim}(\vec w^{(q)}_{j,t},\vec w^{\star{(q)}})\right|.\]
\subsection{Hyperparameters of the Figures}
Here are the hyperparameters used in the figures:
\begin{itemize}
    \item \textbf{Figure~\ref{fig:main:single_index_complete} (all 4)}: \(d=8192, \gamma_0=0.01, \rho_0 = 0.1, N=40\).
    \item \textbf{Figure~\ref{fig:main:multi_index_complete} (left)}: \(p=8, d=2048, \gamma_0=0.01, \rho_0 = 0.1, N=40\), \(a^0_j\sim \mathrm{Rad}(1/2)\).
    \item \textbf{Figure~\ref{fig:main:multi_index_complete} (center-left)}: \(p=8, d=2048, \gamma_0=0.01, \rho_0 = 0.1, N=40\), \(a^0_j\sim \mathcal{N}(0,1)\).
    \item \textbf{Figure~\ref{fig:main:multi_index_complete} (center-right)}: \(p=8, d=2048, \gamma_0=0.1, \rho_0 = 0.1, N=40\), \(a^0_j\sim \mathcal{N}(0,1)\).
    \item \textbf{Figure~\ref{fig:main:multi_index_complete} (right)}: \(p=8, d=2048, \gamma_0=0.1, \rho_0 = 0.1, N=20\), \(a^0_j\sim \mathcal{N}(0,1)\).
    \item \textbf{Figure~\ref{fig:main:staircase1} (left)}: \(p=8, d=2048, \gamma_0=0.1, \rho_0 = 0.1, N=40\), \(a^0_j\sim \mathcal{N}(0,1)\).
    \item \textbf{Figure~\ref{fig:main:staircase1} (right)}: \(p=4, d=1024, \gamma_0=0.01, \rho_0 = 0.1, N=10\), \(a^0_j\sim \mathcal{N}(0,1)\).
     \item \textbf{Figure~\ref{fig:main:staircase2}}: \(p=4, d=1024, \gamma_0=0.01, \rho_0 = 0.1, N=10\), \(a^0_j\sim \mathcal{N}(0,1)\).
    
    \item \textbf{Figure~\ref{fig:app:other-algos} (left)}: SAM \(d=4096,8192, \gamma_0=0.01, \rho_0 = 0.1, N=40\).
    \item \textbf{Figure~\ref{fig:app:other-algos} (right)}: 2-Lookahead \(d=4096,8192, \gamma_0=0.01, \rho_0 = 0.1, N=40\).
    \item \textbf{Figure~\ref{fig:app:single-index-large-batch} (all 2)}: \(d=8192,  n_b=2d, \gamma_0=0.1, N=40\).
    \item \textbf{Figure~\ref{fig:app:multi-index-large-batch} (upper)}: \(p=8, d=2048, n_b=2d, \gamma_0=0.1, \rho_0 = 0.1, N=40\), \(a^0_j\sim \mathcal{N}(0,1)\).
    \item \textbf{Figure~\ref{fig:app:multi-index-large-batch} (lower)}: \(p=8, d=2048, n_b=2d, \gamma_0=0.1, \rho_0 = 0.1, N=20\), \(a^0_j\sim \mathcal{N}(0,1)\).
\end{itemize}

\end{document}